\documentclass{article}

\usepackage{microtype}
\usepackage{graphicx}
\usepackage{booktabs} %
\usepackage{hyperref}

\usepackage{caption}
\usepackage{natbib}

\usepackage{amsmath,amssymb,amsfonts,amsthm}

\usepackage{fullpage}

\usepackage{mathtools}
\usepackage{dsfont}
\usepackage{hyperref}
\usepackage{bm}
\usepackage{nicefrac}
\usepackage{wrapfig}
\usepackage{lipsum}

\usepackage{algorithm,algorithmic}
\usepackage[linesnumbered, ruled,vlined,algo2e]{algorithm2e}

\newcounter{assumption}%
\renewcommand{\theassumption}{\arabic{assumption}}

\newenvironment{assumption}[1][]{\begin{trivlist}\item[] \refstepcounter{assumption}%
 {\bf{Assumption\ \theassumption\ }}{(#1)}.  }{%
 \ifvmode\smallskip\fi\end{trivlist}}

\newcommand{\rd}{\mathrm{d}}

\newcommand{\lip}{\widehat{L}}
\newcommand{\conv}{\widehat{m}}

\newcommand{\pot}{U}

\newcommand{\hpot}{\widehat{U}}

\def\rd{\mathrm{d}}
\def\rT{{\rm T}}

\def\mI{\mathrm{I}}

\def\mI{\mathrm{I}}

\newcommand*\E[1]{\mathbb{E}\left[#1\right]}
\newcommand*\Ep[2]{\mathbb{E}_{#1}\left[#2\right]}

\newcommand*\lrbb[1]{\left\{#1\right\}}
\newcommand*\lrp[1]{\left(#1\right)}
\newcommand*\lrn[1]{\left\|#1\right\|}

\newcommand*\ind[1]{{\mb{I}\lrbb{#1}}}

\def\mI{\mathrm{I}}

\def\rd{\mathrm{d}}
\def\rT{{\rm T}}

\def\mI{\mathrm{I}}

\newcommand{\hmu}{\widehat{\mu}}

\newcommand{\real}{\ensuremath{\mathbb{R}}}

\newtheorem{lemma}{Lemma}
\newtheorem{theorem}{Theorem}

\newtheorem{proposition}{Proposition}
\newtheorem{remark}{Remark}

\newtheorem{example}{Example}

\providecommand{\customgenericname}{}
\newcommand{\newcustomtheorem}[2]{%
  \newenvironment{#1}[1]
  {%
   \renewcommand\customgenericname{#2}%
   \renewcommand\theinnercustomgeneric{##1}%
   \innercustomgeneric
  }
  {\endinnercustomgeneric}
}

\newcustomtheorem{assump}{Assumption}

\DeclareMathOperator*{\argmax}{argmax}
\newtheorem{theorem2}{Theorem}[section]
\usepackage{multirow}
\usepackage{caption}
\usepackage{subcaption} 

\newcommand{\mb}{\mathbb}
\newcommand{\mc}{\mathcal}
\newcommand{\rar}{\rightarrow}
   \usepackage{xcolor}

\title{On Thompson Sampling with Langevin Algorithms}
\author{
Eric Mazumdar$^{\dagger,}\thanks{Equal contribution.}$\\
\texttt{mazumdar@berkeley.edu}\\
\and
Aldo Pacchiano$^{\dagger,}\footnotemark[1]$\\
\texttt{pacchiano@berkeley.edu}\\
\and
Yi-An Ma$^{\diamond,}\footnotemark[1]$\\
\texttt{yianma@google.com}\\
\and
Peter L. Bartlett$^{\dagger, \ddagger}$\\
\texttt{peter@berkeley.edu}\\
\and
Michael I. Jordan$^{\dagger, \ddagger}$\\ \texttt{jordan@cs.berkeley.edu}\\
}
\date{%
    $\dagger$Department of Electrical Engineering and Computer Sciences\\%
    $\ddagger$Department of Statistics, University of California, Berkeley\\%
    $\diamond$Google Research
}

%
%

%
%

\begin{document}
\maketitle

%
%
%
%

%
%
%
%

%
%
%




%
%
%


%

%
%
%
%
%

%

\begin{abstract}
Thompson sampling for multi-armed bandit problems is known to enjoy favorable performance in both theory and practice. However, it suffers from a significant limitation computationally, arising from the need for samples from posterior distributions at every iteration.
We propose two Markov Chain Monte Carlo (MCMC) methods tailored to Thompson sampling to address this issue. We construct quickly converging Langevin algorithms to generate approximate samples that have accuracy guarantees, and we leverage novel posterior concentration rates to analyze the regret of the resulting approximate Thompson sampling algorithm. Further, we specify the necessary hyperparameters for the MCMC procedure to guarantee optimal instance-dependent frequentist regret while having low computational complexity. In particular, our algorithms take advantage of both posterior concentration and a sample reuse mechanism to ensure that only a constant number of iterations and a constant amount of data is needed in each round. The resulting approximate Thompson sampling algorithm has logarithmic regret and its computational complexity does not scale with the time horizon of the algorithm.
\end{abstract}

\section{Introduction}
Sequential decision making under uncertainty has become one of the fastest developing fields of machine learning.  A central theme in such problems is addressing exploration-exploitation tradeoffs \citep{AuerUCB,TorCsabaBandits}, wherein an algorithm must balance between exploiting its current knowledge and exploring previously unexplored options.

The classic stochastic multi-armed bandit problem has provided a theoretical laboratory for the study of exploration/exploitation tradeoffs~\citep{LaiRobbins}. A vast literature has emerged that provides algorithms, insights, and matching upper and lower bounds in many cases.  The dominant paradigm in this literature has been that of \emph{frequentist analysis}; cf.\ in particular the analyses devoted to the celebrated upper confidence bound (UCB) algorithm~\citep{AuerUCB}. Interestingly, however, Thompson sampling, a Bayesian approach first introduced almost a century ago~\citep{Thompson} has been shown to be competitive and sometimes outperform UCB algorithms in practice~\citep{BayesianBandit,Approx_Thompson1}. Further, the fact that Thompson sampling, being a Bayesian method, explicitly makes use of prior information, has made it particularly popular in industrial applications \citep[see, e.g.,][and the references therein]{intro_Thompson}. 

Although most theory in the bandit literature is focused on non-Bayesian methods, there is a smaller, but nontrivial, theory associated with Thompson sampling. In particular, Thompson sampling has been shown to achieve optimal risk bounds in multi-armed bandit settings with Bernoulli rewards and beta priors~\citep{Kauffman2012,AgrawlFurther}, Gaussian rewards with Gaussian priors~\citep{AgrawlFurther}, one-dimensional exponential family models with uninformative priors~\citep{KordaExpFamBandits}, and finitely-supported priors and observations~\citep{GopalanMannor}. Thompson sampling has further been shown to asymptotically achieve optimal instance-independent performance~\citep{RussoVanRoyInfoTheory}.

Despite these appealing foundational results, the deployment of Thompson sampling in complex problems is often constrained by its use of samples from posterior distributions, which are often difficult to generate in regimes where the posteriors do not have closed forms. A common solution to this has been to use \emph{approximate} sampling techniques to generate samples from \emph{approximations} of the posteriors~\citep{intro_Thompson,Approx_Thompson1,Approx_Thompson2,VanRoyEnsembleSampling}. Such approaches have been demonstrated  to work effectively in practice~\citep{EmpiricalApprox1,EmpiricalApprox2}, but it is unclear how to maintain performance over arbitrary time horizons while using approximate sampling. Indeed, to the best of our knowledge the strongest regret guarantees for Thompson sampling with approximate samples are given by \citet{VanRoyEnsembleSampling} who require a model whose complexity grows with the time horizon to guarantee optimal performance.  Further, it was recently shown theoretically by~\citet{phan2019thompson} that a na\"ive usage of approximate sampling algorithms with Thompson sampling can yield a drastic drop in performance.

\paragraph{Contributions}
In this work we analyze Thompson sampling with approximate sampling methods in a class of multi-armed bandit algorithms where the rewards are unbounded, but their distributions are log-concave. In Section~\ref{sec:exact} we derive posterior contraction rates for posteriors when the rewards are generated from such distributions and under general assumptions on the priors. Using these rates, we show that Thompson sampling with samples from the true posterior achieves finite-time optimal frequentist regret. Further, the regret guarantee we derive has explicit constants and explicit dependencies on the dimension of the parameter spaces, variance of the reward distributions, and the quality of the prior distributions. 

In Section~\ref{sec:approx} we present a simple counterexample demonstrating the relationship between the approximation error to the posterior and the resulting regret of the algorithm. Building on the insight provided by this example, we propose two approximate sampling schemes based on Langevin dynamics to generate samples from approximate posteriors and analyze their impact on the regret of Thompson sampling. 
We first analyze samples generated from the unadjusted Langevin algorithm (ULA) and specify the runtime, hyperparameters, and initialization required to achieve an approximation error which provably maintains the optimal regret guarantee of exact Thompson sampling over finite-time horizons. 
Crucially, we initialize the ULA algorithm from the approximate sample generated in the previous round to make use of the posterior concentration property and ensure that only a \emph{constant} number of iterations are required to achieve the optimal regret guarantee. 
Under slightly stronger assumptions, we then demonstrate that a stochastic gradient variant called \emph{stochastic gradient Langevin dynamics} (SGLD) 
requires only a \emph{constant} batch size in addition to the constant number of iterations to achieve logarithmic regret.
Since the computational complexity of this sampling algorithm does not scale with the time horizon, the proposed method is a true ``anytime'' algorithm. Finally, we conclude in Section~\ref{sec:numex_main} by validating these theoretical results in numerical simulations where we find that Thompson sampling with our approximate sampling schemes maintain the desirable performance of exact Thompson sampling.

Our results suggest that the tailoring of approximate sampling algorithms to work with Thompson sampling can overcome the phenomenon studied in \citet{phan2019thompson}, where approximation error in the samples can yield linear regret. Indeed, our results suggest that it is possible for Thompson sampling to achieve order-optimal regret guarantees with an efficiently implementable approximate sampling algorithm.

\section{Preliminaries}
\label{sec:prelims}
In this work we analyze Thompson sampling strategies for the $K$-armed stochastic multi-armed bandit (MAB) problem. In such problems, there is a set of  $K$  options or ``arms'', $\mathcal{A}=\{1,...,K\}$, from which a player must choose at each round $t = 1,2,...$. After choosing an arm $A_t\in \mc{A}$ in round $t$, the player receives a real-valued reward $X_{A_t}$ drawn from a fixed yet unknown distribution associated with the arm, $p_{A_t}$. The random rewards obtained from playing an arm repeatedly are i.i.d.\ and independent of the rewards obtained from choosing other arms. 

Throughout this paper, we assume that the reward distribution for each arm is a member of a parametric family parametrized by $\theta_a \in \mb{R}^{d_a}$ such that the true reward distribution is $p_a(X)=p_a(X ;\theta_a^*)$, where $\theta_a^*$ is unknown. Moreover, we assume throughout this paper that the parametric families are log-concave and Lipschitz smooth in $\theta_a$:
\begin{assump}{1-Local}[Assumption on the family $p_a(X|\theta_a)$ around $\theta_a^*$]
\label{assumption:family_assumption}
Assume that $\log p_a(x|\theta_a)$ is $L_a$-smooth and $m_a$-strongly concave around $\theta_a^*$ for all $X \in \mathbb{R}$:
%
\begin{multline*}
    -\log p_a(x| \theta_a^*) -\nabla_{\theta} \log p_a(x| \theta_a^*)^\top \left( \theta_a - \theta_a^*    \right) + \frac{m_a}{2}\| \theta_a - \theta_a^*\|^2
    \leq -\log p_a(x|\theta_a) \\ 
    \leq -\log p_a(x| \theta_a^*) -\nabla_{\theta} \log p_a(x| \theta_a^*)^\top \left( \theta_a - \theta_a^*    \right) + \frac{L_a}{2}\| \theta_a - \theta_a^*\|^2, 
    \quad \forall \theta_a \in \mathbb{R}^{d_a}, x \in \mb{R}.
\end{multline*}
\end{assump}
Additionally we make assumptions on the true distribution of the rewards:
\setcounter{assumption}{1}
\begin{assumption}[Assumption on true reward distribution $p_a(X|\theta^\ast_a)$]\label{assumption:true_reward_assumption}
For every $a \in \mathcal{A}$ assume that $p_a(X;\theta_a^\ast)$ is strongly log-concave in $X$  with some parameter $\nu_a$, and that $\nabla_\theta \log p_a(x|\theta_a^*)$ is $L_a$-Lipschitz in $X$:
\begin{align*}
    - \left(\nabla_x \log p_a(x|\theta_a^*)-\nabla_x \log p_a(x'|\theta_a^*)\right)^\rT (x-x')
    \ge \nu_a \|x-x'\|_2^2, \quad \forall x,x' \in \mathbb{R}.
\end{align*} 
\begin{align*}
    \|\nabla_\theta \log p_a(x|\theta_a^*)-\nabla_\theta \log p_a(x'|\theta_a^*)\|
    \le L_a\|x-x'\|_2, \quad \forall x,x' \in \mathbb{R}.
\end{align*}
\end{assumption}
Parameters $\nu_a$ and $L_a$ provide lower and upper bounds to the sub- and super-Gaussianity of the true reward distributions.
We further define $\kappa_a = \max\lrbb{L_a/m_a, L_a/\nu_a}$ to be the condition number of the model class.
Finally, we assume that for each arm $a \in \mathcal{A}$ there is a linear map such that for all $\theta_a \in \mb{R}^{d_a}$, $ \mathbb{E}_{x \sim p_a(x | \theta_a)}\left[    X  \right]=\alpha_a^T\theta_a$, with $\|\alpha_a\|=A_a$. 

We now review Thompson sampling, the pseudo-code for which is presented in Algorithm~\ref{alg:Thomspson}. A key advantage of Thompson sampling over frequentist algorithms for multi-armed bandit problems is its flexibility of incorporating prior information.  
In this paper, we assume that the prior distributions $\pi_a(\theta_a)$ over the parameters of the arms have smooth log-concave densities:
\begin{assumption}[Assumptions on the prior distribution] 
\label{assumption:prior_assumption} 
For every $a \in \mathcal{A}$ assume that $\log\pi_a(\theta_a)$ is concave with $L_a$-Lipschitz gradients for all $\theta_a \in \mb{R}^{d_a}$:\footnote{We remark that the Lipschitz constants are all assumed to be the same to simplify notation.}
\[ 
\|\nabla_\theta \pi_a(\theta_a)-\nabla_\theta\pi_a(\theta_a')\|\le L_a\|\theta_a-\theta_a' \|, \quad \forall \theta_a, \theta_a' \in \mb{R}^{d_a}.
\]
\end{assumption}
%
%
{
Thompson sampling proceeds by maintaining a posterior distribution over the parameters of each arm $a$ at each round $t$. Given the likelihood family, $p(X|\theta_a)$, the prior, $\pi(\theta_a)$, and the $n$ data samples from an arm $a$, $X_{a,1},\cdots,X_{a,n}$, let  $F_{n,a} : \mathbb{R}^{d_a} \rightarrow \mathbb{R}$ be $F_{n,a}(\theta_a) = \frac{1}{n} \sum_{i=1}^{n} \log p_a(X_{a,i} | \theta_a)$, be the average log-likelihood of the data. Then the posterior distribution over the parameter $\theta_a$ at round $t$, denoted $\mu_a^{(n)}$, satisfies:
\[ p_a(\theta_a|X_{a,1},\cdots,X_{a,n}) \propto \pi_a(\theta_a) \prod_{i=1}^{t} \lrp{ p_a(X_{t}|\theta_a) }^{\ind{A_t=a}} =\exp\lrp{ n F_{n,a}(\theta_a) + \log \pi(\theta_a) } , \]
For any $\gamma_a > 0$ we denote the scaled posterior\footnote{In Section~\ref{sec:exact} we explain the use of scaled posteriors is required to obtain optimal regret guarantees for our bandit algorithms. } as $\mu_a^{(n)}[\gamma_a]$, whose density is proportional to: 
\begin{equation}
\exp\lrp{ \gamma_a  (n F_{n,a}(\theta_a) + \log \pi(\theta_a)) }.
\label{eq:scaled_posterior}
\end{equation}
Letting $T_a(t)$ be the number of samples received from arm $a$ after $t$ rounds, a Thompson sampling algorithm, at each round $t$, first samples the parameters of each arm $a$ from their (scaled) posterior distributions: $\theta_{a,t} \sim \mu_a^{(T_a(t))}[\gamma_a]$ and then chooses the arm for which the sample has the highest value:}
\[ A_t=\argmax_{a \in \mc{A}} \alpha_a^T\theta_{a,t}. \]
A player's objective in MAB problems is to maximize her cumulative reward over any fixed time horizon $T$. The measure of performance most commonly used in the MAB literature is known as the \emph{expected regret} $R(T)$, which corresponds to the expected difference between the accrued reward and the reward that would have been accrued had the learner selected the action with the highest mean reward during all steps $t=1, \cdots, T$.\footnote{We remark that the analysis of Thompson sampling has often been focused on a different quantity known as the Bayes regret, which is simply the expectation of $R(T)$ over the priors: $\mb{E}_\pi[R(T)]$. However, in an effort to demonstrate that Thompson sampling is an effective alternative to frequentist methods like UCB, we analyze the frequentist regret $R(T)$.} Recalling that $\bar{r}_a$ is the  mean reward for arm $a \in \mathcal{A}$, the regret is given by:
\begin{equation*}
    R(T) := \mathbb{E}\left[    \sum_{t=1}^T \bar{r}_{a^*} - \bar{r}_{A_t}   \right],
\end{equation*}
where $\bar{r}_{a^*} = \max_{a \in \mathcal{A}} \bar{r}_a $. Without loss of generality, we assume throughout this paper that the optimal arm, $a^*=\argmax_{a \in \mc{A}}\bar r_a$, is arm $1$. Further, we assume that the optimal arm is unique\footnote{We introduce this assumption merely for the purpose of simplifying our analysis.}: $\bar r_1>\bar r_a$ for $a>1$.  

\begin{algorithm}[t!]
\caption{Thompson sampling}
\label{alg:Thomspson}
\SetKwInOut{Input}{Input}
\SetKwInOut{Output}{Output}
\Input{Priors $\pi_a$ for $a \in \mathcal{A}$, posterior scaling parameter $\gamma_a$}
 Set $\mu_{a,t}=\pi_a$ for $a \in \mathcal{A}$ 
\For{$t=0,1,\cdots$}
{$\,$ Sample $\theta_{a,t} \sim \mu_a^{(T_a(t))}[\gamma_a]$\\
Choose action $A_t=\argmax_{a \in \mathcal{A}} \alpha_a^T\theta_{a,t}$.\\
Receive reward $X_{A_t}$.\\
Update (approximate) posterior distribution for arm $A_t$: $\mu_a^{(T_a(t+1))}$.
}
\end{algorithm}

 Traditional treatment of Thompson sampling algorithms often overlooks one of its most critical aspects: ensuring compatibility between the mechanism that produces samples from the posterior distributions and the algorithm's regret guarantees. This issue is usually addressed by assuming that the prior distributions and the reward distributions are conjugate pairs. Although this approach is simple and prevalent in the literature~\citep[see, e.g.,][]{intro_Thompson}, it fails to capture more complex distributional families for which this assumption may not hold. Indeed, it was recently shown in \citet{phan2019thompson} that if the samples come from distributions that approximate the posteriors with a constant error, the regret may grow at a linear rate. A more nuanced understanding of the relationship between the quality of the samples and the regret of the algorithms is, however, still lacking.
 
 In the following sections we analyze Thompson sampling in two settings. In the first, the algorithm uses samples corresponding to the \emph{true} scaled posterior distributions, $\{\mu_a^{(T_a(t))}[\gamma_a]\}_{a \in \mathcal{A}}$, at each round. In the second, Thompson sampling makes use of samples coming from two approximate sampling schemes that we propose, such that the samples can be seen as corresponding to \emph{approximations} of the scaled posteriors,  $\{\bar \mu_a^{(T_a(t))}[\gamma_a]\}_{a\in\mathcal{A}}$. We refer to the former as \emph{exact} Thompson sampling, and the latter as \emph{approximate} Thompson sampling.

  For the analysis of \emph{exact} Thompson sampling in Section~\ref{sec:exact} we derive posterior concentration theorems which characterize the rate at which the posterior distributions for the arms $\mu_a^{(n)}$ converge to delta functions centered at $\theta_a^*$ as a function of the number of $n$, the number of samples received from the arm. We then use these rates to show that Thompson sampling in this family of multi-armed bandit problems achieves the optimal finite-time regret. Further, our results demonstrate an explicit dependence on the quality of the priors and other problem-dependent constants, which improve upon prior works.
  
  In Section~\ref{sec:approx}, we propose two efficiently implementable Langevin-MCMC-based sampling schemes for which the regret of approximate Thompson sampling still achieves the optimal logarithmic regret. To do so, we derive new results for the convergence of Langevin-MCMC-based sampling schemes in the Wasserstein-$p$ distance which we then use to prove optimal regret bounds.


%
%
%
%
%
%
%
%
%
%
%
%
%
%
%
%
%
%
%
%

%

%
%
%
%
%
        
%
%
%
%

%
%
%
%
%
%
%
%
%
%
%
%
%
%
%
%
%
%
%
%
%
%

%

%
%
%
%
%
%

%

%

%
%
%
%
%
%
%
%
%
%

%

%

%

%

\section{Exact Thompson Sampling}
\label{sec:exact}

In this section we first derive posterior concentration rates on the parameters of the reward distributions when the data, the priors, and the likelihoods satisfy our assumptions. We then make use of these concentration results to give finite-time regret guarantees for exact Thompson sampling in log-concave bandits.

\subsection{Posterior Concentration Results}

Core to the analysis of Thompson sampling is understanding the behavior of the posterior distributions over the parameters of the arms' distributions as the algorithm progresses and samples from the arms are collected.  

The literature on understanding how posteriors evolve as data is collected goes back to \citet{Doob} and his proof of the asymptotic normality of posteriors. More recently, there has been a line of work \citep[see, e.g.,][]{van2008,ghosal2} that derives  rates  of convergence of posteriors in various regimes, mostly following the framework first developed in  \citet{ghosal1} for finite- and infinite-dimensional models. Such results  though quite general, do not have explicit constants or forms which make them amenable for use in analyzing bandit algorithms. Indeed, finite-time rates remain an active area of research but have been developed using information theoretic arguments \citep{shenPC}, and more recently through the analysis of stochastic differential equations \citep{mou2019diffusion}, though in both cases the assumptions, burn-in times, and lack of precise constants make them difficult to integrate with the analysis of Thompson sampling. Due to this, Thompson sampling has, for the most part, been only well understood for conjugate prior/likelihood families like beta/Bernoulli and  Gaussian/Gaussian \citep{AgrawlFurther}, or in more generality in well-behaved families such as one-dimensional exponential families with uninformative priors \citep{KordaExpFamBandits} or finitely supported prior/likelihood pairs \citep{GopalanMannor}.

To derive posterior concentration rates for parameters in $d$-dimensions and for a large class of priors and likelihoods we analyze the moments of a potential function along trajectories of a stochastic differential equation for which the posterior is the limiting distribution. Our results expand upon the recent derivation of novel contraction rates for posterior distributions presented in \citet{mou2019diffusion} to hold for a finite number of samples and may be of independent interest. We make use of these concentration results to show that Thompson sampling with such priors and likelihoods results in order-optimal regret guarantees.

To begin, we note that classic results \citep{SDE_book} guarantee that, as $t \rightarrow \infty$  the  distribution $P_t$ of $\theta_t$ which evolves according to:
\begin{align}
    d\theta_t= \frac{1}{2}\nabla_\theta F_{n, a}(\theta_t)dt+\frac{1}{2n}\nabla_\theta \log\pi_a(\theta_t))dt +\frac{1}{\sqrt{n \gamma_a}}dB_t,
    \label{eq:sde}
\end{align}
is given by:
\[ \lim_{t \rar \infty} P_t(\theta | X_1,...,X_n) \propto \exp(-\gamma_a\left(n F_{n ,a}(\theta)+\log\ \pi_a(\theta)\right)), \]
almost surely. Comparing with Eq.~\eqref{eq:scaled_posterior}, this limiting distribution is the scaled posterior distribution $\mu_a^{(n)}[\gamma_a]$. Thus, by analyzing the limiting properties of $\theta_t$ as it evolves according to the stochastic differential equation, we can derive properties of the scaled posterior distribution. 

To do so, we first show that with high probability the gradient of $F_{n,a}(\theta^*)$ concentrates around zero (given the data $X_1,...,X_n$). More precisely we show in Appendix \ref{section::posterior_concentration_appendix} using well known results on the concentration of Lipschitz functions of strongly log-concave random variables that $\nabla_\theta F_{a,n}(\theta_a^*)$ has sub-Gaussian tails:
\begin{proposition}
\label{prop:grad_conc_main}
    The random variable $\|\nabla_\theta F_{a,n}(\theta_a^*) \|$ is $L_a \sqrt{\frac{d_a}{n\nu_a}}$-sub-Gaussian: 
\end{proposition}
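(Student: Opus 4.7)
The plan is to express $\nabla_\theta F_{n,a}(\theta_a^*) = \frac{1}{n}\sum_{i=1}^n \nabla_\theta \log p_a(X_{a,i}|\theta_a^*)$ as an average of i.i.d. score vectors, and then bound $\|\nabla_\theta F_{n,a}(\theta_a^*)\|$ using two ingredients: (a) Lipschitz concentration of the map $g(x_1,\ldots,x_n)\mapsto\|\frac{1}{n}\sum_i \nabla_\theta \log p_a(x_i|\theta_a^*)\|$ under the joint product law, and (b) a bound on the mean $\mathbb{E}[g]$ from the zero-mean score identity together with a coordinate-wise variance bound.

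First I would compute the Lipschitz constant of $g$ with respect to the Euclidean norm on $\mathbb{R}^n$. Using the triangle inequality for the outer norm, the $L_a$-Lipschitz assumption on $\nabla_\theta \log p_a(\cdot|\theta_a^*)$ in $x$ (Assumption 2), and Cauchy--Schwarz to pass from $\ell_1$ to $\ell_2$, one obtains $|g(x)-g(y)|\le (L_a/\sqrt{n})\|x-y\|$. Since each $X_{a,i}\sim p_a(\cdot|\theta_a^*)$ is $\nu_a$-strongly log-concave by Assumption 2, the product law on $(X_{a,1},\ldots,X_{a,n})$ is $\nu_a$-strongly log-concave on $\mathbb{R}^n$. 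The standard Bakry--Émery log-Sobolev inequality combined with the Herbst argument then yields that any $L$-Lipschitz function of this joint variable is $L/\sqrt{\nu_a}$-sub-Gaussian around its mean; applied to $g$, this shows that $\|\nabla_\theta F_{n,a}(\theta_a^*)\|-\mathbb{E}\|\nabla_\theta F_{n,a}(\theta_a^*)\|$ is $L_a/\sqrt{n\nu_a}$-sub-Gaussian.

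Next I would control the mean. Since $\theta_a^*$ is the true parameter, the score identity $\mathbb{E}_{X\sim p_a(\cdot|\theta_a^*)}[\nabla_\theta \log p_a(X|\theta_a^*)]=0$ holds, so the random vector $\nabla_\theta F_{n,a}(\theta_a^*)$ has zero mean and its squared expectation equals its total variance. For each coordinate $j\in\{1,\ldots,d_a\}$, the map $x\mapsto e_j^\top \nabla_\theta \log p_a(x|\theta_a^*)$ is also $L_a$-Lipschitz (its modulus is bounded by the vector Lipschitz constant), so by the same log-Sobolev inequality for the one-dimensional $\nu_a$-strongly log-concave law of a single $X_{a,i}$, its variance is at most $L_a^2/\nu_a$. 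Averaging over $n$ i.i.d.\ copies divides this by $n$, and summing the $d_a$ coordinate variances gives $\mathbb{E}\|\nabla_\theta F_{n,a}(\theta_a^*)\|^2 \le d_a L_a^2/(n\nu_a)$, whence Jensen yields $\mathbb{E}\|\nabla_\theta F_{n,a}(\theta_a^*)\|\le L_a\sqrt{d_a/(n\nu_a)}$.

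Combining the two ingredients, the tail of $\|\nabla_\theta F_{n,a}(\theta_a^*)\|$ is dominated by its mean $L_a\sqrt{d_a/(n\nu_a)}$ plus a sub-Gaussian fluctuation of the same scale, which can be absorbed into a single sub-Gaussian bound with parameter $L_a\sqrt{d_a/(n\nu_a)}$ as stated. The only subtlety I anticipate is bookkeeping between ``sub-Gaussianity around the mean'' (from the Lipschitz concentration) and the plain sub-Gaussian norm-bound claimed in the proposition; the $\sqrt{d_a}$ factor necessarily enters through the mean bound rather than through concentration, since the Lipschitz constant of $g$ carries no dimension dependence, and this is the only conceptually non-routine point of the argument.
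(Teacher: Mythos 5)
Your proof is correct, but it takes a different route from the paper's. The paper works direction-by-direction: it fixes a unit vector $v$, shows $x \mapsto \langle \nabla_\theta \log p_a(x|\theta_a^*), v\rangle$ is $L_a$-Lipschitz, invokes Lipschitz concentration for the $\nu_a$-strongly log-concave reward law (Proposition 2.18 of Ledoux) to get that each projection is $L_a/\sqrt{\nu_a}$-sub-Gaussian, concludes the score vector is sub-Gaussian with that parameter, averages over $n$ i.i.d.\ copies to divide by $\sqrt{n}$, and finally cites a vector-to-norm lemma (Lemma 1 of Jin et al.) that converts a $\sigma$-sub-Gaussian vector into a $\sigma\sqrt{d_a}$-norm-sub-Gaussian one — so the $\sqrt{d_a}$ enters through that black-box conversion. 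You instead treat the norm $g(x_1,\dots,x_n)=\lVert \tfrac{1}{n}\sum_i \nabla_\theta \log p_a(x_i|\theta_a^*)\rVert$ directly as an $(L_a/\sqrt{n})$-Lipschitz function of the full product sample, apply Bakry--\'Emery plus Herbst to the $\nu_a$-strongly log-concave product measure to get dimension-free concentration of $g$ around its mean, and then bound the mean separately via the score identity, coordinate-wise Poincar\'e variance bounds, and Jensen, which is exactly where the $\sqrt{d_a}$ appears. Your decomposition is self-contained (no appeal to the norm-sub-Gaussian lemma) and makes the source of the dimension dependence explicit; the paper's route additionally yields sub-Gaussianity of the score vector itself, not just its norm, though only the norm is used downstream. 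The bookkeeping you flag at the end — absorbing a mean of size $L_a\sqrt{d_a/(n\nu_a)}$ plus a fluctuation of size $L_a/\sqrt{n\nu_a}$ into a single norm-sub-Gaussian parameter $L_a\sqrt{d_a/(n\nu_a)}$ — is the same (constant-level) looseness the paper itself tolerates in its statement, so it is not a gap.
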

 Given this proposition, we then analyze how the potential function,
\[ V(\theta_t)=\frac{1}{2}e^{c t}\|\theta_t-\theta^\ast\|^2_2, \]
 evolves along trajectories of the stochastic differential equation \eqref{eq:sde}, where $c>0$. By bounding the supremum of $V(\theta_t)$, we construct bounds on the higher moments of the random variable $\|\theta_a-\theta_a^*\|$ where $\theta_a \sim \mu_a^{(n)}[\gamma_a]$. These moment bounds translate directly into the posterior concentration bound of $\theta_a \sim \mu_a^{(n)}[\gamma_a]$ around $\theta^*$ presented in the following theorem (the proof of which is deferred to Appendix~\ref{section::posterior_concentration_appendix}).
\begin{theorem}
\label{thm:theorem1}
Suppose that Assumptions 1-3 hold, then for $\delta \in (0,1)$:
\[ \mathbb{P}_{\theta \sim \mu^{(n)}_{a}[\gamma_a]} \left(\|\theta_a-\theta_a^\ast\|_2 > \sqrt{\frac{2e}{m_an}\left(\frac{d_a}{\gamma_a} +\log B_a + \left(\frac{32}{\gamma_a} + 8 d_a \kappa_a^2 \right) \log\left(1/\delta\right)\right)}\right)<\delta. \]
where $B_a=\max_{\theta \in \mathbb{R}^d} \frac{\pi_a(\theta)}{ \pi_a(\theta_a^*)}. $
\end{theorem}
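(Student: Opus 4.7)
The plan is to exploit the Langevin SDE \eqref{eq:sde}, whose stationary law is $\mu_a^{(n)}[\gamma_a]$, by tracking an $\|\theta_t-\theta_a^*\|_2^2$-type potential along its trajectories and then passing to $t\to\infty$. Concretely, I would apply It\^{o}'s formula to the exponential potential $V(\theta_t) = \tfrac{1}{2}e^{ct}\|\theta_t-\theta_a^*\|_2^2$ with the rate $c$ chosen proportional to $m_a$ (concretely $c = m_a/2$), so that the strongly concave drift induced by Assumption~\ref{assumption:family_assumption} exactly cancels the exponential growth that the $e^{ct}$ factor would otherwise produce. The resulting upper bound on $dV_t$ is then deterministic in $\theta_t$ apart from a martingale term, which is what the excerpt means by ``bounding the supremum of $V(\theta_t)$.''

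The It\^{o} expansion produces three drift contributions that I would treat separately. For the score piece $(\theta_t-\theta_a^*)^\top\nabla F_{n,a}(\theta_t)$, I would split $\nabla F_{n,a}(\theta_t) = \nabla F_{n,a}(\theta_a^*) + [\nabla F_{n,a}(\theta_t) - \nabla F_{n,a}(\theta_a^*)]$: the difference piece is bounded by $-m_a\|\theta_t-\theta_a^*\|^2$ via the $m_a$-strong concavity in Assumption~\ref{assumption:family_assumption}, and the remaining piece is controlled by Cauchy--Schwarz together with weighted AM--GM, leaving a term proportional to $\|\nabla F_{n,a}(\theta_a^*)\|^2/m_a$. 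For the prior term I would \emph{not} split: concavity of $\log\pi_a$ under Assumption~\ref{assumption:prior_assumption} directly gives $(\theta_t-\theta_a^*)^\top\nabla\log\pi_a(\theta_t) \le \log\pi_a(\theta_t)-\log\pi_a(\theta_a^*) \le \log B_a$, which is exactly where the $\log B_a$ term of the theorem enters and why the bound has a $1/n$ (rather than $1/n^2$) prior contribution. The Brownian trace contributes $d_a/(2n\gamma_a)$. With $c=m_a/2$ the strongly concave piece and the $e^{ct}$-growth cancel, producing $dV_t \le e^{ct}\bigl[\tfrac{\|\nabla F_{n,a}(\theta_a^*)\|^2}{4m_a} + \tfrac{\log B_a}{2n} + \tfrac{d_a}{2n\gamma_a}\bigr]dt + e^{ct}\sigma(\theta_t-\theta_a^*)^\top dB_t$ with $\sigma = 1/\sqrt{n\gamma_a}$.

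Integrating from $0$ to $t$, multiplying by $e^{-ct}$ to recover $\tfrac{1}{2}\|\theta_t-\theta_a^*\|^2$, and sending $t\to\infty$ transfers the pathwise bound to one under $\mu_a^{(n)}[\gamma_a]$. Two sources of randomness then remain to be reconciled by a union bound. First, I invoke Proposition~\ref{prop:grad_conc_main} to replace $\|\nabla F_{n,a}(\theta_a^*)\|^2$ by $O(L_a^2 d_a \log(1/\delta)/(n\nu_a))$ with probability at least $1-\delta/2$ over the data, and then use the elementary inequality $L_a^2/(m_a\nu_a) \le \kappa_a^2$ to convert this into the $d_a\kappa_a^2\log(1/\delta)$ factor appearing in the statement. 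Second, I must bound the martingale tail, whose sub-Gaussian/sub-exponential deviation (with parameter controlled by $\sigma$) contributes the remaining $\log(1/\delta)/(m_a n\gamma_a)$ piece of the theorem, producing the numerical coefficient that eventually becomes the $32$.

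The principal obstacle is that the quadratic variation of the martingale is $\int_0^t e^{2cs}\sigma^2\|\theta_s-\theta_a^*\|^2\,ds$, which depends on the very quantity we are trying to control, so a single exponential-martingale inequality is self-referential. I would resolve this in the spirit of \citet{mou2019diffusion} by iteratively applying It\^{o} to $V_t^p$ for integer $p\ge 1$, bounding $\mathbb{E}[V_t^p]$ by Gr\"onwall's inequality, and then converting moment bounds into a tail bound through Markov's inequality optimized over $p$; that $p$-optimization is also what produces the explicit factor of $e$ in front of the theorem's bound. Pinning down the precise constants ($2e$ in the leading factor, and $32/\gamma_a$ versus $8 d_a\kappa_a^2$ as the coefficients of the two $\log(1/\delta)$ contributions) will then require careful bookkeeping in the AM--GM weights, the choice of $c$, and the union-bound split between the gradient-concentration event and the posterior-side martingale tail.
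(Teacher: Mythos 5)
Your proposal follows essentially the same route as the paper's proof: the same exponential potential $V(\theta_t)=\tfrac{1}{2}e^{ct}\|\theta_t-\theta_a^*\|^2$ with $c=m_a/2$, the same term-by-term treatment of the score, prior, and Brownian-trace contributions (with $\log B_a$ entering through concavity of $\log\pi_a$ exactly as you describe), and the same moment-to-tail conversion via Markov's inequality optimized at $p\asymp\log(1/\delta)$, which is indeed the source of the factor $e$. The only cosmetic differences are that the paper controls $\|\nabla F_{n,a}(\theta_a^*)\|$ through its sub-Gaussian moments inside the $p$-th moment computation rather than via a union bound over a data event, and it closes the self-referential martingale bound with the Burkholder--Davis--Gundy inequality followed by a Young-inequality absorption of $\tfrac{1}{2}\mathbb{E}[(\sup_t V)^p]^{1/p}$ into the left-hand side, rather than a Gr\"onwall iteration on $\mathbb{E}[V_t^p]$; both variants are workable and lead to the same constants.
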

Theorem~\ref{thm:theorem1} guarantees that the scaled posterior distribution over the parameters of the arms concentrate at rate $\frac{1}{\sqrt{n}}$, where $n$ is the number of times the arm has been pulled. 

We remark that this posterior concentration result has a number of desirable properties. Through the presence of $B_a$, it reflects an explicit dependence on the quality of the prior. In particular, $\log B_a=0$ if the prior is properly centered such that its mode is at $\theta^*$ or if the prior is uninformative or nearly flat everywhere. We further remark that the concentration result also scales with the variance of $\theta_a$ which is on the order of $d_a/(\gamma_a m_an)$. Lastly, we remark that this concentration result holds for any $n>0$ and the constants are explicitly defined in terms of the smoothness and structural assumptions on the priors, likelihoods, and reward distributions. This makes it more amenable for use in constructing regret guarantees, since we do not have to wait for a burn-in period for the result to hold, as in \citet{shenPC} and \citet{mou2019diffusion}. Moreover, the dependence on the dimension  of the parameter space and constants are explicit.

\subsection{Exact Regret for Thompson Sampling}

We now show that, under our assumptions, Thompson sampling with samples from the scaled posterior enjoys optimal finite-time regret guarantees. To provide these results we proceed as is common in regret proofs for multi-armed bandits by upper bounding $T_a(T)$, the number of times a sub-optimal arm $a\in \mathcal{A}$ is pulled up to time $T$. Without loss of generality we assume throughout this section that arm $1$ is the optimal arm, and define the filtration associated with a run of the algorithm as $\mc{F}_{t}=\{A_1,X_1,A_2,X_2,...,A_t,X_t\}$. 

To upper bound the expected number of times a sub-optimal arm is pulled up to time $T$, we first define the low-probability event that the mean calculated from the value of $\theta_{a,t}$ sampled from the posterior at time $t\le T$, $r_{a,t}(T_a(t))$, is greater than $\bar r_1-\epsilon$ (recall that $\bar r_1$ is the optimal arm's mean): $E_a(t)=\{ r_{a,t}(T_a(t))\ge \bar r_1-\epsilon \}$ for some $\epsilon>0$. Given these events, we proceed to decompose the expected number of pulls of a sub-optimal arm $a \in \mathcal{A}$ as:
\begin{align}
    \mb{E}[T_a(T)]&=\mb{E}\left[\sum_{t=1}^T\mb{I}(A_t=a)\right] =\underbrace{\mb{E}\left [\sum_{t=1}^T\mb{I}(A_t=a, E^c_a(t))\right]}_{\mathrm{I}}  +\underbrace{\mb{E}\left [\sum_{t=1}^T\mb{I}(A_t=a, E_a(t))\right]}_{\mathrm{II}}.
    \label{eq:regret_decomp1}
\end{align}
These two terms satisfy the following standard bounds (see e.g. \cite{TorCsabaBandits}):
\begin{lemma}[Bounding I and II]\label{lemma:termI_andII_main}
    For a sub-optimal arm $a\in \mathcal{A}$, we have that:
\begin{align}
    \mathrm{I} & \le \mb{E}\left [\sum_{s=\ell}^{T-1} \frac{1}{p_{1,s}}-1\right];  \label{equation::term_I_explicit}\\
    \mathrm{II} &\le  1+\mb{E}\left [\sum_{s=1}^T\mb{I}\left (p_{a,s}>\frac{1}{T}\right )\right] \label{equation::term_II_explicit}, 
\end{align}
    where $p_{a,s}=\mb{P}(r_{a,t}(s)>\bar r_1-\epsilon |\mc{F}_{t-1})$, for some $\epsilon>0$. 
\end{lemma}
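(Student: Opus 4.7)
The plan is to follow the standard Agrawal--Goyal decomposition that underlies most frequentist Thompson-sampling regret proofs, treating the two terms via complementary arguments.

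For Term I, the key observation is that $\{A_t = a\}\cap E_a^c(t)$ implies $\{r_{1,t}\le \bar r_1 - \epsilon\}$: if the sampled reward for arm $a$ lies below $\bar r_1 - \epsilon$ yet $a$ is chosen as the argmax, then arm $1$'s sampled value is no larger, hence also below $\bar r_1 - \epsilon$. I would partition $[1,T]$ by the value of $T_1(t)$. Conditioning on arm $1$'s history up through its $s$-th pull fixes the posterior $\mu_1^{(s)}[\gamma_1]$, so across the rounds with $T_1(t)=s$ the samples $\theta_{1,t}$ are i.i.d.\ draws from this posterior and the indicators $\mb{I}(r_{1,t}>\bar r_1-\epsilon)$ are i.i.d.\ Bernoulli$(p_{1,s})$, independent of whatever the other arms draw at those rounds. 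A coupling/stopping-time argument then shows that the number of rounds in this stretch with $A_t=a$ and $E_a^c(t)$ is stochastically dominated by $\mathrm{Geom}(p_{1,s})-1$: by the implication above, every such round must precede the first ``success'' $r_{1,t}>\bar r_1-\epsilon$, since at that success round $A_t=a$ is incompatible with $E_a^c(t)$, and once that success occurs the subsequent evolution can be analyzed by restarting the argument. Taking expectations gives $1/p_{1,s}-1$ per block, and summing over $s$ produces \eqref{equation::term_I_explicit}.

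For Term II, I exploit that $E_a(t)$ is measurable with respect to $\theta_{a,t}$ and that $\mb{P}(E_a(t)\mid\mc{F}_{t-1})=p_{a,T_a(t)}$. Each summand $\mb{I}(A_t=a,E_a(t))$ is split according to whether $p_{a,T_a(t)}\le 1/T$ or $p_{a,T_a(t)}>1/T$. In the first case, dropping the $\mb{I}(A_t=a)$ factor and taking conditional expectations yields a total contribution of at most $\sum_{t=1}^T (1/T)=1$. In the second case, dropping the $\mb{I}(E_a(t))$ factor and reindexing by the pull count---since $T_a(\cdot)$ increments by exactly $1$ at rounds with $A_t=a$---collapses $\sum_t \mb{I}(A_t=a,\,p_{a,T_a(t)}>1/T)$ into $\sum_{s=1}^{T_a(T)} \mb{I}(p_{a,s}>1/T)$, which is bounded by $\sum_{s=1}^T \mb{I}(p_{a,s}>1/T)$. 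Adding the two contributions yields \eqref{equation::term_II_explicit}.

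The principal technical subtlety lies in Term I: the inter-arrival interval between consecutive pulls of arm $1$ has a random length governed by \emph{all} arms' posteriors, not just arm $1$'s, so one must be careful to expose only the Bernoulli$(p_{1,s})$ randomness of arm $1$'s samples---conditioning on arm $1$'s data up to its $s$-th pull together with the samples drawn for all other arms at each round---and then appeal to an optional-stopping argument to formalize the geometric domination. Term II, by contrast, is essentially a Markov-style truncation and is largely bookkeeping.
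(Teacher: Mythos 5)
Your treatment of Term II is correct and is essentially the paper's argument: partition the rounds according to whether $p_{a,T_a(t)}$ exceeds $1/T$, bound the low-probability rounds via the tower property (using $\mb{E}[\mb{I}(E_a(t))\mid\mc{F}_{t-1}]=p_{a,T_a(t)}\le 1/T$, which sums to at most $1$), and reindex the remaining rounds by the pull count.

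Your Term I argument, however, has a genuine gap. The claim that every round with $A_t=a$ and $E_a^c(t)$ must precede the first ``success'' $r_{1,t}>\bar r_1-\epsilon$ within the block $\{t:T_1(t)=s\}$ is false. A success of arm $1$ does not end the block: arm $1$ is pulled only if its sample beats \emph{every} other arm's sample, and $r_{1,t}>\bar r_1-\epsilon$ does not guarantee this. Concretely, with three arms $1,a,b$ one can have a round where $r_{1,t}>\bar r_1-\epsilon$ but arm $b$'s sample is larger still, so $A_t=b$ and the block continues; at a later round arm $1$ may again draw a failure while arm $a$ is the overall argmax with $r_{a,t}<\bar r_1-\epsilon$, producing a contributing round \emph{after} the first success. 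The justification you give (``at that success round $A_t=a$ is incompatible with $E_a^c(t)$'') only rules out the success round itself as a contributor, not subsequent rounds. Nor does the ``restart'' repair this: restarting after each success yields an expected count of $1/p_{1,s}-1$ \emph{per inter-success epoch}, and the number of such epochs inside a block is not controlled, so the claimed domination by $\mathrm{Geom}(p_{1,s})-1$ does not follow.

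There are two standard ways to close the gap. One is to run your geometric argument only over ``candidate'' rounds, i.e.\ rounds where $A_t'=a$ (arm $a$ maximizes the sample among suboptimal arms) and $E_a^c(t)$ holds; at such a round a success of arm $1$ \emph{does} force $A_t=1$ and ends the block, and the conditional independence of arm $1$'s sample from the other arms' samples given $\mc{F}_{t-1}$ then legitimizes the geometric domination. The other---the route the paper takes---avoids stopping times entirely: from the two inclusions $\{A_t=a,E_a^c(t)\}\subseteq\{A_t'=a,E_a^c(t)\}\cap E_1^c(t)$ and $\{A_t'=a,E_a^c(t)\}\cap E_1(t)\subseteq\{A_t=1,E_a^c(t)\}$, together with the conditional independence of the arms' samples given $\mc{F}_{t-1}$, one obtains
\begin{align*}
\mb{P}\lrp{A_t=a,E_a^c(t)\mid\mc{F}_{t-1}}\le\lrp{\frac{1}{p_{1,T_1(t)}}-1}\,\mb{P}\lrp{A_t=1\mid\mc{F}_{t-1}},
\end{align*}
and then sums $\mb{I}(A_t=1)\lrp{1/p_{1,T_1(t)}-1}$ over $t$, using that $T_1(t)=s$ and $A_t=1$ can co-occur for at most one round $t$ for each value of $s$.
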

The proof of these results are standard for the regret of Thompson sampling and can be found in Appendix \ref{section::appendix_regret_proofs}, Lemmas~\ref{lemma:termI} and~\ref{lemma:termII_exact} for completeness. 

Given Lemma~\ref{lemma:termI_andII_main}, we see that to bound the regret of Thompson Sampling it is sufficient to bound the two terms $\mathrm{I}$ and $\mathrm{II}$. 

To bound term $\mathrm{I}$, we first show that for all times $t=1,...,T$, and number of samples collected from arm $1$, the probability $p_{1,n}=\mb{P}(r_{1,t}(n)>\bar r_1-\epsilon |\mc{F}_{t-1})$ is lower bounded by a constant depending only on the quality of the prior for arm 1.  This guarantees the posterior for the optimal arm is approximately optimistic with at least a constant probability, and requires a proper choice of $\gamma_1$. We note the unscaled posterior provides the correct concentration with respect to the number of data samples $T_a(t)$, when $T_a(t)$ is large. This is sufficient to upper bound the trailing terms of $\mathrm{I}$, that is, summands in Equation~\ref{equation::term_I_explicit} for large $s$. Unfortunately concentration is not enough to bound term $\mathrm{I}$, since the early summands of Equation~\ref{equation::term_I_explicit} corresponding to small values of $s$ could be extremely large. Intuitively, the random variable $r_{1,t}(s)$ can be thought of as centered around the posterior mean of arm $1$. Though this is close to the true value of $\bar{r}_1$ with high probability, when $T_1(t)$ is small, concentration alone does not preclude the possibility that the posterior mean underestimates $\bar{r}_1$ by a value of at least $\epsilon$. In order to ensure $p_{1,s}$ is large enough in these cases, we require $r_{1,t}(s)$ to have sufficient variance to overcome this potential underestimation bias. We show that a scaled posterior $\mu_a^{(T_a(t))}[\gamma_a]$ with $\gamma_a=\lrp{8d_a \kappa_a^3}^{-1}$ in Algorithm~\ref{alg:Thomspson} ensures $r_{1,t}(s)$ has enough variance.

\begin{lemma}
\label{lemma:lemma2}
Suppose the likelihood and reward distributions satisfy Assumptions 1-3, then for all $n=1,...,T$ and 
$\gamma_1 = \frac{1}{8 d_1 \kappa_1^3}$:
\[ \mb{E}\left[\frac{1}{p_{1,n}}\right]\le  C\sqrt{B_1 \kappa_1}, \]
where C is a universal constant independent of the problem dependent parameters.
\end{lemma}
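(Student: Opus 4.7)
My plan follows the standard outline of Thompson sampling proofs (cf.\ Agrawal--Goyal for Gaussian priors), adapted to the log-concave setting by exploiting one-dimensional anti-concentration of log-concave distributions. First, I reduce the problem to one dimension: let $Z_n := \alpha_1^\top(\theta_{1,t} - \theta_1^*)$ with $\theta_{1,t} \sim \mu_1^{(n)}[\gamma_1]$, so $p_{1,n} = \mathbb{P}(Z_n > -\epsilon \mid \mathcal{F}_{t-1})$. Since the scaled posterior is log-concave (Assumptions~\ref{assumption:family_assumption} and~\ref{assumption:prior_assumption}) and linear projections preserve log-concavity, $Z_n$ is $1$D log-concave; denote its conditional mean by $\bar Z_n := \alpha_1^\top(\bar\theta_n - \theta_1^*)$, where $\bar\theta_n$ is the posterior mean of $\mu_1^{(n)}[\gamma_1]$, and its conditional standard deviation by $\sigma_n$. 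The log-density Hessian of $\mu_1^{(n)}[\gamma_1]$ lies in the PSD interval $[\gamma_1 n m_1\, I,\ \gamma_1(n+1)L_1\, I]$, which by Brascamp--Lieb and a density-at-the-mode bound for log-concave distributions pins $\sigma_n$ to $[\,A_1/\sqrt{\gamma_1(n+1)L_1},\ A_1/\sqrt{\gamma_1 n m_1}\,]$ up to universal constants.

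Next, I exploit two standard $1$D log-concave anti-concentration facts: for any $1$D log-concave $Z$ with mean $\mu$ and standard deviation $\sigma$, we have $\mathbb{P}(Z > \mu) \geq c_0$, and $\mathbb{P}(Z > \mu + s\sigma) \geq c_0\, e^{-c_1 s}$ for $s \geq 0$, with universal constants $c_0,c_1 > 0$. Applied to $Z_n$ with $s = \max(0, -\epsilon - \bar Z_n)/\sigma_n$, this gives
\[
p_{1,n} \;\geq\; c_0\, \exp\!\lrp{-c_1 \max(0,\, -\epsilon - \bar Z_n)/\sigma_n}.
\]
I then invoke Proposition~\ref{prop:grad_conc_main} together with a first-order expansion of the posterior's stationarity equation at $\theta_1^*$ (analogous to the SDE analysis underlying Theorem~\ref{thm:theorem1}) to show that $|\bar Z_n|$ is sub-Gaussian over the data randomness with proxy of order $\tau_n = A_1\sqrt{(1+\log B_1)/(n m_1)}$, where the $\log B_1$ term absorbs any bias from a possibly off-center prior.

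The final step is to integrate: by the sub-Gaussian MGF,
\[
\mathbb{E}\lrb{1/p_{1,n}} \;\leq\; c_0^{-1}\,\mathbb{E}\!\lrb{\exp(c_1 |\bar Z_n|/\sigma_n)} \;\leq\; c_0^{-1}\exp\!\lrp{C\, \tau_n^2/\sigma_n^2}.
\]
The ratio satisfies $\tau_n^2/\sigma_n^2 \leq \gamma_1 \kappa_1 (1+\log B_1)$, and the choice $\gamma_1 = 1/(8 d_1 \kappa_1^3)$ bounds this by $(1+\log B_1)/(8 d_1 \kappa_1^2)$; after exponentiation and careful tracking of the universal constants, this yields a bound of the claimed form $C\sqrt{B_1 \kappa_1}$, with the $\sqrt{B_1}$ factor arising from exponentiating the $\log B_1$ term and the conservative $\sqrt{\kappa_1}$ factor absorbing the variance mismatch between the upper and lower Hessian bounds. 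The main technical obstacles I anticipate are (i) the variance lower bound $\sigma_n \geq A_1/\sqrt{\gamma_1(n+1)L_1}$, which needs a density-at-the-mode bound for log-concave distributions (not a direct application of Brascamp--Lieb), and (ii) upgrading the posterior-sample concentration of Theorem~\ref{thm:theorem1} to a $1$D sub-Gaussian concentration statement for the posterior mean $\bar\theta_n$ projected onto $\alpha_1$, which requires careful reuse of the SDE-based techniques from the proof of Theorem~\ref{thm:theorem1}.
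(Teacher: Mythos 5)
Your overall strategy---reduce to the one-dimensional marginal $\alpha_1^\top\theta$, center it at a posterior ``center,'' apply anti-concentration of the log-concave marginal, and then integrate over the data randomness using concentration of $\nabla F_{n,1}(\theta_1^*)$---is the same as the paper's (which centers at the posterior \emph{mode} $\theta_u$ rather than the mean and bounds $\|\theta_u-\theta_1^*\|$ via the first-order stationarity condition). The gap is in your second ``standard'' anti-concentration fact: for a general one-dimensional log-concave $Z$ with mean $\mu$ and standard deviation $\sigma$, the inequality $\mathbb{P}(Z>\mu+s\sigma)\ge c_0 e^{-c_1 s}$ is false. Already for a Gaussian the tail is of order $e^{-s^2/2}$, which drops below $c_0e^{-c_1s}$ once $s$ is large, and log-concave tails can decay even faster (e.g.\ density $\propto e^{-x^4}$). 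What saves the argument here is not bare log-concavity but the smoothness \emph{upper} bound on the log-Hessian of the scaled posterior: since the negative log of the marginal density of $\alpha_1^\top\theta$ has second derivative at most $\gamma_1(n+1)L_1/A_1^2$, its tail can be lower-bounded by $\sqrt{nm_1/((n+1)L_1)}$ times the tail of a $\mathcal{N}\bigl(0,A_1^2/(\gamma_1(n+1)L_1)\bigr)$ variable. This Gaussian comparison is exactly the paper's route, it is the source of the $\sqrt{\kappa_1}$ prefactor you correctly anticipate, and it produces an exponent that is \emph{quadratic}, not linear, in the deviation.

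The quadratic exponent changes your final integration step in an essential way. Instead of $\mathbb{E}[\exp(c_1|\bar Z_n|/\sigma_n)]$, which is a sub-Gaussian MGF and is finite for every $\gamma_1$, you must control $\mathbb{E}[\exp(c\,\bar Z_n^2/\sigma_{\min}^2)]$ with $\sigma_{\min}^2=A_1^2/(\gamma_1(n+1)L_1)$; since $\bar Z_n$ involves $\frac{A_1}{m_1}\|\nabla F_{n,1}(\theta_1^*)\|$, this is the MGF of a sub-exponential variable and is finite only when $\gamma_1(n+1)L_1\cdot m_1^{-2}\cdot d_1L_1^2/(n\nu_1)=O(1)$, i.e.\ only for $\gamma_1\lesssim 1/(d_1\kappa_1^3)$. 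This is precisely where the stated choice of $\gamma_1$ is forced; your linear-exponent version would appear to succeed for any $\gamma_1$, which is a sign the accounting is off. Relatedly, your sub-Gaussian proxy $\tau_n=A_1\sqrt{(1+\log B_1)/(nm_1)}$ for $\bar Z_n$ omits the fluctuation term $\frac{A_1}{m_1}\|\nabla F_{n,1}(\theta_1^*)\|$, whose sub-Gaussian parameter is $\frac{A_1L_1}{m_1}\sqrt{d_1/(n\nu_1)}$ by Proposition~\ref{prop:grad_conc_main}; the resulting $d_1\kappa_1^2$ factor is exactly what the $\gamma_1$ scaling must cancel. With these two repairs your plan reduces to the paper's proof.
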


\begin{remark}
 We find that a proper choice of $\gamma_1$ is required  to ensure that that the posterior on the optimal arm has a large enough variance to guarantee a degree of optimism despite the randomness in its mean. Scaling up the posterior was also noted to be necessary in linear bandits (see e.g.\cite{Thompson_bound_Linearl,LinearRevisited}) to ensure optimal regret. In practice, since we do not \emph{a priori} know which is the optimal arm, we must scale the posterior of each arm by a parameter $\gamma_a$.\end{remark}

The quantity $B_1=\frac{\max_\theta \pi_1(\theta)}{\pi_1(\theta_1^*)}$ captures a worst case dependence on the quality of the prior for the optimal arm, and can be seen as the expected number of samples from the prior until an optimistic sample is observed.

By using this upper bound in conjunction with the posterior concentration result derived in Theorem~\ref{thm:theorem1}, we can further bound $\mathrm{I}$ and $\mathrm{II}$. We note that in contrast with simple subgaussian concentration bounds, our posterior concentration rates have a bias term decreasing at a rate of $1/\sqrt{\text{number of samples}}$. In our analysis we carefully track and control the effects of this bias term ensuring it does not compromise our $\log$-regret guarantees. Indeed, using the posterior concentration in the bounds from Lemma \ref{lemma:termI_andII_main} we show that, for 
$\gamma_a=\frac{1}{8d_a \kappa_a^3}$ there are two universal constants $C_1,C_2>0$ independent of the problem dependent parameters such that:
     \begin{align*}
     \mathrm{I} &\le  C_1\sqrt{\kappa_1 B_1} \left\lceil \frac{A_1^2}{m_1\Delta_a^2}(D_1+\sigma_1) \right\rceil +1;\\
     \mathrm{II} &\le \frac{C_2A_a^2}{m_a\Delta_a^2}(D_a+\sigma_a\log(T)),
\end{align*}
where for $a \in \mc{A}$,  $D_a$ and $\sigma_a$ are given by:
\[ 
D_a= \log B_a + d_a^2\kappa_a^3,  
\qquad \sigma_a = d_a \kappa_a^3 + d_a \kappa_a^2.
\] 
Finally, combining all these observations we obtain the following regret guarantee:
\begin{theorem}[Regret of Exact Thompson Sampling]
 When the likelihood and true reward distributions satisfy Assumptions 1-3 and $\gamma_a=\frac{1}{8d_a \kappa_a^3}$ we have that the expected regret after $T>0$ rounds of Thompson sampling with exact sampling satisfies:
    \begin{align*}
        \mb{E}[R(T)]&\le \sum_{a>1} \frac{ C A_a^2}{m_a\Delta_a}\left(\log B_a+d_a^2\kappa_a^3+d_a\kappa_a^3 \log(T)\right)+\sqrt{\kappa_1 B_1}\frac{C A_1^2}{m_1\Delta_a} \left(1+\log B_1+d_1^2\kappa_1^3\right) +\Delta_a
    \end{align*}
    Where $C$ is a universal constant independent of problem-dependent parameters.
    \label{thm:exact_regret}
\end{theorem}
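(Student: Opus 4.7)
The plan is to follow the standard approach of bounding $\mathbb{E}[R(T)] = \sum_{a > 1} \Delta_a\, \mathbb{E}[T_a(T)]$ by controlling $\mathbb{E}[T_a(T)]$ for each sub-optimal arm $a$, using the decomposition in Eq.~\eqref{eq:regret_decomp1} and the two summand bounds from Lemma~\ref{lemma:termI_andII_main}. The actual work is to derive the explicit bounds on terms $\mathrm{I}$ and $\mathrm{II}$ stated in the paragraph preceding the theorem; once those are in hand, the final statement follows by summing over $a > 1$ and absorbing the ceiling function into an additive $\Delta_a$ term.

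For term $\mathrm{II}$, I would fix $\epsilon = \Delta_a / 2$ and use Theorem~\ref{thm:theorem1} with $\delta = 1/T$ to find a threshold $s^\ast_a = \Theta\!\left(\frac{A_a^2}{m_a \Delta_a^2}(D_a + \sigma_a \log T)\right)$ such that, once arm $a$ has been sampled at least $s^\ast_a$ times, the scaled posterior $\mu_a^{(s)}[\gamma_a]$ concentrates tightly enough around $\theta_a^\ast$ that $r_{a,t}(s) = \alpha_a^\top \theta_{a,t} > \bar r_1 - \Delta_a/2$ with conditional probability below $1/T$. Using $\|\alpha_a\| = A_a$ and the choice $\gamma_a = 1/(8 d_a \kappa_a^3)$, Theorem~\ref{thm:theorem1} translates directly into the required tail bound on $\alpha_a^\top(\theta_{a,t} - \theta_a^\ast)$, and this controls the indicator in Eq.~\eqref{equation::term_II_explicit} for all $s > s^\ast_a$. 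The resulting bound matches the claimed $C_2 A_a^2/(m_a\Delta_a^2)\,(D_a + \sigma_a \log T)$.

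Bounding term $\mathrm{I}$ is more delicate because $1/p_{1,s} - 1$ need not be small for every $s$. The plan is to split the sum in Eq.~\eqref{equation::term_I_explicit} at a threshold $s^\ast_1 = \Theta\!\left(\frac{A_1^2}{m_1\Delta_a^2}(D_1 + \sigma_1)\right)$ chosen so that for $s \ge s^\ast_1$, Theorem~\ref{thm:theorem1} (applied to arm $1$ with $\delta$ polynomial in $s$) yields $1 - p_{1,s} = O(1/s^2)$, making the tail of the series bounded by an absolute constant in expectation. For $s < s^\ast_1$, concentration alone is too weak, so instead I would invoke Lemma~\ref{lemma:lemma2} to bound each $\mathbb{E}[1/p_{1,s}]$ by $C\sqrt{B_1 \kappa_1}$, producing a head contribution of order $\sqrt{B_1\kappa_1}\cdot s^\ast_1$. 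Combining head and tail gives exactly the $C_1 \sqrt{\kappa_1 B_1}\,\lceil A_1^2/(m_1\Delta_a^2)(D_1+\sigma_1)\rceil + 1$ bound quoted in the text.

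Finally, I would multiply the resulting bound $\mathbb{E}[T_a(T)] \le \mathrm{I} + \mathrm{II}$ by $\Delta_a$ for each sub-optimal arm, use $\Delta_a\lceil x\rceil \le \Delta_a\, x + \Delta_a$ to extract the additive $\Delta_a$ appearing in the theorem statement, and sum over $a > 1$. The main technical obstacle I anticipate is choosing the confidence level $\delta$ in Theorem~\ref{thm:theorem1} when controlling the tail of term $\mathrm{I}$: it must be a polynomial function of $s$ (not of $T$) so that $\sum_s (1 - p_{1,s})$ remains $O(1)$, yet large enough that the $\sqrt{\log(1/\delta)}$ bias factor in the concentration radius is still dominated by $\Delta_a/2$ once $s \ge s^\ast_1$. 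Managing this $1/\sqrt{s}$ bias from the posterior concentration is what makes the derivation of the explicit constants $D_a$ and $\sigma_a$ nontrivial, and is the step where the scaling $\gamma_a = 1/(8 d_a \kappa_a^3)$ enters critically (as already highlighted by Lemma~\ref{lemma:lemma2}).
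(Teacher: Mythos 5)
Your proposal follows essentially the same route as the paper's proof: the same decomposition of $\mb{E}[T_a(T)]$ via Lemma~\ref{lemma:termI_andII_main}, the same use of Theorem~\ref{thm:theorem1} with $\epsilon=\Delta_a/2$ to control term $\mathrm{II}$ and the tail of term $\mathrm{I}$, and the same head/tail split of term $\mathrm{I}$ with Lemma~\ref{lemma:lemma2} bounding $\mb{E}[1/p_{1,s}]$ by $C\sqrt{\kappa_1 B_1}$ on the head. The one detail that differs is your treatment of the tail of term $\mathrm{I}$: with a threshold $s_1^\ast=\Theta\bigl(A_1^2(D_1+\sigma_1)/(m_1\Delta_a^2)\bigr)$ containing no logarithm, the bound $1-p_{1,s}=O(1/s^2)$ does not yet hold for all $s\ge s_1^\ast$ (it would require $s\gtrsim \frac{A_1^2}{m_1\Delta_a^2}(D_1+\sigma_1\log s)$, which inflates the head, and hence the $\sqrt{\kappa_1 B_1}$ term, by a logarithm of problem parameters); the paper instead keeps your threshold and sums the exponentially decaying tail $\delta(s)\le e^{-cs}$ directly via an integral, contributing $O(1/c)=O\bigl(\sigma_1 A_1^2/(m_1\Delta_a^2)\bigr)$, which is then absorbed into the leading term since $\sqrt{\kappa_1 B_1}\ge 1$. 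This is a minor repair rather than a genuine gap --- you flag the tension yourself --- and the rest of the argument matches the paper's.
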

The proof of the theorem is deferred to Appendix~\ref{section::appendix_regret_proofs}, where we also provide the exact value of the universal constant $C$. We remark that this regret bound gives an $O\left(\frac{\log{(T)}}{\Delta}\right) $ asymptotic regret guarantee, but holds for any $T>0$. This further highlights that Thompson sampling is a competitive alternative to UCB algorithms since it achieves the optimal problem-dependent rate for multi-armed bandit algorithms first presented in \citet{LaiRobbins}. 

Our bound also has explicit dependencies on the dimension of the parameter space of the likelihood distributions for each arm, as well as on the quality of the priors through the presence of $B_a$ and $B_1$. We note that the dependence on the priors does not distinguish between ``good'' and ``bad'' priors. Indeed, the parameter $B_a\ge1$ is worst case, and does not capture the potential advantages of good priors in Thompson sampling, that we observe in our numerical experiments in Section~\ref{sec:numex_main}. Further, we remark that our bound exhibits a worse dependence on the prior for the optimal arm ($O(\sqrt{B_1}\log(B_1))$) than for sub-optimal arms ($O(\log(B_a))$). This is also a worst case dependence which captures the expected number of samples from the prior until an approximately optimistic sample is observed, which we believe to be unavoidable. 

Finally, we note that our regret bound scales with the variances of the reward and likelihood families since  $\frac{1}{m_a}$ and $\frac{1}{\nu_a}$ reflect the variance of the likelihoods in $\theta$ and the rewards $X_a$ respectively. 

Thus, through the use of the posterior contraction rates we are able to get finite-time regret bounds for Thompson sampling with multi-dimensional log-concave families and arbitrary log-concave priors. This generalizes the result of \citet{KordaExpFamBandits} to a more general class or priors and higher dimensional parametric families.

\section{Approximate Thompson Sampling}
\label{sec:approx}

In this section we present two approximate sampling schemes for generating samples from approximations of the (scaled) posteriors at each round. For both, we give the values of the hyperparameters and computation time needed to guarantee an approximation error which does \emph{not} result in a drastic change in the regret of the Thompson sampling algorithm. 

Before doing so, however, we first present a simple counterexample to illustrate that in the worst case, Thompson sampling with approximate samples incurs  an irreducible regret dependent on the error between the posterior and the approximation to the posterior. In particular, by allowing the approximation error to decrease over time, we extract a  relationship between the order of the regret and the level of approximation.
\begin{example}
\label{ex:counterexx}
Consider a Gaussian bandit instance of two arms $\mathcal{A} = \{1,2\}$ having mean rewards $\bar{r}_1$ and $\bar{r}_2$ and known unit variances. Further assume that the unknown parameters are the means of the distributions such that $\theta^*_a = \bar{r}_a$, and consider the case where the learner makes use of a zero-mean, unit-variance Gaussian prior over $\theta_a$ for $a=1,2$. Under these assumptions, after obtaining samples $X_{a,1}, \cdots, X_{a,n}$, the posterior updates satisfy the following well-known formula:
\begin{equation*}
    P_{a,n}(\theta_a) \propto \mathcal{N}\left(\frac{n}{n+1} ,\frac{1}{n+1}\right).
\end{equation*}
Let $\bar{r}_1 = 1$ and $\bar{r}_2 = 0$ such that arm $1$ is optimal. We now show there exists an approximate posterior $\tilde{P}_{a,t}$ of arm $2$, satisfying $\mathrm{TV}( \tilde{P}_{2,t}, P_{2,t}) \leq n^{-\alpha}$ and such that if samples from $P_{1,t}$ and $\tilde{P}_{2,t}$ were to be used by a Thompson sampling algorithm, its regret would satisfy $R(T) = \Omega(T^{1-\alpha})$.

We substantiate this claim by a simple construction. Let $\tilde{P}_{a,t}$ be $(1-n^{-\alpha})P_{a,t} + n^{-\alpha}\delta_2$, where $\delta_2$ denotes a delta mass centered at $2$. $\tilde{P}_{a,t}$ is a mixture distribution between the true posterior and a point mass. 

Clearly, for all $t \geq C$ for some universal constant $C$, with probability at least $n^{-\alpha}$ the posterior sample from arm $2$ will be larger than the sample from arm $1$. Since $t>n$, $t^{-\alpha}<n^{-\alpha} $ for $\alpha>0$ and since the suboptimality gap equals $1$, we conclude $R(T)=\Omega(\sum_{t=1}^T t^{-\alpha})$. Thus, to incur logarithmic regret, one needs $TV(\tilde P_{2,t},P_{2,t}) = \Omega(\frac{1}{n})$.
\end{example}

\begin{algorithm}[H]
\caption{(Stochastic Gradient) Langevin Algorithm for Arm $a$}
\label{alg:Approximate_sampling}
\SetKwInOut{Input}{Input}
\SetKwInOut{Output}{Output}
\Input{Data $\{x_{a,1}, \cdots, x_{a,n}\}$; \\
MCMC sample $\theta_{a, N h^{(n-1)}}$ from last round}
 Set $\theta_{0}=\theta_{a,t-1}$ for $a \in \mathcal{A}$\\
\For{$i=0,1,\cdots N$}
{
$\,$ Uniformly subsample $\mathcal{S}\subseteq \{x_{a,1}, \cdots, x_{a,n}\}$. \\
Compute 
$
\nabla \hpot(\theta_{i h^{(n)}}) = - \frac{n}{|\mathcal{S}|} \sum_{x_k\in\mathcal{S} } \nabla \log p_a(x_k|\theta_{i h^{(n)}}) - \nabla \log \pi_a(\theta_{i h^{(n)}}).
$ \\
Sample $\theta_{(i+1) h^{(n)}} \sim \mathcal{N}\lrp{ \theta_{i h^{(n)}} - h^{(n)} \nabla \hpot(\theta_{i h^{(n)}}) , 2 h^{(n)} \mI }$.
}
%
\Output{$\theta_{a,N h^{(n)}}=\theta_{N h^{(n)}}$ and $\theta_{a,t} \sim \mc{N}\left( \theta_{N h^{(n)}} , \frac{1}{{nL_a\gamma_a} }I\right)$}
\end{algorithm}

Example~\ref{ex:counterexx} builds on the insights in \citet{phan2019thompson}, who showed that constant approximation error can incur linear regret, which highlights the fact that to achieve logarithmic regret the total variation distance between  the approximation of the posterior  $\bar \mu_a^{(n)}[\gamma_a]$ and the true posterior $\mu_a^{(n)}$ must decrease as samples are collected. In particular it illustrates that the rate at which the approximation error decreases is directly linked to the resulting regret bound.

Given this result, we first propose an unadjusted Langevin algorithm (ULA)~\citep{durmus2017}, which generates samples from an approximate posterior which monotonically approaches the true posterior as data is collected and provably maintains the regret guarantee of exact Thompson sampling. Important to this effort, we demonstrate that the number of steps inside the ULA procedure does not scale with the time horizon, though the number of gradient evaluations scale with the number of times an arm has been pulled.
To over this issue arising from full gradient evaluation, we propose a stochastic gradient Langevin dynamics (SGLD)~\citep{SGLD} variant of ULA which has appealing computational benefits: under slightly stronger assumptions, SGLD takes a constant number of iterations as well as a constant number of data samples in the stochastic gradient estimate while maintaining the order-optimal regret of the exact Thompson sampling algorithm.

\subsection{Convergence of (Stochastic Gradient) Langevin Algorithms}
As described in  Algorithm~\ref{alg:Approximate_sampling}, in each round $t$  we run the (stochastic gradient) Langevin algorithm for $N$  steps to generate a sample of desirable quality for each arm. In particular, we first run a Langevin MCMC algorithm to generate a sample from an approximation to the unscaled posterior. To achieve the scaling with $\gamma_a$ that we require for the analysis of the regret, we add zero-mean Gaussian noise with variance $\frac{1}{\gamma_a L_a n}$ to this sample. The distribution of the resulting sample has the same characteristics as those from the scaled posterior analyzed in Sec.~\ref{sec:exact}.

Given Assumptions~\ref{assumption:Uniform_strong_concavity} and~\ref{assumption:prior_assumption},
we prove (in Theorem~\ref{theorem_ULA} in the Appendix) that running ULA with exact gradients provides appealing convergence properties. In particular, for a number of iterations independent of the number of rounds $t$ or the number of samples from an arm, $n=T_a(t)$, ULA converges to an accuracy in Wasserstein-$p$ distance which maintains the logarithmic regret of the exact algorithm  (for more information on such metrics see \citet{Villani_optimal_transport}). 
We note parenthetically that working with the Wasserstein-$p$ distance provides us with a tighter MCMC convergence analysis (than with the total variation distance used in Example~\ref{ex:counterexx}) that helps in conjunction with the regret bounds.
The proofs of the ULA and SGLD convergence require a uniform strong log-concavity and Lipschitz smoothness condition of the family $p_a(X|\theta_a)$ over the parameter $\theta_a$, a strengthening of Assumption~\ref{assumption:family_assumption}.
\begin{assump}{1-Uniform}[Assumption on the family $p_a(X|\theta_a)$: strengthened for approximate sampling]
\label{assumption:Uniform_strong_concavity}
Assume that $\log p_a(x|\theta_a)$ is $L_a$-smooth and $m_a$-strongly concave over the parameter $\theta_a$:
\begin{multline*}
    -\log p_a(x|\theta_a') -\nabla_{\theta} \log p_a(x|\theta_a')^\top \left( \theta_a - \theta_a' \right) + \frac{m_a}{2}\| \theta_a - \theta_a' \|^2
    \leq -\log p_a(x|\theta_a) \\ 
    \leq -\log p_a(x|\theta_a') -\nabla_{\theta} \log p_a(x|\theta_a')^\top \left( \theta_a - \theta_a' \right) + \frac{L_a}{2}\| \theta_a - \theta_a'\|^2, 
    \quad \forall \theta_a, \theta_a' \in \mathbb{R}^{d_a}, x \in \mb{R}.
\end{multline*}
\end{assump}  
Although the number of iterations required for ULA to converge is constant with respect to the time horizon $t$, the number of gradient computations over the likelihood function within each iteration is $T_a(t)$. To tackle this issue, we sub-sample the data at each iteration and use a stochastic gradient MCMC method~\citep{completesample}. To be able to get convergence guarantees despite the larger variance this method incurs, we make a slightly stronger Lipschitz smoothness assumption on the parametric family of likelihoods.
\begin{assumption}[Joint Lipschitz smoothness of the family $\log p_a(X|\theta_a)$: for SGLD]
\label{assumption:sg_lip}
Assume a joint Lipschitz smoothness condition, which strengthens Assumptions~\ref{assumption:Uniform_strong_concavity} and~\ref{assumption:true_reward_assumption}
to impose the Lipschitz smoothness on the entire bivariate function $\log p_a(x;\theta)$:\footnote{For simplicity of notation, we let Lipschitz constants $L_a^*=L_a$ in the main paper.}
\begin{align*}
\lrn{\nabla_\theta \log p_a(x|\theta_a) - \nabla_\theta \log p_a(x'|\theta_a)}
\leq L_a\lrn{\theta_a - \theta_a'} + L_a^*\lrn{x-x'},
\quad \forall \theta_a, \theta_a' \in \mathbb{R}^{d_a}, x, x' \in \mb{R}.
\end{align*}
\end{assumption}

Under this stronger assumption, we prove the fast convergence of the SGLD method in the following Theorem~\ref{theorem3}. Specifically, we demonstrate that for a suitable choice of stepsize $h^{(n)}$, number of iterations $N$, and size of the minibatch $k=|\mathcal{S}|$, samples generated by Algorithm \ref{alg:Approximate_sampling} are distributed sufficiently close to the true posterior to ensure the optimal regret guarantee.
By examining the number of iterations $N$ and size of the minibatch $k$, we confirm that the algorithmic and sample complexity of our method do not grow with the number of rounds $t$, as advertised.
\begin{theorem}[SGLD Convergence]
\label{theorem3}
Assume that the family $\log p_a(x;\theta)$, prior distributions, and that the true reward distributions satisfy Assumptions~\ref{assumption:Uniform_strong_concavity} through~\ref{assumption:sg_lip}.
If we take 
the batch size $k=\mathcal{O} \lrp{ \kappa_a^2 }$, step size $h^{(n)} =
\mathcal{O}\lrp{ \frac{1}{n} \frac{1}{\kappa_a L_a}}$ and number of steps 
$N = \mathcal{O}\lrp{ \kappa_a^2 }$ in the SGLD algorithm,
then for $\delta_1 \in (0,1)$, with probability at least $1-\delta_1$ with respect to $X_{a,1}, ... X_{a,n}$, we have convergence of the SGLD algorithm in the Wasserstein-$p$ distance.
In particular, between the $n$-th and the $(n+1)$-th pull to arm $a$, samples $\theta_{a,t}$ approximately follow the posterior $\mu_a^{(n)}$:
\begin{align*}
    W_p\lrp{\hmu_a^{(n)}, \mu_a^{(n)}} 
    \leq \sqrt{\frac{8}{n m_a}} \left( d_a + \log B_a +\left(32 + 8 d_a \kappa_a^2 \right)p\right)^{\frac{1}{2}},
\end{align*}
where $\hmu_a^{(n)}$ is the probability measure associated with any of the sample(s) $\theta_{a,Nh_a^{(n)}}$ between the $n$-th and the $(n+1)$-th pull of arm $a$.
\end{theorem}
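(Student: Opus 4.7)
The plan is to prove Theorem~\ref{theorem3} by induction on $n$, the number of pulls of arm $a$. The SGLD procedure targets the unscaled posterior $\mu_a^{(n)} \propto \exp(-\hpot^{(n)})$ with potential $\hpot^{(n)}(\theta) = -\sum_{i=1}^n \log p_a(x_{a,i}|\theta) - \log \pi_a(\theta)$, which under Assumptions~\ref{assumption:Uniform_strong_concavity} and~\ref{assumption:prior_assumption} is $n m_a$-strongly convex and $(n+1)L_a$-smooth. Crucially, SGLD in round $n$ is warm-started from the previous round's sample $\theta_{a, N h^{(n-1)}} \sim \hmu_a^{(n-1)}$, so the inductive hypothesis is precisely the stated $W_p$ bound applied at level $n-1$.

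For the inductive step, I would split the error via
\[
W_p(\hmu_a^{(n)}, \mu_a^{(n)}) \;\le\; W_p\bigl(\hmu_a^{(n)},\, S_n^N \hmu_a^{(n-1)}\bigr) + W_p\bigl(S_n^N \hmu_a^{(n-1)},\, \mu_a^{(n)}\bigr),
\]
where $S_n$ denotes one step of the exact Langevin Markov kernel (with full gradient and zero discretization error) targeting $\mu_a^{(n)}$. The second term is an exact-Langevin contraction: since $\mu_a^{(n)}$ is $S_n$-invariant, a synchronous coupling under $nm_a$-strong convexity of $\hpot^{(n)}$ gives $W_p(S_n^N \hmu_a^{(n-1)}, \mu_a^{(n)}) \le (1 - 2h^{(n)} n m_a)^N\, W_p(\hmu_a^{(n-1)}, \mu_a^{(n)})$; with $h^{(n)} = \Theta(1/(n \kappa_a L_a))$ and $N = \Theta(\kappa_a^2)$ this becomes a multiplicative factor $e^{-\Theta(1)}$. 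The remaining distance $W_p(\hmu_a^{(n-1)}, \mu_a^{(n)})$ I would bound by $W_p(\hmu_a^{(n-1)}, \mu_a^{(n-1)}) + W_p(\mu_a^{(n-1)}, \mu_a^{(n)})$, with the first summand handled by induction and the second by Theorem~\ref{thm:theorem1} (both posteriors concentrate around $\theta_a^*$ at the same $\sqrt{1/(nm_a)}$ order).

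The first decomposition term combines the Euler--Maruyama discretization bias and the minibatch gradient variance. A standard argument under strong log-concavity and Lipschitz smoothness of $\hpot^{(n)}$ gives a discretization bias of order $\sqrt{h^{(n)} d_a / m_a}$ in $W_2$, which lifts to $W_p$ with an extra $\sqrt{p}$ factor through the moment tail bounds that strong log-concavity supplies to the SGLD iterates. For the minibatch error, Assumption~\ref{assumption:sg_lip} gives $\|\nabla_\theta \log p_a(x|\theta) - \nabla_\theta \log p_a(x'|\theta)\| \le L_a^\ast \|x - x'\|$, and Assumption~\ref{assumption:true_reward_assumption} (strong log-concavity of $p_a(x|\theta_a^*)$ in $x$) makes the data $\tfrac{1}{\sqrt{\nu_a}}$-sub-Gaussian, so the per-iteration variance of the stochastic gradient is of order $n L_a^{\ast 2}/(\nu_a k)$. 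A batch size $k = \Theta(\kappa_a^2)$ then keeps the resulting bias within the $W_p$ budget.

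The main obstacle is the bookkeeping that glues the induction together: the constants in $h^{(n)}$, $N$, and $k$ must be chosen so that the warm-start contraction $e^{-\Theta(1)}$ absorbs both the posterior drift from $\mu_a^{(n-1)}$ to $\mu_a^{(n)}$ and the discretization/minibatch biases, while preserving the exact $\sqrt{8/(nm_a)}(d_a + \log B_a + \cdots)^{1/2}$ form of the inductive hypothesis and keeping $N$ independent of $n$ despite the $(n+1)L_a$ smoothness constant growing with $n$. The other technical step is turning this into a high-probability statement over $X_{a,1}, \ldots, X_{a,n}$: I would invoke Proposition~\ref{prop:grad_conc_main} to control $\|\nabla \hpot^{(n)}(\theta_a^*)\|$ and union-bound standard sub-Gaussian concentration over the $N$ SGLD iterations for the minibatch noise, absorbing all low-probability events into the $\delta_1$ budget.
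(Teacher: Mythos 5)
Your proposal is correct and follows essentially the same route as the paper's proof of Theorem~\ref{theorem_SG}: induction over the number of pulls with a warm start, the posterior-concentration bound to control $W_p(\mu_a^{(n-1)},\mu_a^{(n)})$, a synchronous-coupling contraction under $n m_a$-strong convexity, and sub-Gaussianity of the data (via Assumptions~\ref{assumption:true_reward_assumption} and~\ref{assumption:sg_lip}) to bound the minibatch gradient error with $k=\Theta(\kappa_a^2)$. The only cosmetic difference is that you insert the exact Langevin semigroup applied to the warm start as an intermediate measure, whereas the paper's Lemma~\ref{lemma:ULA_cvg} folds the contraction, discretization, and stochastic-gradient terms into a single coupled recursion; the resulting bounds are the same.
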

We remark that we are able to keep the number of iterations, $N$, for both algorithms constant by initializing the current round of the approximate sampling algorithm using the output of the last round of the Langevin MCMC algorithm. If we initialized the algorithm independently from the prior, we would need $O(\log{T_a(t)})$ iterations to achieve this result, which would in turn yield a Thompson sampling algorithm for which the computational complexity grows with the time horizon.  We note that this warm-starting complicates the regret proof for the approximate Thompson sampling algorithms since the samples used by Thompson sampling are no longer independent. 

By scrutinizing the stepsize $h^{(n)}$ and the accuracy level of the sample distribution $W_p\lrp{\hmu_a^{(n)}, \mu_a^{(n)}}$, we note that we are taking smaller steps to get increasingly accurate MCMC samples as more data are being collected. This is due to the need of decreasing the error incurred by discretizing the continuous Langevin dynamics and stochastically estimating the gradient of the log posterior. 
However, the number of iterations and subsampled gradients are not increasing since the concentration of the posterior provides us with stronger contraction of the continuous Langevin dynamics and requires less work because $\mu_a^{(n)}$ and $\mu_a^{(n+1)}$ are closer.

We restate Theorem~\ref{theorem3} and give explicit values of the hyper-parameters in Theorem~\ref{theorem_SG} in the appendix, but remark that the proof of this theorem is novel in the MCMC literature.
It builds upon and strengthens~\cite{Moulines_ULA} by taking into account the discretization and stochastic gradient error to achieve strong convergence guarantees in the Wasserstein-$p$ distance up to any finite order $p$.
Other related works on the convergence of ULA can provide upper bounds in the Wassertein distances up to the second order (i.e., for $p\leq2$)~\citep[see, e.g.,][]{Dalalyan_user_friendly,Xiang_overdamped,MCMC_nonconvex,wibisono_iso}.
This bound in the Wasserstein-$p$ distance for arbitrarily large $p$ is necessary in guaranteeing the following Lemma~\ref{lemma:approx_dist}, a similar concentration result as in Theorem~\ref{thm:theorem1} for the approximate samples $\theta_{a,t} \sim \bar \mu_a^{(n)}[\gamma_a]$. 


\begin{lemma}
Suppose that Assumptions~\ref{assumption:Uniform_strong_concavity} through~\ref{assumption:sg_lip} hold, then for $\delta \in (0,1)$, the sample $\theta_{a,t}$ resulting from running the (stochastic gradient) ULA with $N$ steps, a step size of $h^{(n)}$, and a batch-size $k$ as defined in Theorem~\ref{theorem3} satisfies:
\[ 
\mathbb{P}_{\theta_{a,t} \sim \bar\mu^{(n)}_{a}[\gamma_a]} \left(\|\theta_{a,t}-\theta_a^\ast\|_2 > \sqrt{\frac{36e}{m_a n} \left( d_a+\log B_a +2\left(\sigma_a+\frac{d_a }{18\kappa_a \gamma_a }\right)\log{1/\delta} \right)} \right)<\delta. 
\]
where $\sigma_a=16+4d_a \kappa_a^2$.
\label{lemma:approx_dist}
\end{lemma}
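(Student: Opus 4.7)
The plan is to prove the lemma by decomposing the approximate sample $\theta_{a,t}$ and coupling it to an exact posterior draw through the optimal Wasserstein-$p$ transport guaranteed by Theorem~\ref{theorem3}. By construction in Algorithm~\ref{alg:Approximate_sampling}, I can write $\theta_{a,t}=\theta_{Nh^{(n)}}+\xi$, where $\theta_{Nh^{(n)}}\sim\hmu_a^{(n)}$ is the output of the SGLD inner loop and $\xi\sim\mathcal{N}(0,\tfrac{1}{nL_a\gamma_a}I)$ is the independent Gaussian scaling noise. Let $\tilde\theta\sim\mu_a^{(n)}$ be drawn from the optimal Wasserstein-$p$ coupling with $\theta_{Nh^{(n)}}$, so that $\mathbb{E}[\|\theta_{Nh^{(n)}}-\tilde\theta\|^p]^{1/p}\le W_p(\hmu_a^{(n)},\mu_a^{(n)})$. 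The triangle inequality then gives
\[
\|\theta_{a,t}-\theta_a^*\|\le\|\tilde\theta-\theta_a^*\|+\|\theta_{Nh^{(n)}}-\tilde\theta\|+\|\xi\|,
\]
and the three terms will be controlled independently, then assembled via $(a+b+c)^2\le 3(a^2+b^2+c^2)$ and a union bound at level $\delta/3$.

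Next, I bound each summand with probability at least $1-\delta/3$. For $\|\tilde\theta-\theta_a^*\|$ I invoke Theorem~\ref{thm:theorem1} specialized to the unscaled posterior ($\gamma_a=1$), yielding a squared-distance bound of the form $\tfrac{1}{m_a n}(d_a+\log B_a+(32+8d_a\kappa_a^2)\log(1/\delta))$, which already exhibits the $2\sigma_a$ coefficient in the statement. For the Wasserstein term I apply Markov's inequality to the $p$-th moment, giving $\|\theta_{Nh^{(n)}}-\tilde\theta\|\le W_p\cdot(3/\delta)^{1/p}$; choosing $p=\log(3/\delta)$ turns the Markov prefactor into $e$ and makes the Wasserstein-$p$ bound from Theorem~\ref{theorem3} collapse to the same $\log(1/\delta)$ dependence as Theorem~\ref{thm:theorem1} up to a constant. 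For $\|\xi\|$ I use $L_a=m_a\kappa_a$ and a chi-squared tail bound to obtain $\|\xi\|^2\le\tfrac{C}{m_a n}\cdot\tfrac{d_a+\log(1/\delta)}{\kappa_a\gamma_a}$, which is precisely the source of the extra $\tfrac{d_a}{18\kappa_a\gamma_a}\log(1/\delta)$ term appearing in the final expression.

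Finally, I sum the three squared bounds, collect the factor $3$ from $(a+b+c)^2\le 3(a^2+b^2+c^2)$, and absorb all remaining universal constants into the prefactor $36e$. The bookkeeping-heavy but conceptually routine step is verifying that this sum fits cleanly into the target form $\sqrt{\tfrac{36e}{m_a n}(d_a+\log B_a+2(\sigma_a+\tfrac{d_a}{18\kappa_a\gamma_a})\log(1/\delta))}$; this works because both the exact-posterior and Wasserstein pieces share the $(d_a+\log B_a+2\sigma_a\log(1/\delta))$ template from Theorem~\ref{thm:theorem1}, while the Gaussian piece contributes only the $\tfrac{d_a}{\kappa_a\gamma_a}$-style correction. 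The main obstacle I anticipate is choosing the Markov exponent exactly $p=\log(3/\delta)$ so that the $p$-linear penalty inside the Wasserstein bound does not blow up under union bounding; any other choice leaves either an unabsorbed $p$ factor or an unmanageable $\delta^{-1/p}$ tail, and it is this specific alignment that makes the three bounds share a common structure and permit the clean combination above.
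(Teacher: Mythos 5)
Your proposal follows essentially the same route as the paper: both arguments pass through the exact posterior via the triangle inequality, control the exact-posterior piece with Theorem~\ref{thm:theorem1}, control the sampling error with the Wasserstein-$p$ bound of Theorem~\ref{theorem3} together with Markov's inequality at exponent $p\propto\log(1/\delta)$, and account for the final Gaussian scaling noise through its $p$-th moments (this is exactly the chain carried out in Lemmas~\ref{lemma:W_p_delta_ULA} and~\ref{lemma:W_p_delta_SGLD}). The one place you diverge is in how the three contributions are recombined: the paper keeps everything at the level of $p$-th moments, i.e.\ it bounds $W_p\bigl(\bar\mu_a^{(n)}[\gamma_a],\delta_{\theta_a^*}\bigr)$ by summing the three Wasserstein-$p$ distances (Minkowski), and only then applies Markov \emph{once} to the combined moment; you instead convert each term to a separate tail bound at level $\delta/3$ and take a union bound. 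Your version is valid and yields a bound of the same form, but it does not quite reproduce the stated constants: the union bound replaces $\log(1/\delta)$ by $\log(3/\delta)$, and the resulting additive $2\sigma_a\log 3 = \Theta(d_a\kappa_a^2)$ cannot be absorbed into the $(d_a+\log B_a)$ part of the displayed bound, so you would end up with an extra $O(d_a\kappa_a^2)$ inside the radius (harmless for the downstream regret order, but a strictly weaker statement). If you defer the Markov step until after summing the three $W_p$ bounds, as the paper does, this leakage disappears and the bookkeeping closes exactly.
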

%

\subsection{Thompson Sampling Regret with (Stochastic Gradient) Langevin Algorithms}
\label{sec:TS_ULA}

Given that the concentration results of the samples from ULA and SGLD have the same form as that of exact Thompson sampling, we now show that approximate Thompson sampling achieves the same \emph{finite}-time optimal regret guarantees (up to constant factors) as the exact Thompson sampling algorithm. To show this, we require an analgous result to Lemma~\ref{lemma:lemma2} on the anti-concentration properties of the approximations to the scaled posteriors:

\begin{lemma}
\label{lemma:approx_anti_conc_main}
Suppose the likelihood and true reward distributions satisfy Assumptions 1-4: then if $\gamma_1=O\left(\frac{1}{d_1\kappa_1^3}\right)$, for all $n=1,...,T$ all samples from the the (stochastic gradient) ULA method with the hyperparameters and runtime as described in Theorem~\ref{theorem3} satisfy: 

\[ \mb{E}\left[ \frac{1}{p_{1,n}} \right] \le C\sqrt{B_1}\]
where $C$ is a universal constant independent of problem-depedent parameters.
\end{lemma}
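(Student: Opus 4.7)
The plan is to mirror the exact-case proof of Lemma~\ref{lemma:lemma2} while exploiting a convenient structural feature of Algorithm~\ref{alg:Approximate_sampling}: the output $\theta_{1,t}$ is, by construction, an exact Gaussian $\mathcal{N}(\theta_{Nh^{(1)}},(nL_1\gamma_1)^{-1}I)$ conditional on the MCMC iterate $\theta_{Nh^{(1)}}$. Consequently the scalar $\alpha_1^\top \theta_{1,t}$ is a one-dimensional Gaussian with mean $\alpha_1^\top \theta_{Nh^{(1)}}$ and variance $\sigma_n^2 := A_1^2/(nL_1\gamma_1)$, and using $\bar r_1 = \alpha_1^\top \theta_1^{\ast}$ together with the tower property I would write
\begin{equation*}
p_{1,n} \;=\; \mathbb{E}\!\left[\Phi\!\left(\tfrac{\alpha_1^\top\theta_{Nh^{(1)}}-\bar r_1+\epsilon}{\sigma_n}\right)\,\Big|\,\mathcal{F}_{t-1}\right],
\end{equation*}
where $\Phi$ is the standard Gaussian CDF and the conditional expectation is over the Langevin/SGLD randomness producing $\theta_{Nh^{(1)}}$.

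To upper bound $\mathbb{E}[1/p_{1,n}]$ I would use the same high-probability/low-probability dichotomy as in Lemma~\ref{lemma:lemma2}. Let $E$ be the event $\{\alpha_1^\top\theta_{Nh^{(1)}}\geq \bar r_1 - \epsilon\}$, on which the argument of $\Phi$ is nonnegative and hence $\Phi(\cdot)\geq 1/2$. With the choice $\gamma_1 = \Theta\!\bigl(1/(d_1\kappa_1^3)\bigr)$, the noise scale $\sigma_n$ matches, up to absolute constants, the Langevin concentration radius of $\theta_{Nh^{(1)}}$ around $\theta_1^{\ast}$ supplied by Lemma~\ref{lemma:approx_dist}. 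Invoking that lemma with a universal $\delta$ and combining it with the sub-Gaussian control on the empirical score $\nabla_\theta F_{n,1}(\theta_1^{\ast})$ from Proposition~\ref{prop:grad_conc_main}, I would show that on a data event of probability $1-O(1/\sqrt{B_1})$ the conditional mass the MCMC distribution places on $E$ is at least a universal constant, so $p_{1,n}\gtrsim 1$ on this typical event and its contribution to $\mathbb{E}[1/p_{1,n}]$ is $O(1)$.

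On the complementary data event I would apply the Mills-ratio bound $\Phi(u)\geq \bigl(\sqrt{2\pi}(|u|+|u|^{-1})\bigr)^{-1}e^{-u^2/2}$ for $u<0$ to convert the shortfall $\bar r_1 - \epsilon - \alpha_1^\top\theta_{Nh^{(1)}}$ into a pointwise lower bound on $\Phi(\cdot)$. Integrating $1/p_{1,n}$ against the data measure, the prior-induced worst-case ratio $B_1 = \max_\theta \pi_1(\theta)/\pi_1(\theta_1^{\ast})$ enters as the effective number of prior draws needed before an optimistic centering is realized, yielding the advertised $O(\sqrt{B_1})$. The main obstacle is the constant bookkeeping required so that the explicit Gaussian lift of variance $\sigma_n^2$ alone suffices in place of the scaled posterior's intrinsic log-concave spread used in Lemma~\ref{lemma:lemma2}; because $\alpha_1^\top\theta_{1,t}$ is \emph{exactly} Gaussian given $\theta_{Nh^{(1)}}$, one can use the sharp Gaussian lower tail directly rather than a Brascamp--Lieb-type detour, and this is precisely what removes the extra $\sqrt{\kappa_1}$ factor present in the exact bound. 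A secondary subtlety is the warm start across rounds, which makes $\theta_{Nh^{(1)}}$ depend on earlier MCMC iterates, but Lemma~\ref{lemma:approx_dist} is stated initialization-free after the prescribed number of steps, so this dependence does not propagate into the anti-concentration estimate.
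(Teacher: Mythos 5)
Your skeleton matches the paper's: both proofs exploit the fact that the final step of Algorithm~\ref{alg:Approximate_sampling} adds exact Gaussian noise of variance $(nL_1\gamma_1)^{-1}$, so that $\alpha_1^\top\theta_{1,t}$ is exactly Gaussian conditional on $\theta_{Nh^{(n)}}$, apply a Mills-ratio lower bound to the Gaussian tail, and tune $\gamma_1$ so that the injected noise scale matches the concentration radius of $\theta_{Nh^{(n)}}$; you also correctly identify that this exact Gaussianity is what removes the $\sqrt{\kappa_1}$ factor relative to Lemma~\ref{lemma:lemma2}. However, there are two genuine gaps in how you get from there to the $C\sqrt{B_1}$ bound.

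First, the claim that ``on a data event of probability $1-O(1/\sqrt{B_1})$ the conditional mass on $E$ is at least a universal constant'' is not a mechanism that exists in this problem. The quantity $B_1=\max_\theta\pi_1(\theta)/\pi_1(\theta_1^\ast)$ is a property of the prior, not of the reward data, and neither Proposition~\ref{prop:grad_conc_main} nor Lemma~\ref{lemma:approx_dist} produces an event whose probability scales like $1/\sqrt{B_1}$. In the paper, $B_1$ enters through the $\log B_1$ term in the concentration radius of $\theta_{Nh^{(n)}}$ around $\theta_1^\ast$ (the prior can displace the posterior mass by up to order $\sqrt{\log B_1/(m_1 n)}$); when this displacement is fed into the Gaussian lower-tail factor $e^{-t^2/(2\sigma_n^2)}$ with $\sigma_n^2=A_1^2/(nL_1\gamma_1)$ and $\gamma_1=O(1/(d_1\kappa_1^3))$, it exponentiates into polynomial powers of $B_1$ (the paper tracks $B_1^{1/8}$, $B_1^{1/16}$, and $\sqrt{\log B_1}$ factors) which must be verified to combine into at most $O(\sqrt{B_1})$. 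The ``expected number of prior draws'' interpretation is a heuristic the paper offers in prose, not the proof mechanism.

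Second, and more seriously, bounding $\mathbb{E}[1/p_{1,n}]$ through the Mills ratio forces you to control $\mathbb{E}\bigl[\exp\bigl(c\,nL_1\gamma_1\|\theta_{Nh^{(n)}}-\theta_1^\ast\|^2\bigr)\bigr]$, i.e.\ an exponential moment of the squared deviation of the MCMC iterate. A single invocation of Lemma~\ref{lemma:approx_dist} ``with a universal $\delta$'' gives one tail quantile and cannot bound this moment generating function. The paper's proof needs the full family of $p$-th moment bounds $\Exs\|\theta_{Nh^{(n)}}-\theta_1^\ast\|^{2p}\lesssim\bigl(\tfrac{2D}{mn}+\tfrac{4\sigma p}{mn}\bigr)^p$ for all even $p$ — this is precisely why Theorems~\ref{theorem_ULA} and~\ref{theorem_SG} are proved in Wasserstein-$p$ for arbitrary $p$ — and then sums the MGF series term by term, using $\gamma_1\le m_1/(32L_1\sigma_1)$ to make it converge. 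Without this step your ``constant bookkeeping'' obstacle is not bookkeeping but the substance of the proof. (Your remark that the warm start is harmless here is acceptable for this lemma, since the moment bounds are established by induction over rounds in Theorems~\ref{theorem_ULA} and~\ref{theorem_SG}, but note that Lemma~\ref{lemma:approx_dist} is underwritten by conditional statements of the form of Lemmas~\ref{lemma:W_p_delta_ULA} and~\ref{lemma:W_p_delta_SGLD}.)
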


The proof of Lemma~\ref{lemma:approx_anti_conc_main} is similar to that of ~\ref{lemma:lemma2}, but we are able to save a factor of $\sqrt{\kappa_1}$ due to the fact that the last step of the approximate sampling scheme samples $\theta_{a,t}$ from a a Gaussian distribution as opposed to a strongly-log concave distribution which we must approximate with a Gaussian.

Given this lemma and our concentration results presented in the previous section, the proof of logarithmic regret is essentially the same as that of the regret for exact Thompson sampling. However,  more care has to be taken to  deal with the fact that the samples from the approximate posteriors are no longer independent due to the fact that we warm-start our proposed sampling algorithms using previous samples. We cope with this issue by constructing concentration rates (of a similar form as in Lemma~\ref{lemma:approx_dist}) on the distributions of the samples given the initial sample is sufficiently well behaved (see Lemmas~\ref{lemma:W_p_delta_ULA} and~\ref{lemma:W_p_delta_SGLD}). We then show that this happens with sufficiently high probability to maintain similar upper bounds on terms $I$ and $II$ from Lemma~\ref{lemma:termI_andII_main}  in Lemma~\ref{lemma:approx_Regret_decomposition}, which in turn allows us to prove the following Theorem in Appendix~\ref{sec:approx_regret_proof_appendix}.

\begin{theorem}[Regret of Thompson sampling with a (stochastic gradient) Langevin algorithm]
\label{thm:approx_regret}
When the likelihood and true reward distributions satisfy Assumptions 1-4: we have that the expected regret after $T>0$ rounds of Thompson sampling with the (stochastic gradient) ULA method with the hyper-parameters and runtime as described in Theorem~\ref{theorem3} satisfies:
\begin{align*}
        \mb{E}[R(T)]\le& \, \sum_{a>1} \frac{C A_a^2}{ m_a\Delta_a}\left( \log B_a+d_a+ d_a^2\kappa_a^2 \log T \right)+\sqrt{B_1}\frac{C  A_1^2}{m_1\Delta_a}\left( 1+\log B_1+d_1^2\kappa_1^2 +d_1\kappa_1^2\log{T}\right) +3\Delta_a.
\end{align*}       
    where $C$ is a universal constant that is independent of problem dependent parameter and the scale parameter $\gamma_a=O\left(\frac{1}{d_a \kappa_a^3}\right)$.
\end{theorem}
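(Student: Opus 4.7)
The plan is to mirror the exact‐sampling regret proof (Theorem~\ref{thm:exact_regret}), then pay the extra price incurred by (i) the approximation error between $\bar\mu_a^{(n)}[\gamma_a]$ and $\mu_a^{(n)}[\gamma_a]$ and (ii) the statistical dependence between successive approximate samples induced by warm–starting. As in the exact case, I start from $\mathbb{E}[R(T)]=\sum_{a>1}\Delta_a\,\mathbb{E}[T_a(T)]$, apply the decomposition of Lemma~\ref{lemma:termI_andII_main} with the threshold $\epsilon=\Delta_a/2$, and attack the two resulting quantities separately.

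For term $\mathrm{I}$, I bound $\mathbb{E}[1/p_{1,s}-1]$ using the approximate anti–concentration statement of Lemma~\ref{lemma:approx_anti_conc_main}, which yields $\mathbb{E}[1/p_{1,s}]\le C\sqrt{B_1}$ \emph{uniformly} in $s$. Summing over the early rounds where this crude constant bound dominates contributes the $\sqrt{B_1}\cdot A_1^2(1+\log B_1+d_1^2\kappa_1^2)/(m_1\Delta_a)$ summand. For the later rounds I combine the concentration bound of Lemma~\ref{lemma:approx_dist} for the optimal arm with the observation that once $T_1(t)$ is large enough the approximate posterior puts mass $\ge 1/2$ above $\bar r_1-\epsilon$, making the corresponding summands $O(1)$ per round. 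For term $\mathrm{II}$ I invert Lemma~\ref{lemma:approx_dist} to convert the event $\{p_{a,s}>1/T\}$ into a tail event on $\|\theta_{a,t}-\theta_a^\ast\|$; a standard calculation in the style used for Theorem~\ref{thm:exact_regret} shows that this yields $\mathbb{E}[\mathrm{II}]\le C A_a^2\big(\log B_a+d_a+d_a^2\kappa_a^2\log T\big)/(m_a\Delta_a^2)$, where the $\log T$ arises from requiring $p_{a,s}\le 1/T$.

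The principal obstacle, and the only place the argument genuinely departs from Section~\ref{sec:exact}, is handling the dependence between the samples $\theta_{a,t}$ and $\theta_{a,t'}$ caused by the warm–start inside Algorithm~\ref{alg:Approximate_sampling}. The concentration statement in Lemma~\ref{lemma:approx_dist} is stated marginally; in the regret proof I need the same statement \emph{conditionally} on $\mathcal{F}_{t-1}$. My plan is to invoke the conditional Wasserstein–$p$ contraction bounds of Lemmas~\ref{lemma:W_p_delta_ULA} and~\ref{lemma:W_p_delta_SGLD}, which guarantee the correct concentration provided the warm–start $\theta_{a,t-1}$ lies in a set $\mathcal{G}_{t-1}$ of $\theta$–values close enough to $\theta_a^\ast$. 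I then show inductively that $\mathbb{P}(\theta_{a,t-1}\in\mathcal{G}_{t-1})\ge 1-O(1/T)$ by taking a union bound of the conditional tail bounds across $t=1,\dots,T$, so the contribution of the ``bad'' event to the expected regret is a constant. This is precisely the content of Lemma~\ref{lemma:approx_Regret_decomposition}, which then plugs into the decomposition above.

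Finally, assembling the pieces I multiply the bound on $\mathbb{E}[T_a(T)]$ by the gap $\Delta_a$ and sum over the suboptimal arms. The optimal–arm contribution comes from term $\mathrm{I}$ and carries the $\sqrt{B_1}$ factor; the suboptimal contributions come from term $\mathrm{II}$ and carry the $\log T$ factor together with the dimension/conditioning factor $d_a^2\kappa_a^2$ coming from $\sigma_a$ in Lemma~\ref{lemma:approx_dist}. The constant $+3\Delta_a$ absorbs the ``bad warm–start'' event plus the constant number of initial rounds. The choice $\gamma_a=O(1/(d_a\kappa_a^3))$ is dictated exactly as in the exact case by the anti–concentration requirement of Lemma~\ref{lemma:approx_anti_conc_main}, while the slightly cleaner dependence on $\kappa_a$ (power $2$ instead of $3$) in the final bound stems from the additional Gaussian smoothing step at the end of Algorithm~\ref{alg:Approximate_sampling}, which directly supplies the variance that was obtained via posterior rescaling in Section~\ref{sec:exact}.
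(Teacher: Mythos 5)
Your proposal follows the paper's own route essentially step for step: the decomposition of Lemma~\ref{lemma:termI_andII_main} with $\epsilon=\Delta_a/2$, the uniform anti-concentration bound $\mb{E}[1/p_{1,s}]\le C\sqrt{B_1}$ from Lemma~\ref{lemma:approx_anti_conc_main} for the first $\ell$ summands of term $\mathrm{I}$, the inversion of the tail bound of Lemma~\ref{lemma:approx_dist} to control term $\mathrm{II}$ with the $\log T$ coming from the $1/T$ threshold, the treatment of the warm-start dependence by conditioning on a good event for every round and union-bounding its complement with $\delta_1=1/T^2$ (exactly Lemma~\ref{lemma:approx_Regret_decomposition}, using Lemmas~\ref{lemma:W_p_delta_ULA} and~\ref{lemma:W_p_delta_SGLD} for the conditional concentration), and the attribution of the improved $\kappa_a^2$ dependence to the terminal Gaussian smoothing step of Algorithm~\ref{alg:Approximate_sampling}.

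One step is stated too weakly to close the argument. For the tail $s\ge\ell$ of term $\mathrm{I}$ you say the concentration makes ``the corresponding summands $O(1)$ per round,'' but $\sum_{s\ge\ell}\bigl(1/p_{1,s}-1\bigr)$ has order $T$ terms, so a per-round $O(1)$ bound (equivalently, $p_{1,s}\ge 1/2$) would only give a linear contribution and destroy the logarithmic rate. What is actually required, and what the paper's Lemma~\ref{lemma:approx_posterior_final_lemma} proves, is that for $s\ge\ell$ one has $1-p_{1,s}\le \frac{1}{2}\exp(-cs)$ for an explicit $c>0$, so that $1/p_{1,s}-1$ decays geometrically and the entire tail sum is bounded by the absolute constant $\frac{\log 2}{c}+1$. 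The exponential form of the conditional tail bound you already invoke delivers exactly this, so the repair is immediate, but as written the step does not suffice.
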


We note that Theorem~\ref{theorem3} allows for SGLD to be implemented with a constant number of steps per iteration and a constant batch-size with only the step-size decreasing linearly with the number of samples. Combining this with our regret guarantee shows that an anytime algorithm for Thompson sampling with approximate samples can indeed achieve logarithmic regret. 

Further, we remark that this bound exhibits a \emph{worse} dependence on the quality of the prior on the optimal arm than in the exact sampling regime. In particular, we pay a  $d_1^2\sqrt{B_1}\log T$ in this regret bound as opposed to $d_1^2\sqrt{B_1}$. Our regret bound in the approximate sampling regime does exhibit a slightly better dependence on the condition number of the family. This, we believe, is an artifact of our analysis and is due to the fact that a lower bound on the exact posterior was needed to invoke Gaussian anti-concentration results which were not needed in the approximate sampling regime due to the design of the proposed sampling algorithm.

\section{Numerical Experiments}
\label{sec:numex_main}

We empirically corroborate our theoretical results with numerical experiments of approximate Thompson sampling in log-concave multi-armed bandit instances. 
We benchmark against both UCB and exact Thompson sampling across three different multi-armed bandit instances, where in the first instance, the priors reflect correct ordering of the mean rewards for all arms; in the second instance, the priors are agnostic of the ordering; in the third instance, the priors reflects the complete opposite ordering.
See Appendix~\ref{sec:numex} for details of the experimental settings.

As suggested in our theoretical analysis in Section~\ref{sec:approx}, we use a constant number of steps for both ULA and SGLD (with constant number of data points in the stochastic gradient evaluation) to generate samples from the approximate posteriors.
The regret of the three algorithms averaged across 100 runs is displayed in Figure~\ref{fig:numex}, where we see approximate Thompson sampling with samples generated by ULA and SGLD perform competitively against both exact Thompson sampling and UCB across all three instances.

\begin{figure}[h!]
\centering
         \includegraphics[width=\textwidth]{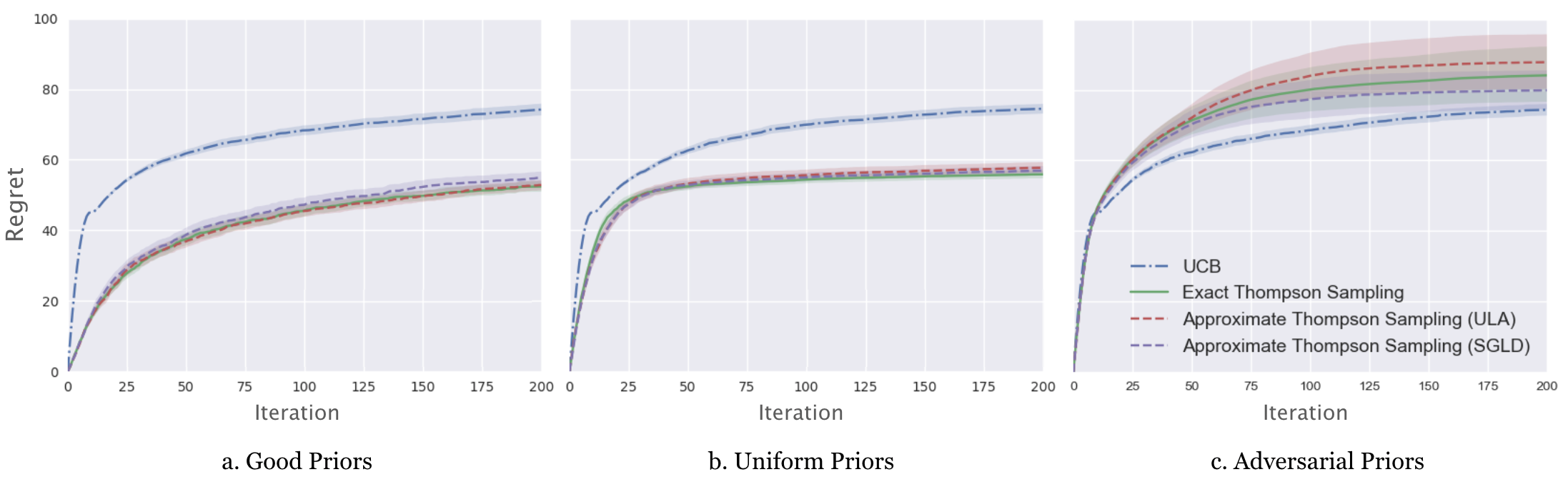}
         \caption{Performance of exact and approximate Thompson sampling vs UCB on Gaussian bandits with (a) ``good priors'' (priors reflecting the correct ordering of the arms' means), (b) the same priors on all the arms' means, and (c) ``bad priors'' (priors reflecting the exact opposite ordering of the arms' means). The shaded regions represent the 95\% confidence interval around the mean regret across 100 runs of the algorithm.}
     \label{fig:numex}
\end{figure}

We observe significant performance gains from the (approximate) Thompson sampling approach over the deterministic UCB algorithm when the priors are suggestive or even non-informative of the appealing arms.
When the priors are adversarial to the algorithm, the UCB algorithm outperforms the Thompson sampling approach as expected. (This case corresponds to the constant $B_a$ in the Theorems~\ref{thm:exact_regret} and~\ref{thm:approx_regret} being large).
Also as the theory predicts, we observe little difference between the exact and the approximate Thompson sampling methods in terms of the regret.
If we zoom in and scrutinize further, we can see that SGLD slightly outperforms the exact Thompson sampling method in the adversarial prior case. This might be due to the added stochasticity from the approximate sampling techniques, which improves the robustness against bad priors.

\section{Conclusions}

Although Thompson sampling has been used successfully in real-world problems for decades and has been shown to have  appealing theoretical properties there remains a lack of understanding of how approximate sampling affects its regret guarantees.

In this work we derived new posterior contraction rates for log-concave likelihood families with arbitrary log-concave priors which capture key dependencies between the posterior distributions and various problem-dependent parameters such as the prior quality and the parameter dimension. We then used these rates to show that exact Thompson sampling in MAB problems where the reward distributions are log-concave achieves the optimal finite-time regret guarantee for MAB bandit problems from \citet{LaiRobbins}. 
As a direction for future work, we note that although our regret bound demonstrates a dependence on the quality of the prior, it still is unable to capture the potential advantages of good priors. 

We then demonstrated that Thompson sampling using samples generated from ULA, and under slightly stronger assumptions, SGLD, could still achieve the optimal regret guarantee with constant algorithmic as well as sample complexity in the stochastic gradient estimate. Thus, by designing approximate sampling algorithms specifically for use with Thompson sampling, we were able to construct a computationally tractable anytime Thompson sampling algorithm from approximate samples with end-to-end guarantees of logarithmic regret.  

\bibliographystyle{plainnat}
\bibliography{arxivbib}

\appendix

\onecolumn

\section{Notation}

Before presenting our proofs, we first include a table summarizing our notation.

\begin{center}
 \begin{tabular}{|c | c |} 
 \hline
 Symbol & Meaning\\ [0.5ex] 
 \hline\hline
 $\mc{A}$ & set of arms in bandit environment \\
 \hline
 $K$ & number of arms in the bandit environment $|\mc{A}|$\\
 \hline
 $T$ &  Time horizon\\
 \hline
 $A_t$ & arm pulled at time $t$ by the algorithm $A_t \in \mc{A}$\\
 \hline
 $T_a(t)$ & number of times arm $a$ has been pulled by time $t$\\
 \hline
  $X_{A_t}$ & reward from choosing arm $A_t$ at time $t$\\
 \hline
 $\theta_a$ & parameters of likelihood functions such that, $
  \theta_a \in \mb{R}^{d_a}$ \\
 \hline 
 $d_a$ & dimension of parameter space for arm $a$\\
 \hline 
  $p_a(x|\theta_a)$ & parametric family of reward distributions for arm $a$\\
 \hline
 $\pi_a(\theta_a)$ & prior distribution over the parameters for arm $a$\\
  \hline
 \multirow{2}{*}{$\mu_a^{(n)}$} & probability measure associated with the posterior over the parameters of arm $a$\\
 & after $n$ samples from arm $a$\\
 \hline
 \multirow{2}{*}{$\mu_a^{(n)}[\gamma_a]$} & probability measure associated with the (scaled) posterior over the parameters of arm $a$\\
 & after $n$ samples from arm $a$\\
  \hline
 \multirow{2}{*}{$\widehat \mu_a^{(n)}$} & probability measure resulting from running the Langevin MCMC algorithm\\
 & described in Algorithm~\ref{alg:Approximate_sampling} which approximates $\mu_a^{(n)}$\\
 \hline
 \multirow{2}{*}{$\bar \mu_a^{(n)}[\gamma_a]$} & probability measure resulting from an approximate sampling method\\
 & which approximates $\mu_a^{(n)}[\gamma_a]$\\
 \hline
  $\theta^*_a$ & true parameter value for arm $a$\\
 \hline
 $\theta_{a,t}$ & sampled parameter for arm $a$ at time $t$ of the Thompson Sampling algorithm: $\theta_{a,t} \sim \mu_a^{(n)}$\\
 \hline
$\bar r_a$ & mean of the reward distribution for arm $a$: $\bar r_a= \mb{E}[X_a|\theta_a^*]$ \\
 \hline
 $\alpha_a^T$ & vector in $\mb{R}^{d_a}$ such that $\bar r_a=\alpha_a^T\theta_a^*$\\
 \hline
 $r_{a,t}(T_a(t))$ & estimate of mean of arm $a$ at round $t$: $r_{a,t}(T_a(t))=\alpha_a^T\theta_{a,t}$\\
 \hline
 $A_a$ & norm of $\alpha_a$\\
 \hline
 $m_a$ & Strong log-concavity parameter of the family $p_a(x;\theta)$ in $\theta$ for all $x$.\\
 \hline
 $\nu_a$ & Strong log-concavity parameter of the true reward distribution $p_a(x;\theta^*)$ in $x$.\\
 \hline
 $F_{n,a}(\theta_a)$ & Averaged log likelihood over the data points: $F_{n,a}(\theta_a) = \frac{1}{n}\sum_{i=1}^n \log p_a(X_i, \theta_a)$ \\
 \hline
 $L_a$ & Lipschitz constant for the true reward distribution, and likelihood families $p_a(x;\theta^*)$ in $x$.\\
  \hline
 $\kappa_a$ & condition number of the likelihood family $\kappa_a=\max\left(\frac{L_a}{m_a},\frac{L_a}{\nu_a}\right)$.\\
 \hline
 $B_a$ & reflects the quality of the prior: $B_a=\frac{\max_\theta \pi_a(\theta)}{\pi_a(\theta^*)}$\\
 \hline
\end{tabular}
\end{center} %
We also define a few notations used within the approximate sampling Algorithm~\ref{alg:Approximate_sampling}.
\begin{center}
 \begin{tabular}{|c | c |} 
 \hline
 Symbol & Meaning\\ [0.5ex] 
 \hline\hline
 $N$ & number of steps of the approximate sampling algorithm\\
  \hline
 $h^{(n)}$ & step size of the approximate sampling algorithm after $n$ samples from the arm\\
 \hline
 $\theta_{ih^{(n)}}$ & MCMC sample generated within $i$-th iteration of Algorithm~\ref{alg:Approximate_sampling}\\
 \hline
 $\mu_{ih^{(n)}}$ &  measure of $\theta_{ih^{(n)}}$\\
  \hline
 $k$ & batch-size of the stochastic gradient Langevin algorithm \\
 \hline
\end{tabular}
\end{center} %

\section{Posterior Concentration Proof}\label{section::posterior_concentration_appendix}

To begin the proof of Theorem~\ref{thm:theorem1}, we first prove that under our assumptions, the gradients of the population likelihood function concentrates.

\begin{proposition}
If the prior distribution over $\theta_a$ satisfies Assumption \ref{assumption:prior_assumption}, then we have:
\[ \sup_{\mathbb{R}^{d_a}} \nabla \log \pi_a(\theta_a)^T(\theta_a-\theta_a^*) \le g_a^*-\log\pi_a(\theta_a^*), \]
where $g_a^* = \max_{\theta \in \mathbb{R}^d} \log \pi_a(\theta_a)$.
\end{proposition}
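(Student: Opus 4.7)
The plan is to use concavity of $\log \pi_a$ directly, since the Lipschitz-gradient hypothesis on the prior is not actually needed for this statement. Concavity gives a one-sided tangent-line bound which, combined with the definition of the maximum value $g_a^*$, yields the claim essentially immediately.

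First I would fix an arbitrary $\theta_a \in \mathbb{R}^{d_a}$ and invoke concavity of $\log \pi_a$ at the point $\theta_a$, evaluated at $\theta_a^*$. This gives the first-order upper bound
\[
\log \pi_a(\theta_a^*) \;\le\; \log \pi_a(\theta_a) + \nabla \log \pi_a(\theta_a)^{\top}(\theta_a^* - \theta_a),
\]
which after rearranging becomes
\[
\nabla \log \pi_a(\theta_a)^{\top}(\theta_a - \theta_a^*) \;\le\; \log \pi_a(\theta_a) - \log \pi_a(\theta_a^*).
\]

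Next I would use the definition $g_a^* = \max_{\theta \in \mathbb{R}^{d_a}} \log \pi_a(\theta)$ to upper bound the right-hand side, giving
\[
\nabla \log \pi_a(\theta_a)^{\top}(\theta_a - \theta_a^*) \;\le\; g_a^* - \log \pi_a(\theta_a^*).
\]
Since the right-hand side does not depend on $\theta_a$, taking the supremum over $\theta_a \in \mathbb{R}^{d_a}$ yields the stated inequality.

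There is no real obstacle here: the whole argument is one application of the concavity inequality plus monotonicity of the supremum. The only subtle point is that the statement tacitly assumes $g_a^*$ is finite (otherwise the bound is vacuous), which is automatic when $\pi_a$ is a density and $\log \pi_a$ is concave with an interior maximizer, but should perhaps be flagged. Note also that the Lipschitz hypothesis on $\nabla \log \pi_a$ plays no role in this particular proposition, though it is presumably used later in the concentration analysis.
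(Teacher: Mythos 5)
Your proof is correct and is essentially identical to the paper's: both apply the first-order concavity inequality for $\log \pi_a$ at an arbitrary $\theta_a$ evaluated at $\theta_a^*$, bound $\log\pi_a(\theta_a)$ by $g_a^*$, and take the supremum. Your side remarks (that the Lipschitz-gradient hypothesis is not needed here, only concavity) are accurate.
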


\begin{proof}
Let $\log \pi_a(\theta_a)=g(\theta_a)$. From the concavity of $g$, we know that

\[ \nabla g(\theta_a)^T(\theta_a-\theta_a^*) \le g(\theta_a)-g(\theta_a^*)\]
Since this holds for all $\theta \in \mathbb{R}^{d_a}$, we take the supremum of both sides and get that:
\[ \sup_{\mathbb{R}^{d_a}}\nabla g(\theta_a)^T(\theta_a-\theta_a^*) \le g^*-g(\theta_a^*)\]
\end{proof}
Let $ \log B_a := g_a^* - \log \pi_a(\theta_a^*)$. If the prior is centered on the correct value of $\theta_a^*$, then $\log B_a=0$. Our posterior concentration rates will depend on $B_a$.

Before proving the posterior concentration result we first show the empirical likelihood function at $\theta^\ast_a$ is a sub-Gaussian random variable:
\begin{proposition}
\label{prop:grad_conc}
 The random variable $\|\nabla_\theta F_{a,n}(\theta_a^*) \|$ is $L_a \sqrt{\frac{d_a}{n\nu_a}}$-sub-Gaussian: 
\end{proposition}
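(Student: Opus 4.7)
The plan is to recognise $\nabla_\theta F_{n,a}(\theta_a^*)$ as an average of $n$ centred i.i.d.\ random vectors and to apply Lipschitz concentration for strongly log‑concave measures to the scalar function $\phi(x_1,\dots,x_n) := \bigl\|\tfrac{1}{n}\sum_i \nabla_\theta \log p_a(x_i\mid\theta_a^*)\bigr\|$, combined with a Poincaré‑type variance bound to control $\mathbb{E}[\phi]$.

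\textbf{Step 1 (score identity and setup).} Write $g(x) := \nabla_\theta \log p_a(x\mid\theta_a^*)\in\mathbb{R}^{d_a}$, so that $\nabla_\theta F_{n,a}(\theta_a^*)=\tfrac{1}{n}\sum_{i=1}^n g(X_{a,i})$. Because $p_a(\cdot\mid\theta_a^*)$ is a probability density, $\mathbb{E}[g(X_{a,i})]=0$. By Assumption~\ref{assumption:true_reward_assumption}, $g$ is $L_a$‑Lipschitz in the scalar argument $x$.

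\textbf{Step 2 (Lipschitz property of $\phi$).} View $\phi:\mathbb{R}^n\to\mathbb{R}$ as above. Using the reverse triangle inequality followed by the Lipschitz bound on $g$ and Cauchy–Schwarz,
\[
|\phi(x)-\phi(x')| \;\le\; \Bigl\|\tfrac{1}{n}\sum_{i}(g(x_i)-g(x_i'))\Bigr\| \;\le\; \tfrac{L_a}{n}\sum_i |x_i-x_i'| \;\le\; \tfrac{L_a}{\sqrt{n}}\,\|x-x'\|_2,
\]
so $\phi$ is $L_a/\sqrt{n}$‑Lipschitz on $\mathbb{R}^n$.

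\textbf{Step 3 (Lipschitz concentration).} Since $X_{a,1},\dots,X_{a,n}$ are i.i.d.\ from a $\nu_a$‑strongly log‑concave distribution, their joint law on $\mathbb{R}^n$ is $\nu_a$‑strongly log‑concave (the log‑density is a sum of $\nu_a$‑strongly concave terms). The standard Lipschitz concentration theorem for strongly log‑concave measures (e.g.\ Bobkov/Ledoux) then gives that $\phi - \mathbb{E}[\phi]$ is $L_a/\sqrt{n\nu_a}$‑sub‑Gaussian.

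\textbf{Step 4 (bounding $\mathbb{E}[\phi]$) and combination.} Apply Jensen to obtain $\mathbb{E}[\phi]\le\sqrt{\mathbb{E}[\phi^2]}$, and expand
\[
\mathbb{E}\Bigl[\bigl\|\tfrac{1}{n}\sum_i g(X_{a,i})\bigr\|^2\Bigr]=\tfrac{1}{n^2}\sum_i \mathbb{E}\|g(X_{a,i})\|^2 = \tfrac{1}{n}\sum_{j=1}^{d_a}\operatorname{Var}(g^{(j)}(X_{a,1})),
\]
where mean‑zero was used in the second equality. Each coordinate $g^{(j)}$ is $L_a$‑Lipschitz, and for a $\nu_a$‑strongly log‑concave law on $\mathbb{R}$ the Poincaré inequality gives $\operatorname{Var}(g^{(j)}(X_{a,1}))\le L_a^2/\nu_a$. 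Therefore $\mathbb{E}[\phi]\le L_a\sqrt{d_a/(n\nu_a)}$. Combining this with Step~3 yields the sub‑Gaussian tail bound with parameter $L_a\sqrt{d_a/(n\nu_a)}$ asserted in the proposition.

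\textbf{Main obstacle.} The calculation is essentially routine once the right framework is in place; the only subtlety is being careful that ``sub‑Gaussian'' here includes the nonzero mean $\mathbb{E}[\phi]$, which is absorbed into the $\sqrt{d_a}$ factor via the Poincaré bound rather than through the Lipschitz concentration inequality itself. The dimension dependence must appear in $\mathbb{E}[\phi]$ and not in the fluctuation term, and keeping this bookkeeping straight is the one place where a careless combination could produce an incorrect scaling.
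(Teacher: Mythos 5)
Your proof is correct, but it takes a genuinely different route from the paper's. The paper works sample-by-sample and direction-by-direction: it shows that each projection $\langle \nabla_\theta \log p_a(x\mid\theta_a^*), v\rangle$ is an $L_a$-Lipschitz function of the $\nu_a$-strongly log-concave reward, hence sub-Gaussian with parameter $L_a/\sqrt{\nu_a}$; concludes that the score is a sub-Gaussian \emph{vector}; averages $n$ i.i.d.\ copies to get parameter $L_a/\sqrt{n\nu_a}$; and then invokes a separate lemma (Lemma 1 of Jin et al.) to convert a sub-Gaussian vector into a norm-sub-Gaussian one, which is where the $\sqrt{d_a}$ appears. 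You instead apply Lipschitz concentration once, to the norm $\phi$ of the empirical average viewed as an $L_a/\sqrt{n}$-Lipschitz function on the product space $\mathbb{R}^n$ (the product of $\nu_a$-strongly log-concave laws being $\nu_a$-strongly log-concave), and you control $\mathbb{E}[\phi]$ separately via independence, the score identity, and a coordinatewise Poincar\'e bound. Both arguments rest on the same concentration principle for Lipschitz functionals of strongly log-concave measures, but yours dispenses with the sub-Gaussian-vector formalism and the norm-conversion lemma, and in fact yields a slightly sharper statement: the dimension enters only through the mean shift $\mathbb{E}[\phi]\le L_a\sqrt{d_a/(n\nu_a)}$ while the fluctuation remains $L_a/\sqrt{n\nu_a}$-sub-Gaussian, which recovers (up to absolute constants) the moment growth $\mathbb{E}[\phi^{2p}]^{1/2p}\lesssim L_a\sqrt{d_a p/(n\nu_a)}$ that the paper actually uses downstream in Theorem~\ref{thm:theorem1}. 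The one caveat is exactly the bookkeeping point you flag yourself: folding the nonzero mean into a single "sub-Gaussian parameter" only matches the proposition's phrasing up to absolute constants, but the paper is equally loose about this, so nothing is lost.
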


\begin{proof}

Recall that the true density $p_a(x| \theta_a^*)$ is $\nu_a$-strongly log-concave in $x$ and that $\nabla_\theta \log p_a(x| \theta_a^*)$ is $L_a$-Lipschitz in $x$. Notice that $\nabla_\theta F_a(\theta^*_a) = 0$ since $\theta_a^*$ is the point maximizing the population likelihood. 

Let's consider the random variable $Z = \nabla_\theta \log p_a(x| \theta_a^*)$. Since $\mathbb{E}[Z] =  \nabla_\theta F_a(\theta_a^*)$, the random variable $Z$ is centered. 

We start by showing $Z$ is a subgaussian random vector. Let $v \in \mathbb{S}_{d_a}$ be an arbitrary point in the $d_a-$dimensional sphere and define the function $V: \mathbb{R}^{d_a} \rightarrow \mathbb{R}$ as $V(x) = \langle \nabla_\theta \log p_a(x| \theta_a^*), v\rangle$. This function is $L_a-$Lipschitz. Indeed let $x_1, x_2 \in \mathbb{R}^{d_a}$ be two arbitrary points in $\mathbb{R}^{d_a}$:
\begin{align*}
  |  V(x_1) - V(x_2) | &= | \langle \nabla_\theta \log p_a(x_1| \theta_a^*) - \nabla_\theta \log p_a(x_2| \theta_a^*), v \rangle |\\
  &\leq \|  \nabla_\theta \log p_a(x_1| \theta_a^*) - \nabla_\theta \log p_a(x_2| \theta_a^*) \|_2 \|v\|_2 \\
  &= \|  \nabla_\theta \log p_a(x_1| \theta_a^*) - \nabla_\theta \log p_a(x_2| \theta_a^*) \|_2 \\
  &\leq L_a \|x_1 - x_2 \|
\end{align*}
The first inequality follows by Cauchy-Schwartz, the second inequality by the Lipschitz assumption on the gradients. After a simple application of Proposition 2.18 in \citet{ledoux2001concentration}, we conclude that $V(x)$ is subgaussian with parameter $\frac{ L_a }{ \sqrt{\nu_a}}$. %

Since the projection of $Z$ onto an arbitrary direction $v$ of the unit sphere is subgaussian, with a parameter independent of $v$, we conclude the random vector $Z$ is subgaussian with the same parameter $\frac{ L_a }{ \sqrt{\nu_a}}$. Consequently, the vector $\nabla_\theta F_{a,n}(\theta_a^*) $, being an average of $n$ i.i.d. subgaussian vectors with parameter $\frac{L_a }{ \sqrt{\nu_a}}$ is also subgaussian with parameter $\frac{ L_a }{ \sqrt{n \nu_a}}$.

Since $\nabla_\theta F_{a,n}(\theta_a^*)$ is a subgaussian vector with parameter $\frac{ L_a }{ \sqrt{n \nu_a}}$, Lemma 1 of \citep{jin2019short} implies it is norm subgaussian with parameter $\frac{ L_a \sqrt{d_a} }{ \sqrt{n \nu_a}}$. 

%

%
%
%

%
%

%

%

%

%
%
%
%
%
%
%
%

%
%
%
%
%
%
%
%

%
%
%
%
%
%
%
%
%
%
%

\end{proof}

Given these results we now prove Theorem~\ref{thm:theorem1}. For clarity, we restate the theorem below: 

\begin{theorem2}{1}
Suppose that Assumptions 1-3 hold, then given samples $X_a^{(n)}=X_{a,1},...,X_{a,n}$, the posterior distribution satisfies:

\[ \mathbb{P}_{\theta \sim \mu^{(n)}_{a}}[\gamma_a] \left(\|\theta_a-\theta_a^\ast\|_2 > \sqrt{\frac{2e}{m_an}\left(\frac{d_a}{\gamma_a} +\log B_a+\left(\frac{32}{\gamma_a}+\frac{8d_a\kappa_a L_a}{\nu_a}\right)\log\left(1/\delta\right)\right)}\right)<\delta. \]
\end{theorem2}

\begin{proof}
The proof makes use of the techniques used to prove Theorem 1 in \citet{mou2019diffusion}: analyzing how a carefully designed potential function evolves along trajectories of the s.d.e. By a careful accounting of terms and constants, however, we are able to keep explicit constants and derive tighter bounds which hold for any finite number of samples. Throughout the proof we drop the dependence on $a$ and condition on the high-probability event, $G_{a,n}(\delta_1)$, defined in Proposition \ref{prop:grad_conc}, which guarantees that the norm of the likelihood gradients concentrates with probability at least $1-\delta_1$. 

Consider the s.d.e.:
\[ d\theta_t= \frac{1}{2}\nabla_\theta F_{n}(\theta_t)dt+\frac{1}{2n}\nabla_\theta \log\pi(\theta_t))dt +\frac{1}{\sqrt{n\gamma}}dB_t,\]
and the potential function given by: 
\[ V(\theta)=\frac{1}{2}e^{\alpha t}\|\theta-\theta^\ast\|^2_2,\] 
for a choice of $\alpha>0$.
The idea is that bounds on the $p$-th moments of $V(\theta_t)$ can be translated into bounds on the $p$-th moments of $V(\theta)$ where $\theta \sim \mu^{(n)}$, due to the fact that $\lim_{t\rar \infty} \theta_t=\theta \sim \mu^{(n)}$. The square-root growth in $p$ of these moments will imply that $\|\theta-\theta^\ast\|_2$ has subgaussian tails with a rate that we make explicit.

We begin by using Ito's Lemma on $V(\theta_t)$:
\[  V(\theta_t) =T1+T2+T3+T4,\\\]
where:
\begin{align*}
    T1&=-\frac{1}{2}\int_{0}^t e^{\alpha s}\langle \theta^\ast-\theta_s,\nabla_\theta F_{n}(\theta_s) \rangle ds+\frac{\alpha}{2}\int_{0}^te^{\alpha s}\|\theta_s-\theta^\ast\|^2_2 ds\\
    T2&=\frac{1}{2n}\int_{0}^t e^{\alpha s}\langle \theta_s -\theta^\ast,\nabla_\theta \log \pi(\theta_s) \rangle ds\\
    T3&=\frac{d}{2n\gamma}\int_{0}^t e^{\alpha s}  ds\\ 
    T4&=\frac{1}{\sqrt{n\gamma}}\int_{0}^t e^{\alpha s} \langle \theta_s-\theta^\ast,dB_s \rangle 
\end{align*}

Let us first upper bound $T1$:
\begin{align*}
    T1&=-\frac{1}{2}\int_{0}^t e^{\alpha s}\langle \theta^\ast-\theta_s,\nabla_\theta F_{n}(\theta_s) \rangle ds+\frac{\alpha}{2}\int_{0}^te^{\alpha s}\|\theta_s-\theta^\ast\|^2_2 ds\\
    &= -\frac{1}{2}\int_{0}^t e^{\alpha s}\langle \theta^\ast-\theta_s,\nabla_\theta F_{n}(\theta_s)-\nabla_\theta F_{n}(\theta^\ast) \rangle ds+\frac{\alpha}{2}\int_{0}^te^{\alpha s}\|\theta_s-\theta^\ast\|^2_2 ds\\
    &\quad -\frac{1}{2}\int_{0}^t e^{\alpha s}\langle \theta^\ast-\theta_s,\nabla_\theta F_{n}(\theta^\ast) \rangle ds\\
    & \stackrel{(i)}{\le} \frac{\alpha-m}{2}\int_{0}^te^{\alpha s}\|\theta_s-\theta^\ast\|^2_2 ds-\frac{1}{2}\int_{0}^t e^{\alpha s}\langle \theta^\ast-\theta_s,\nabla_\theta F_{n}(\theta^\ast) \rangle ds\\
    &\stackrel{(ii)}{\le} \frac{\alpha-m}{2}\int_{0}^te^{\alpha s}\|\theta_s-\theta^\ast\|^2_2 ds+\frac{1}{2}\int_{0}^t e^{\alpha s} \|\theta^\ast-\theta_s\|\underbrace{\|\nabla_\theta F_{n}(\theta^\ast)\|}_{:=\epsilon(n)} ds\\
\end{align*}
where in $(i)$ we use the strong-concavity property from Assumption~\ref{assumption:family_assumption}, and in $(ii)$ we use Cauchy-Shwartz.

Using Young's inequality for products, where the constant is $m$, gives:
\begin{align*}
    T1&\le \frac{2\alpha-m}{4}\int_{0}^te^{\alpha s}\|\theta_s-\theta^\ast\|^2_2ds + \frac{\epsilon(n)^2}{4 m}\int_{0}^te^{\alpha s} ds
\end{align*}
Finally, choosing $\alpha=m/2$ the first term on the RHS vanishes. Evaluating the integral in the second term on the RHS gives:
\begin{align*}
    T1&\le  \frac{\epsilon(n)^2}{2m^2}\left( e^{\alpha t}-1\right)\le  \frac{\epsilon(n)^2}{m^2}e^{\alpha t}.
\end{align*}
Given our assumption on the prior, our choice of $\alpha=\frac{m}{2}$ and simple algebra, we can upper bound $T2$ and $T3$ as:
\[ T2 = \frac{1}{2n}\int_{0}^t e^{\alpha s}\langle \theta_s -\theta^\ast,\nabla_\theta \log \pi(\theta_s) \rangle ds\le \frac{\log B}{2\alpha n}(e^{\alpha t}-1) \le \frac{\log B}{n m}e^{\alpha t} \]
\[ T3=\frac{d}{2n\gamma}\int_{0}^t e^{\alpha s}  ds\le \frac{d}{\gamma n m}e^{\alpha t}. \]

We proceed to bound $T4$. Let's start by defining: 
\[ M_t=\int_{0}^t e^{\alpha s} \langle \theta_s-\theta^\ast,dB_s \rangle ,\]
so that:
\[ T4=\frac{M_t}{\sqrt{n\gamma}}.\]
Combining all the upper bounds of $T1, T2, T3$, and $T4$ we have that:
\[ V(\theta_t) \le  \left( \frac{\epsilon(n)^2}{m^2} +\frac{d}{\gamma n m}+\frac{\log B}{n m}\right) e^{\alpha t}+\frac{M_t}{\sqrt{\gamma n}}.\]

To find a bound for the $p-$th moments of $V$, we upper bound the $p$-th moments of the supremum of $M_t$ where $p\ge 1$: 
\begin{align*}
    \mb{E}\left[\sup_{0\le t\le T} |M_t|^{p}\right] 
    &\stackrel{(i)}{\le} (8p)^{\frac{p}{2}} \mb{E}\left[\langle M,M\rangle_T ^{\frac{p}{2}}\right]\\
    &=(8p)^{\frac{p}{2}} \mb{E}\left[\left( \int_{0}^T e^{2\alpha s} \|\theta_s-\theta^\ast\|_2^2\right)^{\frac{p}{2}}ds\right]\\
    &\stackrel{(ii)}{\le} (8p)^{\frac{p}{2}} \mb{E}\left[\left(\sup_{0\le t\le T} e^{\alpha t}\|\theta_t-\theta^\ast\|_2^2 \int_{0}^T e^{\alpha s}ds \right)^{\frac{p}{2}}\right]\\
     &= (8p)^{\frac{p}{2}} \mb{E}\left[\left(\sup_{0\le t\le T} e^{\alpha t}\|\theta_t-\theta^\ast\|_2^2 \frac{(e^{\alpha T}-1)}{\alpha}\right)^{\frac{p}{2}}\right]\\
     &\stackrel{(iii)}{\le} \left(\frac{8p e^{\alpha T}}{\alpha}\right)^{\frac{p}{2}} \mb{E}\left[\left(\sup_{0\le t\le T} e^{\alpha t}\|\theta_t-\theta^\ast\|_2^2 \right)^{\frac{p}{2}}\right]\\
\end{align*}

Inequality $(i)$ is a direct consequence of the Burkholder-Gundy-Davis inequality \citep{BurkholderWConstant}, $(ii)$ follows by pulling out the supremum out of the integral, $(iii)$ holds because $e^{\alpha T} - 1 \leq e^{\alpha T}$. 

Now, let us look at the moments of $V(\theta_t)$. 

\begin{align*}
    \mb{E}\left [\left( \sup_{0\le t\le T} V(\theta_t) \right)^p \right]^{\frac{1}{p}}&\le
    \mb{E}\left [\left( \sup_{0\le t\le T} \left( \frac{\epsilon(n)^2}{m^2} +\frac{d}{\gamma n m}+\frac{\log B}{n m}\right) e^{\alpha t} + \frac{|M_t|}{\sqrt{\gamma n}} \right)^p
    \right]^{\frac{1}{p}}\\
    &\le  \mb{E}\left [\left( \sup_{0\le t\le T} \left( \frac{\epsilon(n)^2}{m^2} +\frac{d}{\gamma n m}+\frac{\log B}{n m}\right) e^{\alpha t} + \sup_{0\le t\le T}\frac{|M_t|}{\sqrt{\gamma n}} \right)^p
    \right]^{\frac{1}{p}}
\end{align*}
Via the Minkowski Inequality, and the fact $\epsilon(n)$ is independent of $t$, we can expand the above as:

\begin{align*}
    \mb{E}\left [\left( \sup_{0\le t\le T} V(\theta_t) \right)^p \right]^{\frac{1}{p}}&\le  \underbrace{\left(\frac{d}{\gamma n m}+\frac{\log B}{n m}\right) e^{\alpha T}}_{:=U_T} +\frac{e^{\alpha T}}{m^2}\mb{E}\left[ \epsilon(n)^{2p}\right]^{\frac{1}{p}}+ \mb{E}\left [\left(\sup_{0\le t\le T}\frac{|M_t|}{\sqrt{n}} \right)^p
    \right]^{\frac{1}{p}}
\end{align*}
Since, from Proposition~\ref{prop:grad_conc_main}, we know that $\epsilon(n)$ is a $L\sqrt{\frac{ d }{n \nu}}$-sub-Gaussian vector, we know that:

\[\mb{E}\left[ \epsilon(n)^{2p}\right]^{\frac{1}{p}} \le \left(2L\sqrt{\frac{ 2d p }{n \nu}}\right)^2  \]

Using our upper bound on the supremum of $M_t$ gives:

\begin{align}\label{equation::v_upper_bound}
    \mb{E}\left [\left( \sup_{0\le t\le T} V(\theta_t) \right)^p \right]^{\frac{1}{p}}&\le  U_T +\frac{e^{\alpha T} 8dL^2}{\nu m^2n}p+ \mb{E}\left [\left(\frac{8 p e^{\alpha T}}{\gamma \alpha n}\right)^{\frac{p}{2}}  \left(\sup_{0\le t\le T} e^{\alpha t}\|\theta_t-\theta^\ast\|_2^2 \right)^{\frac{p}{2}}\right]^{\frac{1}{p}}
\end{align}
We proceed by bounding the second term on the RHS of the expression above:
\begin{align*}
    \mb{E}\left [\left(\frac{8p e^{\alpha T}}{\alpha n}\right)^{\frac{p}{2}}  \left(\sup_{0\le t\le T} e^{\alpha t}\|\theta_t-\theta^\ast\|_2^2 \right)^{\frac{p}{2}}\right]^{\frac{1}{p}} &\stackrel{(i)}{\le}  \mb{E}\left [\frac{2^{p-1}}{2}\left(\frac{8p e^{\alpha T}}{\alpha \gamma n}\right)^p  +\frac{1}{2^p}\left(\sup_{0\le t\le T} e^{\alpha t}\|\theta_t-\theta^\ast\|_2^2 \right)^{p}\right]^{\frac{1}{p}}\\
    &\stackrel{(ii)}{\le}  2^{\frac{p-2}{p}}\mb{E}\left [\left(\frac{8p e^{\alpha T}}{\alpha \gamma n}\right)^p\right]^{\frac{1}{p}}+ \frac{1}{2} \mb{E}\left [\left(\sup_{0\le t\le T} e^{\alpha t}\|\theta_t-\theta^\ast\|_2^2 \right)^{p}\right]^{\frac{1}{p}}\\
  &\stackrel{(iii)}{\le}16 \   \mb{E}\left [ \left(\frac{p e^{\alpha T}}{\alpha \gamma n}\right)^p\right]^{\frac{1}{p}}+\underbrace{ \frac{1}{2} \mb{E}\left [\left(\sup_{0\le t\le T} e^{\alpha t}\|\theta_t-\theta^\ast\|_2^2 \right)^{p}\right]^{\frac{1}{p}}  }_{I}
\end{align*}
Inequality $(i)$ follows from using Young's inequality  for products on the term inside the expectation with constant $2^{p-1}$, inequality $(ii)$ is a consequence of Minkowski Inequality and $(iii)$ because $2^{\frac{p-2}{p}} \leq 2$. We note now that the second term $I$ on the right hand side above is exactly:
\[  \frac{1}{2} \mb{E}\left [\left( \sup_{0\le t\le T} V(\theta_t) \right)^p \right]^{\frac{1}{p}} \]
Plugging this into Equation \ref{equation::v_upper_bound} and rearranging gives:

\begin{align*}
    \frac{1}{2}\mb{E}\left [\left( \sup_{0\le t\le T} V(\theta_t) \right)^p \right]^{\frac{1}{p}}&\le  U_T + \frac{16 e^{\alpha T}}{\alpha \gamma n}p+\frac{e^{\alpha T} 8dL^2}{\nu m^2n}p,
\end{align*}
which finally results in:
\begin{align}\label{equation::upper_bound_v}
    \mb{E}\left [\left( \sup_{0\le t\le T} V(\theta_t) \right)^p \right]^{\frac{1}{p}}&\le \frac{2}{mn} \left( \frac{d}{\gamma} +\log B+\left(\frac{32}{\gamma}+\frac{8dL^2}{\nu m}\right)p\right)e^{\alpha T}.
\end{align}

Given this control on the moments of the supremum of $V(\theta_t)$ (recall $ V(\theta)=\frac{1}{2}e^{\alpha t}\|\theta-\theta^\ast\|^2_2 $), we finally construct the bound on the moments of $\|\theta_T-\theta^\ast\|$:
\begin{align*}
    \mb{E}[\|\theta_T-\theta^\ast\|^p]^{\frac{1}{p}} &= \mb{E}\left [e^{-\frac{p \alpha T}{2} } V(\theta_T)^{\frac{p}{2}}\right ]^{\frac{1}{p}} \\
    &\stackrel{(i)}{\le}  \mb{E}\left [e^{-\frac{p \alpha T}{2} } \left(\sup_{0\le t\le T} V(\theta_t)\right)^{\frac{p}{2}}\right ]^{\frac{1}{p}}\\
    &=  e^{-\frac{\alpha T}{2} } \left( \mb{E}\left [\left(\sup_{0\le t\le T} V(\theta_t)\right)^{\frac{p}{2}}\right ]^{\frac{2}{p}}\right)^{\frac{1}{2}}\\
    &\stackrel{(ii)}{\le} e^{-\frac{\alpha T}{2}} \left(\frac{2}{mn} \left( \frac{d}{\gamma} +\log B+\left(\frac{16}{\gamma}+\frac{4dL^2}{\nu m}\right)p\right)e^{\alpha T}\right)^{\frac{1}{2}}\\
    &= \sqrt{\frac{2}{mn}}\left(\frac{d}{\gamma} +\log B+\left(\frac{16}{\gamma}+\frac{4dL^2}{\nu m}\right)p\right)^{\frac{1}{2}}\\
\end{align*}
Inequality $(i)$ follows from taking the supremum of $V(\theta_t)$, inequality $(ii) $ from plugging in the upper bound from Equation \ref{equation::upper_bound_v}.

Taking the limit as $T \rar \infty$ and using Fatou's Lemma, we therefore have that the moments of $\mb{E}[\|\theta -\theta^\ast\|^p]^{\frac{1}{p}}$, with probability at least $1-\delta_1$, grow at a rate of $\sqrt{p}$:
\begin{align}
    \mb{E}[\|\theta -\theta^\ast\|^p]^{\frac{1}{p}}&\le \lim\inf_{T\rar \infty}\mb{E}[\|\theta_T-\theta^\ast\|^p]^{\frac{1}{p}} \\
    &=  \sqrt{\frac{2}{mn}}\left(\frac{d}{\gamma} +\log B+\left(\frac{16}{\gamma}+\frac{4dL^2}{\nu m}\right)p\right)^{\frac{1}{2}}.
\end{align}

To simplify notation, let $D=\left(\frac{d}{\gamma}+\log B\right)$, and $\sigma=\left(\frac{16}{\gamma}+\frac{4dL^2}{\nu m}\right)$. Therefore we have:

\begin{align}
    \mb{E}[\|\theta -\theta^\ast\|^p]^{\frac{1}{p}}\le \sqrt{\frac{2}{mn}\left( D +\sigma p\right) }
    \label{eq:posterior_moment_bound}
\end{align}

The result \eqref{eq:posterior_moment_bound}, guarantees us that the norm of the uncentered random variable $\theta-\theta^*$  has subgaussian tails. We make the parameters explicit via Markov's inequality:
\begin{align*}
    \mb{P}_{\theta \sim \mu^{(n)}_{a}}\left(\|\theta-\theta^*\|>\epsilon \right)&\le \frac{\mb{E}[\|\theta -\theta^\ast\|^p]}{\epsilon^p}\\
    &\le \left( \frac{\sqrt{2\left(D +\sigma p\right)} }{\sqrt{mn}\epsilon}\right)^p.
\end{align*}

Choosing $p=2\log{1/\delta_1}$ and letting
\[ \epsilon= e^{\frac{1}{2}} \sqrt{\frac{2}{mn}\left( D +\sigma p\right) } \]
gives us our desired solution:

\[ \mathbb{P}_{\theta \sim \mu^{(n)}_{a}[\gamma_a]} \left(\|\theta-\theta^\ast\|_2 > \sqrt{\frac{2e}{mn}\left(\frac{d}{\gamma} +\log B+\left(\frac{32}{\gamma}+\frac{8dL^2}{\nu m}\right)\log\left(1/\delta\right)\right)}\right)<\delta. \]
\end{proof}

\section{Introduction to the Langevin Algorithms}
We refer to the stochastic process represented by the following
stochastic differential equation as \emph{continuous-time Langevin dynamics}:
\begin{align*}
\rd \theta_t & = - \nabla U(\theta_t) \ \rd t + \sqrt{2}
\ \rd B_t.
\end{align*}
We have first encountered this continuous time Langevin dynamics in Eq.~\eqref{eq:sde}, where we have set 
$U(\theta) = - \gamma_a\lrp{n F_{n,a}(\theta) + \log\pi_a(\theta)} 
= - \gamma_a \sum_{i=1}^n \log p_a\lrp{x_{a,i}|\theta} - \gamma_a \log \pi_a(\theta)$ 
to prove posterior concentration of $\mu_a^{(n)}[\gamma_a]$.

One important feature of the Langevin dynamics is that its invariant distribution is proportional to $e^{-U(\theta)}$.
We can therefore also use it to generate samples distributed according to the unscaled posterior distribution $\mu_a^{(n)}$.
Via letting $U(\theta) = - \sum_{i=1}^n \log p_a\lrp{x_{a,i}|\theta} - \log \pi_a(\theta)$, we obtain a continuous time dynamics which generates trajectories that converge towards the posterior distribution $\mu_a^{(n)}$ exponentially fast.
To obtain an implementable algorithm, we apply Euler-Maruyama discretization to the Langevin dynamics and arrive at the following ULA update:
\begin{align*}
\theta_{(i+1) h^{(n)}} \sim \mathcal{N}\lrp{ \theta_{i h^{(n)}} - h^{(n)} \nabla U(\theta_{i h^{(n)}}) , 2 h^{(n)} \mI }.
\end{align*}
Since $\nabla U(\theta) = - \sum_{i=1}^n \nabla \log p_a\lrp{x_{a,i}|\theta} - \nabla \log \pi_a(\theta)$ in the above update rule, the computation complexity within each iteration of the Langevin algorithm grows with the number of data being collected, $n$.
To cope with the growing number of terms in $\nabla U(\theta)$, we take a stochastic gradient approach and define
$\hpot(\theta) = - \frac{n}{|\mathcal{S}|} \sum_{x_k\in\mathcal{S} } \nabla \log p_a(x_k|\theta) - \nabla \log \pi_a(\theta)$, where $\mathcal{S}$ is a subset of the dataset $\{x_{a,1},\cdots,x_{a,n}\}$.
For simplicity, we form $\mathcal{S}$ via subsampling uniformly from $\{x_{a,1},\cdots,x_{a,n}\}$.
Substituting the stochastic gradient $\nabla \hpot$ for the full gradient $\nabla U$ in the above update rule results in the SGLD algorithm.

\section{Proofs for Approximate MCMC Sampling}\label{section::approximate_mcmc_sampling}

In this Appendix we supply the proofs of concentration for approximate samples from both the ULA and SGLD MCMC methods. 
We will quantify the computation complexity of generating samples which are distributed close enough to the posterior.
We restate the assumptions required of the likelihood for the MCMC sampling methods to converge.
\begin{assump}{1-Uniform}[Assumption on the family $p_a(X|\theta_a)$: strengthened for approximate sampling]
Assume that $\log p_a(x|\theta_a)$ is $L_a$-smooth and $m_a$-strongly concave over the parameter $\theta_a$:
\begin{multline*}
    -\log p_a(x|\theta_a') -\nabla_{\theta} \log p_a(x|\theta_a')^\top \left( \theta_a - \theta_a' \right) + \frac{m_a}{2}\| \theta_a - \theta_a' \|^2
    \leq -\log p_a(x|\theta_a) \\ 
    \leq -\log p_a(x|\theta_a') -\nabla_{\theta} \log p_a(x|\theta_a')^\top \left( \theta_a - \theta_a' \right) + \frac{L_a}{2}\| \theta_a - \theta_a'\|^2, 
    \quad \forall \theta_a, \theta_a' \in \mathbb{R}^{d_a}, x \in \mb{R}.
\end{multline*}
\end{assump}

\setcounter{assumption}{2}
\begin{assumption}[Assumptions on the prior distribution] 
For every $a \in \mathcal{A}$ assume that $\log\pi_a(\theta_a)$ is concave with $L$-Lipschitz gradients for all $\theta_a \in \mb{R}^{d_a}$:

\[ \|\nabla_\theta \pi_a(\theta)-\nabla_\theta\pi_a(\theta')\|\le L_a\|\theta-\theta' \| \ \ \ \forall \theta,\theta' \in \mb{R}^{d_a}\]
\end{assumption}

\begin{assumption}[Joint Lipschitz smoothness of the family $\log p_a(X|\theta_a)$: for SGLD]
\label{assumption:sg_lip}
Assume a joint Lipschitz smoothness condition, which strengthens Assumptions~\ref{assumption:family_assumption} and~\ref{assumption:true_reward_assumption}
to impose the Lipschitz smoothness on the entire bivariate function $\log p_a(x;\theta)$:
\begin{align*}
\lrn{\nabla_\theta \log p_a(x|\theta_a) - \nabla_\theta \log p_a(x'|\theta_a)}
\leq L_a\lrn{\theta_a - \theta_a'} + L_a^*\lrn{x-x'},
\quad \forall \theta_a, \theta_a' \in \mathbb{R}^{d_a}, x, x' \in \mb{R}.
\end{align*}
\end{assumption}

We now begin by presenting the result for ULA.

\subsection{Convergence of the unadjusted Langevin algorithm (ULA)}

If function $\log p_a(x;\theta)$ satisfies the Lipschitz smoothness condition in Assumption~\ref{assumption:family_assumption}, then we can leverage gradient based MCMC algorithms to generate samples with convergence guarantees in the $p$-Wasserstein distance.
As stated in Algorithm~\ref{alg:Approximate_sampling}, we initialize ULA in the $n$-th round from the last iterate in the $(n-1)$-th round.
\begin{theorem}[ULA Convergence]
\label{theorem_ULA}
Assume that the likelihood $\log p_a(x;\theta)$ and prior $\pi_a$ satisfy Assumption~\ref{assumption:Uniform_strong_concavity} and Assumption~\ref{assumption:prior_assumption}.
We take step size 
$h^{(n)} =
\frac{1}{32} \frac{m_a}{n \lrp{L_a+\frac{1}{n}L_a}^2} 
= \mathcal{O}\lrp{ \frac{1}{n  L_a \kappa_a} 
}$
and number of steps 
$N = 640 \frac{\lrp{L_a+\frac{1}{n}L_a}^2}{m_a^2} 
= \mathcal{O}\lrp{ \kappa_a^2 }$
in Algorithm~\ref{alg:Approximate_sampling}. 
If the posterior distribution satisfy the concentration inequality that $\Ep{\theta\sim\mu_a^{(n)}}{\|\theta -\theta^\ast\|^p}^{\frac{1}{p}}\le \frac{1}{\sqrt{n}} \widetilde{D}$,
then for any positive even integer $p$, we have convergence of the ULA algorithm in $W_p$ distance to the posterior $\mu_a^{(n)}$: 
$W_p\lrp{\hmu_a^{(n)}, \mu_a^{(n)}} \leq \frac{2}{\sqrt{n}} \widetilde{D}$, $\forall \widetilde{D}\geq \sqrt{\frac{32}{m_a} d_a p}$.
\end{theorem}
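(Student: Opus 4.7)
The plan is to couple the ULA iterates with a continuous Langevin diffusion targeting $\mu_a^{(n)}$, and exploit both the strong log-concavity of the target and a warm start from round $n-1$. First, observe that the negative log-posterior $U(\theta) = -\sum_{i=1}^n \log p_a(x_{a,i}|\theta) - \log\pi_a(\theta)$ is $nm_a$-strongly convex and $(nL_a+L_a)$-smooth under Assumptions~1-Uniform and~3. Hence the diffusion $d\theta_t = -\nabla U(\theta_t)\,dt + \sqrt{2}\,dB_t$ has $\mu_a^{(n)}$ as its stationary distribution, and a synchronous coupling of two copies driven by the same Brownian motion contracts in $W_p$: $W_p(\mathcal{L}(\theta_t),\mathcal{L}(\theta'_t)) \le e^{-nm_a t}\,W_p(\mathcal{L}(\theta_0),\mathcal{L}(\theta'_0))$.

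Next I would quantify the single-step discretization error of ULA. Coupling one ULA step at state $\theta_{ih^{(n)}}$ with the diffusion over the same time interval using the same Brownian increment, a Taylor expansion of $\nabla U$ combined with It\^o calculus yields a local $L^p$ gap of order $\sqrt{h^{(n)}}\,(nL_a+L_a)\,\mathbb{E}[\|\theta_{ih^{(n)}} - \theta^*\|^p]^{1/p}$ plus lower-order terms in $h^{(n)}$. The moments $\mathbb{E}[\|\theta_{ih^{(n)}}-\theta^*\|^p]^{1/p}$ are in turn controlled uniformly in $i$ by a drift-and-martingale argument analogous to the continuous-time calculation behind Theorem~1: one defines $V_i = \tfrac12\|\theta_{ih^{(n)}}-\theta^*\|^2$, expands $V_{i+1}$ using the ULA recursion and strong convexity, and iterates to obtain a uniform bound of the same $O(\widetilde{D}/\sqrt{n})$ order as the continuous-time posterior moments.

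Combining the discrete synchronous coupling (which gives a per-iteration contraction factor of order $(1-\tfrac12 nm_a h^{(n)})$ once $h^{(n)} \lesssim 1/(nL_a+L_a)$) with the accumulated discretization error yields, by induction on $i$,
\[ W_p(\hmu_a^{(n)}, \mu_a^{(n)}) \le (1-\tfrac12 nm_a h^{(n)})^N\, W_p(\mu_{\text{init}}, \mu_a^{(n)}) + C\,\sqrt{h^{(n)}}\,(nL_a+L_a)\,\sup_i \mathbb{E}[\|\theta_{ih^{(n)}}-\theta^*\|^p]^{1/p}. \]
I would then invoke the warm start: by induction over $n$, the initialization $\mu_{\text{init}}$ is itself $W_p$-close to $\mu_a^{(n-1)}$, while the posterior-moment hypothesis bounds $W_p(\mu_a^{(n-1)}, \mu_a^{(n)}) \le 2\widetilde{D}/\sqrt{n}$ via the triangle inequality around $\theta^*$. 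Substituting $h^{(n)} = \Theta(1/(nL_a\kappa_a))$ and $N = \Theta(\kappa_a^2)$ drives the contraction factor to a small constant and keeps the discretization term at most $\widetilde{D}/\sqrt{n}$, yielding the target $W_p(\hmu_a^{(n)},\mu_a^{(n)}) \le 2\widetilde{D}/\sqrt{n}$.

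The main obstacle is the uniform $p$-th moment bound on the discrete iterates for arbitrary even $p$: existing Durmus--Moulines-style analyses treat primarily $p\le 2$, and extending to arbitrary $p$ requires a careful binomial-style expansion of $\|\theta_{(i+1)h^{(n)}}-\theta^*\|^p$ that tracks both the drift contraction and the Gaussian noise moments without losing constants that depend on $p$. A secondary difficulty is that warm-starting makes the initialization depend on the data through round $n-1$, so the induction over rounds must carry a high-probability event (analogous to $G_{a,n}(\delta_1)$ in the proof of Theorem~1) and the contraction bound must be invoked conditionally on that event.
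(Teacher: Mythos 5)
Your overall architecture is the same as the paper's: a synchronous coupling of the interpolated ULA process with the stationary diffusion for the $nm_a$-strongly-convex, $(nL_a+L_a)$-smooth potential, a per-iteration contraction-plus-discretization recursion, and an induction over rounds $n$ that uses the warm start together with the posterior-moment hypothesis and the triangle inequality through $\delta(\theta^*)$ to bound $W_p\lrp{\mu_a^{(n)},\hmu_a^{(n-1)}}$ by $O(\widetilde D/\sqrt{n})$. The one genuine divergence is how the iterate moments entering the discretization error are controlled: you propose a direct discrete-time Lyapunov/drift argument uniform in $i$, whereas the paper bounds $\mathbb{E}\|\theta_{ih^{(n)}}-\theta_U^*\|^p$ by $W_p^p(\mu_{ih^{(n)}},\mu^*)+W_p^p(\mu^*,\delta(\theta_U^*))$, controls the second term by a Herbst/norm-sub-Gaussian argument for strongly log-concave measures (Lemma~\ref{lemma:W_p_fixed_point}), and absorbs the first term back into the contraction. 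Your route is viable but, as you note, requires redoing the $p$-th moment analysis for arbitrary even $p$; the paper's route avoids a separate uniform moment bound entirely.

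There is, however, a quantitative error in your displayed recursion that prevents the argument from closing as stated. You write the accumulated discretization term as $C\sqrt{h^{(n)}}\,(nL_a+L_a)\,\sup_i\mathbb{E}[\|\theta_{ih^{(n)}}-\theta^*\|^p]^{1/p}$. With the prescribed $h^{(n)}=\frac{1}{32}\frac{nm_a}{(nL_a+L_a)^2}$ one has $\sqrt{h^{(n)}}\,(nL_a+L_a)=\sqrt{nm_a/32}$, so even with the optimal moment bound $\sup_i\mathbb{E}[\cdot]^{1/p}\asymp\widetilde D/\sqrt{n}$ this term is of order $\sqrt{m_a}\,\widetilde D$ --- constant in $n$, not $O(\widetilde D/\sqrt{n})$. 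The correct accounting (Lemmas~\ref{lemma:ULA_cvg} and~\ref{lemma:dev_int_bound}) shows that the drift-dependent part of the local error, of size $\lrp{h^{(n)}}^2\lip^2\,\mathbb{E}\|\theta_{ih^{(n)}}-\theta_U^*\|$, is precisely the piece that gets absorbed into the contraction factor when $h^{(n)}\lesssim \conv/\lip^2$, while the term that survives as an additive error is the Brownian-increment contribution, whose accumulated size is $O\bigl(\tfrac{\lip}{\conv}\sqrt{d_a p\, h^{(n)}}\bigr)=O\bigl(\sqrt{d_a p/(nm_a)}\bigr)$. This is exactly what generates the side condition $\widetilde D\geq\sqrt{\tfrac{32}{m_a}d_a p}$ in the theorem statement, a condition your sketch never uses and cannot recover from the bound as you wrote it. Your final remark about carrying a high-probability event through the induction is legitimate but is already handled at the level of the hypothesis here: the theorem conditions on the moment inequality $\mathbb{E}_{\theta\sim\mu_a^{(n)}}[\|\theta-\theta^*\|^p]^{1/p}\le\widetilde D/\sqrt{n}$ holding, and the data-dependent event is managed in Theorem~\ref{theorem3} and Lemma~\ref{lemma:W_p_delta_ULA}.
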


\begin{proof}[Proof of Theorem~\ref{theorem_ULA}]
We use induction to prove this theorem.
\begin{itemize}
\item 
For $n=1$, we initialize at $\theta_0$ which is within a $\sqrt{\frac{d_a}{m_a}}$-ball from the maximum of the target distribution, $\theta^*_p = \arg\max p_a(\theta|x_1)$, where $p_a(\theta|x_1) \propto p_a(x_1|\theta) \pi_a(\theta)$ and negative $\log p_a(\theta|x_1)$ is $m_a$-strongly convex and $\lrp{ L_a+L_a }$-Lipschitz smooth.
Invoking Lemma~\ref{lemma:W_p_fixed_point}, we obtain that for $\rd \mu_a^{(1)} = p_a(\theta|x_1) \rd \theta$, Wasserstein-$p$ distance between the target distribution and the point mass at its mode:
$
W_p \lrp{\mu_a^{(1)}, \delta\lrp{\theta_p^*}} 
\leq 5 \sqrt{\frac{1}{ m_a} d_a p}.
$
Therefore, $W_p \lrp{\mu_a^{(1)}, \delta\lrp{\theta_0}} \leq W_p \lrp{\mu_a^{(1)}, \delta\lrp{\theta_p^*}} + \lrn{\theta_0-\theta^*_p} \leq 6 \sqrt{\frac{1}{ m_a} d_a p}$.
We then invoke Lemma~\ref{lemma:ULA_cvg}, with initial condition $\mu_0 = \delta\lrp{\theta_p^*}$, to obtain the convergence in the $N$-th iteration of Algorithm~\ref{alg:Approximate_sampling} after the first pull to arm $a$:
\begin{align*}
W_p^p\lrp{\mu_{Nh^{(1)}}, \mu_a^{(1)}}
\leq \lrp{1 - \frac{ m_a}{8} h^{(1)}}^{p \cdot N} W_p^p\lrp{\delta\lrp{\theta_0}, \mu_a^{(1)}}
+ 2^{5p} \frac{\lrp{L_a+L_a}^{p}}{m_a^{p}} \lrp{d_a p}^{p/2} \lrp{h^{(1)}}^{p/2},
\end{align*}
where we have substituted in the strong convexity $ m_a$ for $\conv$ and the Lipschitz smoothness $ \lrp{L_a+L_a}$ for $\lip$.
Plugging in the step size $h^{(1)} = \frac{1}{32} \frac{m_a}{ \lrp{L_a+L_a}^2} 
\leq \min\lrbb{ \frac{m_a}{32  \lrp{L_a+L_a}^2}, \frac{1}{1024} \frac{m_a^2}{\lrp{L_a+L_a}^2} \frac{ {\widetilde{D}^2} }{d_a p} }$, and number of steps $N = \frac{20}{ m_a} \frac{1}{h^{(1)}} = 640 \frac{ \lrp{L_a+L_a}^2 }{ m_a^2 }$,
$
W_p^p\lrp{\hmu_a^{(1)}, \mu_a^{(1)}} 
= W_p^p\lrp{\mu_{N h^{(1)}}, \mu_a^{(1)}} 
\leq 2\widetilde{D}^p.
$
\item
Assume that after the $(n-1)$-th pull and before the $n$-th pull to the arm $a$, the ULA algorithm guarantees that $W_p\lrp{\hmu_a^{(n-1)}, \mu_a^{(n-1)}} \leq \frac{2}{\sqrt{n-1}} \widetilde{D}$.
We now prove that after the $n$-th pull and before the $(n+1)$-th pull, it is guaranteed that $W_p\lrp{\hmu_a^{(n)}, \mu_a^{(n)}} \leq \frac{2}{\sqrt{n}} \widetilde{D}$.
We first obtain from the assumed posterior concentration inequality:
\begin{align}
    W_p(\mu_a^{(n)}, \delta\lrp{\theta^*})
    \le \Ep{\theta\sim\mu_a^{(n)}}{\|\theta -\theta^\ast\|^p}^{\frac{1}{p}}\le \frac{1}{\sqrt{n}} \widetilde{D}.
\end{align}
Therefore, for $n\geq2$,
\begin{align*}
W_p\lrp{\mu_a^{(n)}, \mu_a^{(n-1)}} 
\le W_p(\mu_a^{(n)}, \delta\lrp{\theta^*}) + W_p(\mu_a^{(n-1)}, \delta\lrp{\theta^*})
\le \frac{3}{\sqrt{n}} \widetilde{D}.
\end{align*}
We combine this bound with the induction hypothesis and obtain that
$$
W_p\lrp{\mu_a^{(n)}, \hmu_a^{(n-1)}} 
\le W_p\lrp{\mu_a^{(n)}, \mu_a^{(n-1)}} + W_p\lrp{\mu_a^{(n-1)}, \hmu_a^{(n-1)}} 
\le \frac{8}{\sqrt{n}} \widetilde{D}.
$$

From Lemma~\ref{lemma:ULA_cvg}, we know that for $\conv = n \cdot  m_a$ and $\lip = n \cdot  L_a +  L_a$, with initial condition $\mu_0 = \hmu_a^{(n-1)}$, with accurate gradient,
\begin{align*}
W_p^p\lrp{\mu_{ih^{(n)}}, \mu_a^{(n)}}
\leq \lrp{1 - \frac{\conv}{8} h^{(n)}}^{p \cdot i} W_p^p\lrp{\hmu_a^{(n-1)}, \mu_a^{(n)}}
+ 2^{5p} \frac{\lip^{p}}{\conv^{p}} \lrp{d_a p}^{p/2} \lrp{h^{(n)}}^{p/2}.
\end{align*}
If we take step size 
$h^{(n)} = \frac{1}{32} \frac{\conv}{\lip^2}
\leq \min\lrbb{ \frac{\conv}{32\lip^2}, \frac{1}{1024} \frac{1}{n} \frac{\conv^2}{\lip^2} \frac{ {\widetilde{D}^2} }{d_a p} }$
and number of steps taken in the ULA algorithm from $(n-1)$-th pull till $n$-th pull to be: $\widehat{N} \geq \frac{20}{\conv} \frac{1}{h^{(n)}}$, 
\begin{align}
W_p^p\lrp{\hmu_a^{(n)}, \mu_a^{(n)}} 
= W_p^p\lrp{\mu_{\widehat{N} h^{(n)}}, \mu_a^{(n)}} 
\leq \lrp{1 - \frac{\conv}{8} h^{(n)}}^{p \cdot \widehat{N}} \frac{8^p \widetilde{D}^p}{n^{p/2}}
+ 2^{5p} \frac{\lip^{p}}{\conv^{p}} \lrp{d_a p}^{p/2} \lrp{h^{(n)}}^{p/2}
\leq \frac{2\widetilde{D}^p}{n^{p/2}},
\end{align}
leading to the result that $W_p\lrp{\hmu_a^{(n)}, \mu_a^{(n)}} \leq \frac{2}{\sqrt{n}} \widetilde{D}$.

Since at least one round would have past from the $(n-1)$-th pull to the $n$-th pull to arm $a$, taking number of steps in each round $t$ to be 
$N = \frac{20}{\conv} \frac{1}{h^{(n)}} = 640 \frac{\lrp{L_a+\frac{1}{n}L_a}^2}{m_a^2}$ suffices.
\end{itemize}
Therefore, 
$N = 640 \frac{\lrp{L_a+\frac{1}{n}L_a}^2}{m_a^2} 
= \mathcal{O}\lrp{ \frac{L_a^2}{m_a^2} }$.
\end{proof}

\subsection{Convergence of the stochastic gradient Langevin algorithm (SGLD)}
If $\log p_a(x;\theta)$ satisfies a stronger joint Lipschitz smoothness condition in Assumption~\ref{assumption:sg_lip}, similar guarantees can be obtained for stochastic gradient MCMC algorithms.

\begin{theorem}[SGLD Convergence]
\label{theorem_SG}
Assume that the family $\log p_a(x;\theta)$ and prior $\pi_a$ satisfy Assumption~\ref{assumption:Uniform_strong_concavity}, Assumption~\ref{assumption:prior_assumption}, and Assumption~\ref{assumption:sg_lip}.
We take number of data samples in the stochastic gradient estimate
$k = 32\frac{\lrp{L_a^*}^2}{m_a \nu_a} = 32 \kappa_a^2$,
step size 
$h^{(n)} =
\frac{1}{32} \frac{m_a}{n \lrp{L_a+\frac{1}{n}L_a}^2} 
= \mathcal{O}\lrp{ \frac{1}{n  L_a \kappa_a} 
}$
and number of steps 
$N = 1280 \frac{\lrp{L_a+\frac{1}{n}L_a}^2}{m_a^2} = \mathcal{O}\lrp{ \kappa_a^2 }$
in Algorithm~\ref{alg:Approximate_sampling}. 
If the posterior distribution satisfy the concentration inequality that $\Ep{\theta\sim\mu_a^{(n)}}{\|\theta -\theta^\ast\|^p}^{\frac{1}{p}}\le \frac{1}{\sqrt{n}} \widetilde{D}$,
then for any positive even integer $p$, we have convergence of the ULA algorithm in $W_p$ distance to the posterior $\mu_a^{(n)}$: 
$W_p\lrp{\hmu_a^{(n)}, \mu_a^{(n)}} \leq \frac{2}{\sqrt{n}} \widetilde{D}$, $\forall \widetilde{D}\geq \sqrt{\frac{32}{m_a} d_a p}$.
\end{theorem}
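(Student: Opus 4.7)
\textbf{Proof plan for Theorem~\ref{theorem_SG}.} My plan is to mirror the inductive structure used for ULA in Theorem~\ref{theorem_ULA}, and to isolate the additional work as coming from the stochastic gradient variance. As the induction hypothesis at round $n-1$, I would carry the bound $W_p\bigl(\hmu_a^{(n-1)}, \mu_a^{(n-1)}\bigr) \leq \frac{2}{\sqrt{n-1}} \widetilde{D}$. Combining this with the assumed posterior concentration $W_p\bigl(\mu_a^{(n)}, \delta(\theta^*)\bigr) \leq \widetilde{D}/\sqrt{n}$ (and the analogous bound at $n-1$) via the triangle inequality in $W_p$ gives $W_p\bigl(\mu_a^{(n)}, \hmu_a^{(n-1)}\bigr) \leq \frac{8}{\sqrt{n}}\widetilde{D}$, exactly as in the ULA proof. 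The base case $n=1$ is handled identically to ULA: initialize within a $\sqrt{d_a/m_a}$-ball of the mode of the one-data-point posterior, upgrade this to a $W_p$ bound via Lemma~\ref{lemma:W_p_fixed_point}, and then run SGLD.

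The key step is establishing an SGLD analog of Lemma~\ref{lemma:ULA_cvg}. I expect it to read
\[
W_p^p\bigl(\mu_{ih^{(n)}}, \mu_a^{(n)}\bigr)
\leq \bigl(1 - \tfrac{\conv}{8} h^{(n)}\bigr)^{p i} W_p^p\bigl(\hmu_a^{(n-1)}, \mu_a^{(n)}\bigr)
+ 2^{5p} \tfrac{\lip^{p}}{\conv^{p}} (d_a p)^{p/2} (h^{(n)})^{p/2}
+ E_{\mathrm{sg}}(k, h^{(n)}, n, p),
\]
where $\conv = n m_a$, $\lip = n L_a + L_a$, and $E_{\mathrm{sg}}$ is the extra contribution from replacing $\nabla U$ with the minibatch estimator $\nabla \hpot$. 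A one-step coupling computation along the synchronous coupling of the SGLD and ULA chains shows that $E_{\mathrm{sg}}$ scales like $(h^{(n)})^{p/2}$ times the $p$-th moment of $\nabla U - \nabla \hpot$, which by independence of the subsample reduces to bounding $\mathrm{Var}\bigl(\nabla_\theta \log p_a(X|\theta)\bigr)/k$ for a typical $X$.

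The main obstacle is controlling this per-sample gradient variance. Here Assumption~\ref{assumption:sg_lip} is doing the real work: $\nabla_\theta \log p_a(x|\theta)$ is $L_a^*$-Lipschitz in $x$, and the true reward $X$ is $\nu_a$-strongly log-concave (Assumption~\ref{assumption:true_reward_assumption}), so by the same concentration of Lipschitz functions of log-concave measures used in Proposition~\ref{prop:grad_conc_main}, the one-sample gradient deviation is $(L_a^*/\sqrt{\nu_a})$-sub-Gaussian. Averaging over a minibatch of size $k$ and rescaling by $n/|\mathcal{S}|$ then yields a variance term proportional to $n^2 (L_a^*)^2/(k\nu_a)$ per coordinate; this matches the discretization error (which after expansion carries $\lip^p = (n L_a + L_a)^p$) provided $k \gtrsim (L_a^*)^2/(m_a \nu_a) = \kappa_a^2$, precisely the stated batch size.

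With this lemma in hand, plugging in $h^{(n)} = \frac{1}{32} \frac{\conv}{\lip^2}$ and doubling $N$ to $1280\,\lrp{L_a+\tfrac{1}{n}L_a}^2/m_a^2$ (versus $640$ for ULA) absorbs the extra $E_{\mathrm{sg}}$ term into $\widetilde{D}^p/n^{p/2}$ in exactly the way the ULA argument absorbed the discretization term, closing the induction. A secondary technical point is that $\hmu_a^{(n-1)}$ (the warm-start) is random and correlated with the data used for the $n$-th-round gradients; I would resolve this by conditioning on the data stream $X_{a,1},\ldots,X_{a,n}$ and noting that the minibatch sampling and Gaussian noise are independent of the history, so all the couplings above are valid conditionally, and the high-probability event on the data is handled by the $1-\delta_1$ bookkeeping already present in Theorem~\ref{thm:theorem1}.
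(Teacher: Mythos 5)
Your plan matches the paper's proof essentially step for step: the same induction with the triangle-inequality bound $W_p\lrp{\mu_a^{(n)},\hmu_a^{(n-1)}}\le \frac{8}{\sqrt{n}}\widetilde D$, the same perturbed one-step contraction (the paper's Lemma~\ref{lemma:ULA_cvg} already carries the stochastic-gradient term $\Delta_p$), the same sub-Gaussian control of the minibatch error via Assumption~\ref{assumption:sg_lip} plus $\nu_a$-strong log-concavity of the data (the paper's Lemma~\ref{lemma:inaccurate_grad}), the same batch-size condition $k\gtrsim (L_a^*)^2/(m_a\nu_a)$, and the same doubling of $N$ to absorb the extra term. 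The only cosmetic differences are that the paper couples the interpolated SGLD process to the stationary continuous-time Langevin diffusion rather than to a ULA chain, and the accumulated stochastic-gradient error ends up as $2^{2p+3}\Delta_p/\conv^p$ rather than the $(h^{(n)})^{p/2}\Delta_p$ scaling you guessed; neither affects the argument.
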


\begin{proof}[Proof of Theorem~\ref{theorem_SG}]
Similar to Theorem~\ref{theorem_ULA}, we use induction to prove this theorem.
After the first pull to arm $a$, we take the same 
$640 \frac{\lrp{L_a+\frac{1}{n}L_a}^2}{m_a^2}$ number of steps to converge to $
W_p^p\lrp{\hmu_a^{(1)}, \mu_a^{(1)}}
\leq 2\widetilde{D}^p.
$

Assume that after the $(n-1)$-th pull and before the $n$-th pull to the arm $a$, the SGLD algorithm guarantees that $W_p\lrp{\hmu_a^{(n-1)}, \mu_a^{(n-1)}} \leq \frac{2}{\sqrt{n-1}} \widetilde{D}$.
We prove that after the $n$-th pull and before the $(n+1)$-th pull, it is guaranteed that $W_p\lrp{\hmu_a^{(n)}, \mu_a^{(n)}} \leq \frac{2}{\sqrt{n}} \widetilde{D}$.
Following the proof of Theorem~\ref{theorem_ULA}, we combine the assumed posterior concentration inequality and the induction hypothesis to obtain:
$$
W_p\lrp{\mu_a^{(n)}, \hmu_a^{(n-1)}} 
\le W_p\lrp{\mu_a^{(n)}, \mu_a^{(n-1)}} + W_p\lrp{\mu_a^{(n-1)}, \hmu_a^{(n-1)}} 
\le \frac{8}{\sqrt{n}} \widetilde{D}.
$$
Denote function $U$ as the negative log-posterior density over parameter $\theta$.
From Lemma~\ref{lemma:ULA_cvg}, we know that for $\conv = n \cdot  m_a$ and $\lip = n \cdot  L_a +  L_a$, with initial condition that $\mu_0=\hmu_a^{(n-1)}$,
if the difference between the stochastic gradient $\nabla \hpot$ and the exact one $\nabla \pot$ is bounded as $\E{\lrn{ \nabla \pot(\theta) - \nabla \hpot(\theta) }^p \big| \theta} \leq \Delta_p$, then
\begin{align*}
W_p^p\lrp{\mu_{ih^{(n)}}, \mu_a^{(n)}}
\leq \lrp{1 - \frac{\conv}{8} h^{(n)}}^{p \cdot i} W_p^p\lrp{\hmu_a^{(n-1)}, \mu_a^{(n)}}
+ 2^{5p} \frac{\lip^{p}}{\conv^{p}} \lrp{d_a p}^{p/2} \lrp{h^{(n)}}^{p/2}
+ 2^{2p+3} \frac{\Delta_p}{\conv^p}.
\end{align*}

We demonstrate in the following Lemma~\ref{lemma:inaccurate_grad} that 
\begin{align*}
\Delta_p \leq 2 \frac{n^{p/2}  }{k^{p/2}} \lrp{ \frac{\sqrt{d_a p} L_a^* }{\sqrt{\nu_a}} }^p.
\end{align*}

\begin{lemma}
\label{lemma:inaccurate_grad}
Denote $\hpot$ as the stochastic estimator of $\pot$.
Then for stochastic gradient estimate with $k$ data points,
\begin{align*}
\E{\lrn{ \nabla \hpot(\theta) - \nabla \pot(\theta) }^p \big| \theta} \leq 2 \frac{n^{p/2}  }{k^{p/2}} \lrp{ \frac{\sqrt{d_a p} L_a^* }{\sqrt{\nu_a}} }^p.
\end{align*}
\end{lemma}
If we take the number of samples in the stochastic gradient estimator $k=32\frac{\lrp{L_a^*}^2}{m_a \nu_a}$, then 
$\Delta_p \leq \frac{2}{32^{p/2}} \lrp{n \cdot m_a}^{p/2} \cdot \lrp{p \cdot d_a}^{p/2} 
\leq 2^{-2p-5} \frac{\conv^p \widetilde{D}^p }{n^{p/2}}$ for any $p\geq2$.
Consequently, $2^{2p+3} \frac{\Delta_p}{\conv^p} \leq \frac{1}{4} \frac{\widetilde{D}^p}{n^{p/2}}$.

If we take step size 
$h^{(n)} = \frac{1}{32} \frac{\conv}{\lip^2}
\leq \min\lrbb{ \frac{\conv}{32\lip^2}, \frac{1}{1024} \frac{1}{n} \frac{\conv^2}{\lip^2} \frac{ {\widetilde{D}^2} }{d_a p} }$
and number of steps taken in the SGLD algorithm from $(n-1)$-th pull till $n$-th pull to be: $\widehat{N} \geq \frac{40}{\conv} \frac{1}{h^{(n)}}$, 
\begin{align*}
W_p^p\lrp{\hmu_a^{(n)}, \mu_a^{(n)}} 
= W_p^p\lrp{\mu_{\widehat{N} h^{(n)}}, \mu_a^{(n)}} 
&\leq \lrp{1 - \frac{\conv}{8} h^{(n)}}^{p\cdot \widehat{N}} \frac{8^p \widetilde{D}^p}{n^{p/2}}
+ 2^{5p} \frac{\lip^{p}}{\conv^{p}} \lrp{d_a p}^{p/2} \lrp{h^{(n)}}^{p/2}
+ 2^{2p+3} \frac{\Delta_p}{\conv^p} \\
&\leq \frac{2\widetilde{D}^p}{n^{p/2}},
\end{align*}
leading to the result that $W_p\lrp{\hmu_a^{(n)}, \mu_a^{(n)}} \leq \frac{2}{\sqrt{n}} \widetilde{D}$.
Since at least one round would have past from the $(n-1)$-th pull to the $n$-th pull to arm $a$, taking number of steps in each round $t$ to be $N = \frac{40}{\conv} \frac{1}{h^{(n)}}$ suffices.

Therefore, $N = 1280 \frac{\lrp{L_a+\frac{1}{n}L_a}^2}{m_a^2} = \mathcal{O}\lrp{ \frac{L_a^2}{m_a^2} }$.
\end{proof}

\begin{proof}[Proof of Lemma~\ref{lemma:inaccurate_grad}]
We first develop the expression:
\begin{align*}
\E{\lrn{ \nabla \pot(\theta)  - \nabla \hpot(\theta) }^p }
&= n^p  \E{\lrn{ \frac{1}{n} \sum_{i=1}^n \nabla \log p(x_i|\theta_a) - \frac{1}{k} \sum_{j=1}^k \nabla \log p(x_j|\theta_a) }^p } \\
&= \frac{n^p  }{k^p} \E{\lrn{ \sum_{j=1}^k \lrp{ \frac{1}{n} \sum_{i=1}^n \nabla \log p(x_i|\theta_a) - \nabla \log p(x_j|\theta_a) } }^p }.
\end{align*}
We note that 
\begin{align*}
\nabla \log p(x_j|\theta_a) - \frac{1}{n} \sum_{i=1}^n \nabla \log p(x_i|\theta_a) 
= \frac{1}{n} \sum_{i\neq j} \lrp{\nabla \log p(x_j|\theta_a) - \nabla \log p(x_i|\theta_a)}.
\end{align*}
By the joint Lipschitz smoothness Assumption~\ref{assumption:sg_lip}, we know that $\nabla \log p(x|\theta_a)$ is a Lipschitz function of $x$:
\begin{align*}
\lrn{\nabla \log p(x_j|\theta_a) - \nabla \log p(x_i|\theta_a)}
\leq L_a^* \lrn{x_j-x_i}.
\end{align*}
On the other hand, the data $x$ follows the true distribution $p(x;\theta^*)$, which by Assumption~\ref{assumption:true_reward_assumption} is $\nu_a$-strongly log-concave.
Applying Theorem $3.16$ in~\citep{WainwrightBook}, we obtain that $\lrp{\nabla \log p(x_j|\theta_a) - \nabla \log p(x_i|\theta_a)}$ is $\frac{2 L_a^*}{\sqrt{\nu_a}}$-sub-Gaussian.
Leveraging the Azuma-Hoeffding inequality for martingale difference sequences~\citep{WainwrightBook}, we obtain that sum of the $(n-1)$ sub-Gaussian random variables:
\[ \lrp{ \nabla \log p(x_j|\theta_a) - \frac{1}{n} \sum_{i=1}^n \nabla \log p(x_i|\theta_a) },\]
is $\frac{2 \sqrt{n-1} L_a^*}{n\sqrt{ \nu_a} }$-sub-Gaussian.
In the same vein, $\lrp{ \sum_{j=1}^k \lrp{ \frac{1}{n} \sum_{i=1}^n \nabla \log p(x_i|\theta_a) - \nabla \log p(x_j|\theta_a) } }$ is $\frac{2 \sqrt{k (n-1)} L_a^*}{n \sqrt{\nu_a}}$-sub-Gaussian.
We then invoke the $\frac{2 \sqrt{d_a k (n-1)} L_a^*}{n \sqrt{\nu_a}}$-sub-Gaussianity of \[\lrn{ \sum_{j=1}^k \lrp{ \frac{1}{n} \sum_{i=1}^n \nabla \log p(x_i|\theta_a) - \nabla \log p(x_j|\theta_a) } }\] and have
\begin{align*}
\E{\lrn{ \sum_{j=1}^k \lrp{ \frac{1}{n} \sum_{i=1}^n \nabla \log p(x_i|\theta_a) - \nabla \log p(x_j|\theta_a) } }^p }
\leq 2\lrp{ \frac{2 \sqrt{d_a k (n-1) p} L_a^* }{e n \sqrt{\nu_a}} }^p.
\end{align*}
Therefore, 
\begin{align*}
\E{\lrn{ \nabla \pot(\theta)  - \nabla \hpot(\theta) }^p }
&= \frac{n^p  }{k^p} \E{\lrn{ \sum_{j=1}^k \lrp{ \frac{1}{n} \sum_{i=1}^n \nabla \log p(x_i|\theta_a) - \nabla \log p(x_j|\theta_a) } }^p } \\
&\leq 2 \frac{n^{p/2}  }{k^{p/2}} \lrp{ \frac{2 \sqrt{d_a p} L_a^* }{e \sqrt{\nu_a}} }^p 
\leq 2 \frac{n^{p/2}  }{k^{p/2}} \lrp{ \frac{\sqrt{d_a p} L_a^* }{\sqrt{\nu_a}} }^p.
\end{align*}

\end{proof}

\subsection{Convergence of (Stochastic Gradient) Langevin Algorithm within Each Round}
In this section, we examine convergence of the (stochastic gradient) Langevin algorithm to the posterior distribution over $a$-th arm at the $n$-th round.
Since only the $a$-th arm and $n$-th round are considered, we drop these two indices in the notation whenever suitable.
We also define some notation that will only be used within this subsection.
For example, we focus on the $\theta$ parameter and denote the posterior measure $\rd \mu_a^{(n)} (x;\theta) = \rd \mu^*(\theta) = \exp\lrp{ - \pot(\theta) } \rd \theta$ as the target distribution.
\begin{center}
 \begin{tabular}{|c | c |} 
 \hline
 Symbol & Meaning\\ [0.5ex] 
 \hline\hline
 $\mu^*$ & posterior distribution, $\mu_a^{n}$ \\
 \hline
 $\pot$ & potential (i.e., negative log posterior density) \\
 \hline
 $\theta_U^*$ & minimum of the potential $U$ (or mode of the posterior $\mu^*$) \\
 \hline
 $\theta_t$ &  interpolation between $\theta_{ih^{(n)}}$ and $\theta_{(i+1)h^{(n)}}$, for $t\in[{ih^{(n)}},{(i+1)h^{(n)}}]$\\
 \hline
 $\mu_t$ &  measure associated with $\theta_t$\\
 \hline
 $\theta_t^*$ &  an auxiliary stochastic process with initial distribution $\mu^*$ and follows dynamics~\eqref{eq:LD_dyn}\\
 \hline
 $\conv$ & strong convexity of the potential $U$, $n m_a$ \\
 \hline
 $\lip$ & Lipschitz smoothness of the potential $U$, $n L_a + L_a$ \\
 \hline
\end{tabular}
\end{center}
We also formally define the Wasserstein-$p$ distance used in the main text.
Given a pair of distributions $\mu$ and $\nu$ on $\real^d$, a \emph{coupling} $\gamma$ is a joint distribution over the product space $\real^d \times \real^d$ that has $\mu$ and $\nu$ as its marginal distributions.  
We let $\Gamma(\mu, \nu)$ denote the space of all possible couplings of $\mu$ and $\nu$.  
With this notation, the Wasserstein-$p$ distance is given by
\begin{align}
  \label{EqnWasserstein}
{W}^p (\mu, \nu) & = \inf_{\gamma\in\Gamma(\mu,
  \nu)} \int_{\mathbb{R}^d\times\mathbb{R}^d} \lrn{x-y}^p \rd
\gamma(x,y).
\end{align}

We use the following (stochastic gradient) Langevin algorithm to generate approximate samples from the posterior distribution $\mu_a^{(n)}(\theta)$ at $n$-th round. 
For $i=0,\cdots,T$,
\begin{align}
\theta_{(i+1)h^{(n)}} \sim \mathcal{N} \lrp{\theta_{i h^{(n)}} - h^{(n)} \nabla \hpot(\theta_{ih^{(n)}}), 2 h^{(n)}\mI },
\label{eq:LD_iter}
\end{align}
where $\nabla \hpot(\theta_{ih^{(n)}})$ is a stochastic estimate of $\nabla \pot(\theta_{ih^{(n)}})$.
We prove in the following Lemma~\ref{lemma:ULA_cvg} the convergence of this algorithm within $n$-th round.
\begin{lemma}
\label{lemma:ULA_cvg}
Assume that the potential $\pot$ is $\conv$-strongly convex and $\lip$-Lipschitz smooth.
Further assume that the $p$-th moment between the true gradient and the stochastic one satisfies: \begin{align*}
\E{\lrn{ \nabla \pot(\theta_{ih^{(n)}}) - \nabla \hpot(\theta_{ih^{(n)}}) }^p \Big| \theta_{ih^{(n)}}} \leq \Delta_p.
\end{align*}
Then at $i$-th step, for $\mu_{ih^{(n)}}$ following the (stochastic gradient) Langevin algorithm with $h\leq\frac{\conv}{32\lip^2}$,
\begin{align}
W_p^p\lrp{\mu_{ih^{(n)}}, \mu^*}
\leq \lrp{1 - \frac{\conv}{8} h^{(n)}}^{p \cdot i} W_p^p\lrp{\mu_{0}, \mu^*}
+ 2^{5p} \frac{\lip^{p}}{\conv^{p}} \lrp{dp}^{p/2} \lrp{h^{(n)}}^{p/2}
+ 2^{2p+3} \frac{\Delta_p}{\conv^p}.
\end{align}
\begin{remark}
When $\Delta_p=0$, Lemma~\ref{lemma:ULA_cvg} provides convergence rate of the unadjusted Langevin algorithm (ULA) with the exact gradient.
\end{remark}
\end{lemma}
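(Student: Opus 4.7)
\textbf{Plan for Lemma~\ref{lemma:ULA_cvg}.} The plan is to use a synchronous coupling between the SGLD iterates and an auxiliary continuous-time Langevin dynamics initialized at the stationary distribution $\mu^*$, then control the $p$-th power of the coupled norm through an Ito-type calculation that separates contraction, discretization, and stochastic-gradient-noise contributions. Concretely, let $\theta_t^*$ solve $\rd \theta_t^* = -\nabla U(\theta_t^*) \rd t + \sqrt{2} \rd B_t$ with $\theta_0^* \sim \mu^*$ (so the law of $\theta_t^*$ is $\mu^*$ for all $t$), and define the interpolation $\theta_t$ on each interval $[ih^{(n)}, (i+1)h^{(n)}]$ by $\rd \theta_t = -\nabla \hpot(\theta_{ih^{(n)}}) \rd t + \sqrt{2}\rd B_t$ driven by the same Brownian motion $B_t$. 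Couple $\theta_0$ and $\theta_0^*$ so as to realize $W_p^p(\mu_0, \mu^*)$.

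The key computation is the evolution of $\|\theta_t - \theta_t^*\|^p$ via Ito's formula. Since the Brownian parts cancel under the synchronous coupling, we obtain
\begin{align*}
\frac{\rd}{\rd t} \|\theta_t - \theta_t^*\|^p
= -p \|\theta_t - \theta_t^*\|^{p-2} \ip{\theta_t - \theta_t^*}{\nabla \hpot(\theta_{ih^{(n)}}) - \nabla U(\theta_t^*)}.
\end{align*}
I would then split the drift difference into three pieces: (i) the contraction piece $\nabla U(\theta_t) - \nabla U(\theta_t^*)$, which by $\conv$-strong convexity contributes $-\conv \|\theta_t-\theta_t^*\|^p$; (ii) the discretization error $\nabla U(\theta_{ih^{(n)}}) - \nabla U(\theta_t)$, bounded by $\lip \|\theta_t - \theta_{ih^{(n)}}\|$; and (iii) the stochastic-gradient error $\nabla \hpot(\theta_{ih^{(n)}}) - \nabla U(\theta_{ih^{(n)}})$, whose $p$-th moment is bounded by $\Delta_p$. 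Applying Young's inequality in the form $ab^{p-1} \le \tfrac{1}{q\lambda^q} a^q + \tfrac{p-1}{p}\lambda^{p/(p-1)} b^p$ with a small parameter $\lambda$, cross terms can be absorbed into a fraction (say one half) of the strong-convexity contraction term, leaving residuals proportional to $\lip^p \|\theta_t - \theta_{ih^{(n)}}\|^p / \conv^{p-1}$ and $\Delta_p/\conv^{p-1}$.

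Next I would control $\E \|\theta_t - \theta_{ih^{(n)}}\|^p$ for $t\in[ih^{(n)},(i+1)h^{(n)}]$ using the explicit SGLD update: $\theta_t - \theta_{ih^{(n)}}$ is the sum of a drift term of size $(t-ih^{(n)})\|\nabla \hpot(\theta_{ih^{(n)}})\|$ and a Gaussian term of variance $2(t-ih^{(n)})$ in each coordinate. Standard $p$-th moment bounds for Gaussians give a $(dp\cdot h^{(n)})^{p/2}$ scaling, while the drift term is dominated by $(h^{(n)})^p$ times a bounded-moment quantity for $h^{(n)} \le \conv/(32\lip^2)$. This yields a per-step recursion of the form
\begin{align*}
\frac{\rd}{\rd t}\E\|\theta_t - \theta_t^*\|^p
\le -\frac{p\,\conv}{2}\E\|\theta_t-\theta_t^*\|^p + C_1 \frac{\lip^p}{\conv^{p-1}}(dp)^{p/2}(h^{(n)})^{p/2} + C_2\frac{\Delta_p}{\conv^{p-1}}.
\end{align*}
Integrating over $[ih^{(n)},(i+1)h^{(n)}]$ using $1 - e^{-\conv h^{(n)}/2} \ge \conv h^{(n)}/4$ (valid when $h^{(n)}$ is small enough) produces a one-step contraction factor of $(1 - \conv h^{(n)}/8)^p$ on the previous $W_p^p$ and an additive residual of the claimed order.

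Unrolling the resulting recursion from step $0$ to step $i$, and summing the geometric series with the contraction factor, gives the final three-term bound with the stated constants. \textbf{The main obstacle} will be bookkeeping of the $p$-dependent constants through the Young-inequality splits so that the additive terms end up with the clean $2^{5p}$ and $2^{2p+3}$ prefactors: for general $p$ the naive application of Young's inequality produces factors like $p^p$ or $(p-1)^{p-1}$ that must be controlled by choosing the Young parameter proportional to $\conv/\lip$ and exploiting the restriction $h^{(n)} \le \conv/(32\lip^2)$. A secondary subtlety is taking expectation inside the Ito calculation rigorously when $p$ is large; this is handled by first truncating via a stopping-time argument and then passing to the limit using the a priori moment bounds on $\theta_t$ that follow from strong convexity of $U$.
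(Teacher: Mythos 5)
Your plan is essentially the paper's proof: a synchronous coupling between the interpolated SGLD process and a stationary continuous-time Langevin diffusion, an It\^o computation on $\|\theta_t-\theta_t^*\|^p$ with Young's inequality absorbing the drift-error cross term into half the strong-convexity contraction, a one-step bound on $\mathbb{E}\|\theta_t-\theta_{ih^{(n)}}\|^p$ via the explicit update, and a geometric unrolling of the resulting recursion. The only point your sketch glosses over is that the drift-moment term $\mathbb{E}\|\nabla \hpot(\theta_{ih^{(n)}})\|^p$ is not simply bounded but is itself proportional to $W_p^p(\mu_{ih^{(n)}},\mu^*)$ (plus moments of $\mu^*$ about its mode), and this contribution is exactly what the step-size restriction $h^{(n)}\le \conv/(32\lip^2)$ absorbs when the contraction factor is degraded from $1-\conv h^{(n)}/4$ to $1-\conv h^{(n)}/8$ -- a step your final recursion implicitly relies on and which the paper carries out via its Lemmas on $W_p^p(\mu^*,\delta(\theta_U^*))$.
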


\begin{proof}[Proof of Lemma~\ref{lemma:ULA_cvg}]
We first interpolate a continuous time stochastic process, $\theta_t$, between $\theta_{ih^{(n)}}$ and $\theta_{(i+1)h^{(n)}}$.
For $t\in[{ih^{(n)}},{(i+1)h^{(n)}}]$,
\begin{align}
\rd \theta_t = \nabla \hpot(\theta_{ih^{(n)}}) \rd t + \sqrt{2} \rd B_t,
\label{eq:LD_interp}
\end{align}
where $B_t$ is standard Brownian motion.
This process connects $\theta_{ih^{(n)}}$ and $\theta_{(i+1)h^{(n)}}$ and approximates the following stochastic differential equation which maintains the exact posterior distribution:
\begin{align}
\rd \theta^*_t = \nabla \pot(\theta^*_{t}) \rd t + \sqrt{2} \rd B_t.
\label{eq:LD_dyn}
\end{align}
For a $\theta_t^*$ initialized from $\mu^*$ and following equation~\eqref{eq:LD_dyn}, $\theta_t^*$ will always have distribution $\mu^*$.

We therefore design a coupling between the two processes: $\theta_t$ and $\theta^*_t$, where $\theta_t$ follows equation~\eqref{eq:LD_interp} (and thereby interpolates Algorithm~\ref{alg:Approximate_sampling}) and $\theta^*_t$ initializes from $\mu^*$ and follows equation~\eqref{eq:LD_dyn} (and thereby preserves $\mu^*$).
By studying the difference between the two processes, we will obtain the convergence rate in terms of the Wasserstein-$p$ distance.

For $t=ih^{(n)}$, we let $\theta_{ih^{(n)}}$ to couple optimally with $\theta^*_{ih^{(n)}}$, so that for 
\[{\lrp{\theta_{ih^{(n)}},\theta^*_{ih^{(n)}}} \sim \gamma^*\in\Gamma_{opt} \lrp{ \mu_{ih^{(n)}},\mu^*_{ih^{(n)}}} },\]
$\E{\lrn{ \theta_{ih^{(n)}}-\theta^*_{ih^{(n)}} }^p} = W_p^p\lrp{\mu_{ih^{(n)}}, \mu^*}$.
For $t\in[ih^{(n)},(i+1)h^{(n)}]$, we choose a synchronous coupling $\bar{\gamma}\lrp{\theta_t,\theta^*_t | \theta_{ih^{(n)}},\theta^*_{ih^{(n)}}} \in\Gamma\lrp{\mu_t(\theta_t|\theta_{ih^{(n)}}), \mu^*_t(\theta_t^*|\theta_{ih^{(n)}})}$ for the laws of $\theta_t$ and $\theta_t^*$. (A synchonous coupling simply means that we use the same Brownian motion $B_t$ in defining $\theta_t$ and $\theta_t^*$.) We then obtain that for any pair $(\theta_t, \theta_t^*)\sim\bar{\gamma}$,
\begin{align}
\frac{\rd \|\theta_t-\theta^*_t\|^p}{\rd t}
&= \|\theta_t-\theta^*_t\|^{p-2} \left\langle \theta_t-\theta^*_t, \frac{\rd \theta_t}{\rd t} - \frac{\rd \theta^*_t}{\rd t} \right\rangle \nonumber\\
&= p \|\theta_t-\theta^*_t\|^{p-2} \left\langle \theta_t-\theta^*_t, - \nabla \pot(\theta_{t}) + \nabla \pot(\theta^*_{t}) \right\rangle \nonumber\\
&+ p \|\theta_t-\theta^*_t\|^{p-2} \left\langle \theta_t-\theta^*_t, \nabla \pot(\theta_{t}) - \nabla \hpot(\theta_{ih^{(n)}}) \right\rangle \nonumber\\
&\leq - p \conv \lrn{\theta_t-\theta^*_t}^p
+ p \lrn{\theta_t-\theta^*_t}^{p-1} \lrn{\nabla \pot(\theta_{t}) - \nabla \hpot(\theta_{ih^{(n)}})}  \\
&\leq - p \conv \lrn{\theta_t-\theta^*_t}^p\\
& \qquad+ p \lrp{ \frac{p-1}{p} \lrp{\frac{p\conv}{2(p-1)}} \lrn{\theta_t-\theta^*_t}^p 
+ \frac{1}{p} \frac{1}{\lrp{\frac{p\conv}{2(p-1)}}^{p-1}} \lrn{\nabla \pot(\theta_{t}) - \nabla \hpot(\theta_{ih^{(n)}})}^p } \label{eq:Young}\\
&\leq - \frac{p\conv}{2} \lrn{\theta_t-\theta^*_t}^p
+ \frac{2^{p-1}}{\conv^{p-1}} \lrn{\nabla \pot(\theta_{t}) - \nabla \hpot(\theta_{ih^{(n)}})}^p,
\end{align}
where equation~\eqref{eq:Young} follows from Young's inequality.

Equivalently, we can obtain
\begin{align*}
\frac{\rd e^{\frac{p \conv}{2} t} \|\theta_t-\theta^*_t\|^p}{\rd t}
\leq e^{\frac{p \conv}{2} t}\frac{2^{p-1}}{\conv^{p-1}} \lrn{\nabla \pot(\theta_{t}) - \nabla \hpot(\theta_{ih^{(n)}})}^p.
\end{align*}
By the fundamental theorem of calculus,
\begin{align}
\|\theta_t-\theta^*_t\|^p
\leq e^{-\frac{p\conv}{2} \lrp{t-ih^{(n)}}} \lrn{\theta_{ih^{(n)}}-\theta^*_{ih^{(n)}}}^p
+ \frac{2^{p-1}}{\conv^{p-1}} \int_{i h^{(n)}}^t e^{ -\frac{p\conv}{2} (t-s) } \lrn{\nabla \pot(\theta_{s}) - \nabla \hpot(\theta_{ih^{(n)}})}^p \rd s.
\label{eq:int_bound}
\end{align}
Taking expectation on both sides, we obtain that
\begin{align}
\E{\|\theta_t-\theta^*_t\|^p} 
&= \E{ \E{\|\theta_t-\theta^*_t\|^p 
\ | \ \theta_{ih^{(n)}}, \theta^*_{ih^{(n)}}
} } \nonumber\\
&\leq e^{-\frac{p\conv}{2}\lrp{t-ih^{(n)}}} \E{\lrn{\theta_{ih^{(n)}}-\theta^*_{ih^{(n)}}}^p} \nonumber\\
&+ \frac{2^{p-1}}{\conv^{p-1}} \int_{i h^{(n)}}^t e^{ -\frac{p\conv}{2} (t-s) } \E{\lrn{\nabla \pot(\theta_{s}) - \nabla \hpot(\theta_{ih^{(n)}})}^p} \rd s.
\end{align}
In the above expression, the integral and expectation are exchanged using Tonelli's theorem, since
\[\lrn{\nabla \pot(\theta_{s}) - \nabla \hpot(\theta_{ih^{(n)}})}^p\]
is positive measurable.

We further expand the expected error $\E{\lrn{\nabla \pot(\theta_{s}) - \nabla \hpot(\theta_{ih^{(n)}})}^p}$:
\begin{align}
\lefteqn{ \E{\lrn{\nabla \pot(\theta_{s}) - \nabla \hpot(\theta_{ih^{(n)}})}^p} } \nonumber\\
&= \E{\lrn{\nabla \pot(\theta_{s}) - \nabla \pot(\theta_{ih^{(n)}})
+ \nabla \pot(\theta_{ih^{(n)}}) - \nabla \hpot(\theta_{ih^{(n)}})}^p} \nonumber\\
&\leq \frac{1}{2} \E{\lrn{2\lrp{ \nabla \pot(\theta_{s}) - \nabla \pot(\theta_{ih^{(n)}}) }}^p}
+ \frac{1}{2} \E{\lrn{2\lrp{ \nabla \pot(\theta_{ih^{(n)}}) - \nabla \hpot(\theta_{ih^{(n)}}) }}^p} \nonumber\\
&= 2^{p-1} \E{\lrn{ \nabla \pot(\theta_{s}) - \nabla \pot(\theta_{ih^{(n)}}) }^p}
+ 2^{p-1} \E{ \E{\lrn{ \nabla \pot(\theta_{ih^{(n)}}) - \nabla \hpot(\theta_{ih^{(n)}}) }^p \Big| \theta_{ih^{(n)}}} } \nonumber\\
&\leq 2^{p-1} \lip^p \cdot \E{\lrn{ \theta_{s} - \theta_{ih^{(n)}} }^p}
+ 2^{p-1} \Delta_p.
\end{align}
Plugging into equation~\eqref{eq:int_bound}, we have that
\begin{align}
\lefteqn{ \E{\|\theta_t-\theta^*_t\|^p} } \nonumber\\
&\leq e^{-\frac{p\conv}{2}\lrp{t-ih^{(n)}}} \E{\lrn{\theta_{ih^{(n)}}-\theta^*_{ih^{(n)}}}^p} \nonumber\\
&+ 2^{2p-2} \frac{\lip^p}{\conv^{p-1}} \int_{i h^{(n)}}^t e^{ -\frac{p\conv}{2} (t-s) } \E{\lrn{ \theta_{s} - \theta_{ih^{(n)}} }^p} \rd s
+ 2^{2p-2} (t-ih^{(n)}) \frac{\Delta_p}{\conv^{p-1}}.
\label{eq:dev_int_bound}
\end{align}
We provide an upper bound for $\int_{i h^{(n)}}^t e^{ -\frac{p\conv}{2} (t-s) } \E{\lrn{ \theta_{s} - \theta_{ih^{(n)}} }^p} \rd s$ in the following lemma.
\begin{lemma}
\label{lemma:dev_int_bound}
For $h^{(n)} \leq \frac{\conv}{32\lip^2}$, and for $t\in[ih^{(n)}, (i+1)h^{(n)}]$,
\begin{align}
\lefteqn{ \int_{i h^{(n)}}^t e^{ -\frac{p\conv}{2} (t-s) } \E{ \lrn{\theta_s-\theta_{ih^{(n)}}}^p \rd s } } \nonumber\\
&\leq 2^{3p-3} \lip^p \lrp{t-ih^{(n)}}^{p+1} W_p^p\lrp{\mu_{ih^{(n)}}, \mu^*} 
+ \frac{8^p}{2} \lrp{t-ih^{(n)}}^{p/2+1} \lrp{dp}^{p/2}
+ 2^{2p-2} (t-ih^{(n)})^{p+1} \cdot \Delta_p.
\end{align}
\end{lemma}
Applying this upper bound to equation~\eqref{eq:dev_int_bound}, we obtain that for $h^{(n)} \leq \frac{\conv}{32\lip^2}$, and for $t\in[ih^{(n)}, (i+1)h^{(n)}]$,
\begin{align}
\E{\|\theta_t-\theta^*_t\|^p}
&\leq e^{-\frac{p\conv}{2} \lrp{t-ih^{(n)}}} \E{\lrn{\theta_{ih^{(n)}}-\theta^*_{ih^{(n)}}}^p}
+ 2^{5p-5} \frac{\lip^{2p}}{\conv^{p-1}} \lrp{t-ih^{(n)}}^{p+1} W_p^p\lrp{\mu_{ih^{(n)}}, \mu^*} \nonumber\\
&+ 2^{5p-3} \frac{\lip^p}{\conv^{p-1}} \lrp{t-ih^{(n)}}^{p/2+1} \lrp{d p}^{p/2}
+ 2^{4p-4} \frac{\lip^p}{\conv^{p-1}} (t-ih^{(n)})^{p+1} \cdot \Delta_p \nonumber\\
& \qquad \qquad + 2^{2p-2} (t-ih^{(n)}) \frac{\Delta_p}{\conv^{p-1}} \nonumber\\
&\leq \lrp{1-\frac{\conv}{4} \lrp{t-ih^{(n)}}}^p \E{\lrn{\theta_{ih^{(n)}}-\theta^*_{ih^{(n)}}}^p}
+ 2^{5p-5} \frac{\lip^{2p}}{\conv^{p-1}} \lrp{t-ih^{(n)}}^{p+1} W_p^p\lrp{\mu_{ih^{(n)}}, \mu^*} \nonumber\\
&+ 2^{5p-3} \frac{\lip^p}{\conv^{p-1}} \lrp{t-ih^{(n)}}^{p/2+1} \lrp{d p}^{p/2}
+ 2^{2p} (t-ih^{(n)}) \frac{\Delta_p}{\conv^{p-1}}. \nonumber
\end{align}
Recognizing that $\displaystyle\widehat{\gamma}\lrp{\theta_t,\theta^*_t} = \Ep{\lrp{ \theta_{ih^{(n)}},\theta^*_{ih^{(n)}} }\sim\gamma^*}{\bar{\gamma}\lrp{ \theta_t,\theta^*_t | \theta_{ih^{(n)}},\theta^*_{ih^{(n)}} }}$ is a coupling, we achieve the upper bound for $W^p_p\lrp{\mu_t,\mu^*}$:
\begin{align}
W_p^p\lrp{\mu_{t}, \mu^*}
&\leq
\Ep{\lrp{\theta_t,\theta^*_t}\sim\widehat{\gamma}}{\|\theta_t-\theta^*_t\|^p} \nonumber\\
&\leq \lrp{1-\frac{\conv}{4} \lrp{t-ih^{(n)}}}^p \Ep{\lrp{ \theta_{ih^{(n)}},\theta^*_{ih^{(n)}} }\sim\gamma^*}{\lrn{\theta_{ih^{(n)}}-\theta^*_{ih^{(n)}}}^p}\nonumber\\
&\qquad +2^{5p-5} \frac{\lip^{2p}}{\conv^{p-1}} \lrp{t-ih^{(n)}}^{p+1} W_p^p\lrp{\mu_{ih^{(n)}}, \mu^*}+ 2^{5p-3} \frac{\lip^p}{\conv^{p-1}} \lrp{t-ih^{(n)}}^{p/2+1} \lrp{d p}^{p/2}\nonumber \\
& \qquad + 2^{2p} (t-ih^{(n)}) \frac{\Delta_p}{\conv^{p-1}}. \nonumber\\
&\leq \lrp{1-\frac{\conv}{8} \lrp{t-ih^{(n)}}}^p W_p^p\lrp{\mu_{ih^{(n)}}, \mu^*} 
+ 2^{5p-3} \frac{\lip^p}{\conv^{p-1}} \lrp{t-ih^{(n)}}^{p/2+1} \lrp{d p}^{p/2}\\
& \qquad + 2^{2p} (t-ih^{(n)}) \frac{\Delta_p}{\conv^{p-1}}.
\end{align}

Taking $t=(i+1)h^{(n)}$, the recurring bound reads
\begin{align*}
W_p^p\lrp{\mu_{(i+1)h^{(n)}}, \mu^*}
\leq \lrp{1 - \frac{\conv}{8} h^{(n)}}^p W_p^p\lrp{\mu_{ih^{(n)}}, \mu^*}
+ 2^{5p-3} \frac{\lip^p}{\conv^{p-1}} \lrp{dp}^{p/2} \lrp{h^{(n)}}^{p/2+1}
+ \frac{4^p}{\conv^{p-1}} h^{(n)} \Delta_p.
\end{align*}
We finish the proof by invoking the recursion $i$ times:
\begin{align}
W_p^p\lrp{\mu_{ih^{(n)}}, \mu^*}
&\leq \lrp{1 - \frac{\conv}{8} h^{(n)}}^p W_p^p\lrp{\mu_{(i-1)h^{(n)}}, \mu^*}
+ 2^{5p-3} \frac{\lip^p}{\conv^{p-1}} \lrp{dp}^{p/2} \lrp{h^{(n)}}^{p/2+1}
+ \frac{4^p}{\conv^{p-1}} h^{(n)} \Delta_p
\nonumber\\
&\leq \lrp{1 - \frac{\conv}{8} h^{(n)}}^{p \cdot i} W_p^p\lrp{\mu_{0}, \mu^*} \nonumber\\
&+ \sum_{k=0}^{i-1} \lrp{1 - \frac{\conv}{8} h^{(n)}}^{p \cdot k} 
\cdot \lrp{2^{5p-3} \frac{\lip^p}{\conv^{p-1}} \lrp{dp}^{p/2} \lrp{h^{(n)}}^{p/2+1}
+ \frac{4^p}{\conv^{p-1}} h^{(n)} \Delta_p} \nonumber\\
&\leq \lrp{1 - \frac{\conv}{8} h^{(n)}}^{p \cdot i} W_p^p\lrp{\mu_{0}, \mu^*}
+ 2^{5p} \frac{\lip^{p}}{\conv^{p}} \lrp{dp}^{p/2} \lrp{h^{(n)}}^{p/2}
+ 2^{2p+3} \frac{\Delta_p}{\conv^p}.
\end{align}

\end{proof}

\subsubsection{Supporting proofs for Lemma~\ref{lemma:ULA_cvg}}
\begin{proof}[Proof of Lemma~\ref{lemma:dev_int_bound}]
We use the update rule of ULA to develop $\int_{i h^{(n)}}^t e^{ -\frac{p\conv}{2} (t-s) } \E{\lrn{ \theta_{s} - \theta_{ih^{(n)}} }^p} \rd s$: 
\begin{align}
\lefteqn{ \int_{i h^{(n)}}^t e^{ -\frac{p\conv}{2} (t-s) } \E{ \lrn{\theta_s - \theta_{ih^{(n)}}}^p \rd s } } \nonumber\\
&= \int_{i h^{(n)}}^t e^{ -\frac{p\conv}{2} (t-s) } \E{ \lrn{ -  (s-ih^{(n)})\left( \nabla \pot(\theta_{ih^{(n)}}) - \lrp{ \nabla \pot(\theta_{ih^{(n)}}) - \nabla \hpot(\theta_{ih^{(n)}}) }\right)+ \sqrt{2}(B_s-B_{ih^{(n)}}) }^p } \rd s \nonumber\\
&\leq 2^{2p-2} (t-ih^{(n)})^p \int_{i h^{(n)}}^t e^{ -\frac{p\conv}{2} (t-s) } \E{ \lrn{ \nabla \pot(\theta_{ih^{(n)}})}^p } \rd s \nonumber\\
&+ 2^{3p/2-1} \int_{i h^{(n)}}^t e^{ -\frac{p\conv}{2} (t-s) } \E{ \lrn{ B_s-B_{ih^{(n)}} }^p } \rd s \nonumber\\
&+ 2^{2p-2} (t-ih^{(n)})^p \int_{i h^{(n)}}^t e^{ -\frac{p\conv}{2} (t-s) } \E{ \lrn{ \nabla \pot(\theta_{ih^{(n)}}) - \nabla \hpot(\theta_{ih^{(n)}}) }^p } \rd s \nonumber\\
&\leq 2^{2p-2} \lip^p \lrp{t-ih^{(n)}}^{p+1} \E{ \lrn{ \theta_{ih^{(n)}} - \theta_U^* }^p }
+ 2^{3p/2-1} \int_{i h^{(n)}}^t \E{ \lrn{ B_s-B_{ih^{(n)}} }^p } \rd s\nonumber\\
& \qquad \qquad+ 2^{2p-2} \lrp{t-ih^{(n)}}^{p+1} \Delta_p. 
\label{eq:ULA_dev_bound} 
\end{align}
where $\theta_U^*$ is the fixed point of $\pot$.
We then use the following lemma to simplify the above expression.
\begin{lemma}
\label{lemma:Brownian_moment_bound}
The integrated $p$-th moment of the Brownian motion can be bounded as:
\begin{align}
\int_{i h^{(n)}}^t 
\mathbb{E} \lrn{ B_s-B_{ih^{(n)}} }^p \rd s
\leq 2 \lrp{\frac{dp}{e}}^{p/2} \lrp{t-ih^{(n)}}^{p/2+1}.
\end{align}
\end{lemma}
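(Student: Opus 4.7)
The plan is to exploit the fact that for any $s > ih^{(n)}$, the Brownian increment $B_s - B_{ih^{(n)}}$ is a centered $d$-dimensional Gaussian with covariance $(s - ih^{(n)}) I_d$. Consequently it has the same distribution as $\sqrt{s - ih^{(n)}}\, Z$, where $Z \sim \mathcal{N}(0, I_d)$. Taking norms and raising to the $p$-th power yields the identity
\begin{equation*}
\mathbb{E}\lrn{B_s - B_{ih^{(n)}}}^p = (s - ih^{(n)})^{p/2}\, \mathbb{E}\lrn{Z}^p,
\end{equation*}
which reduces the problem to bounding a single moment of the standard Gaussian norm and then performing a one-dimensional integral.

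The next step is to establish $\mathbb{E}\lrn{Z}^p \leq 2 (dp/e)^{p/2}$. Since $\lrn{Z}^2$ has a $\chi^2_d$ distribution, the exact value is $\mathbb{E}\lrn{Z}^p = 2^{p/2}\,\Gamma\!\lrp{\frac{d+p}{2}}/\Gamma\!\lrp{\frac{d}{2}}$. A Stirling-type bound on the ratio of Gamma functions (or equivalently the fact that $\Gamma(x+a)/\Gamma(x) \leq (x+a)^a$ together with the elementary inequality $\lrp{1+a/x}^x \leq e^a$) gives $\Gamma((d+p)/2)/\Gamma(d/2) \leq ((d+p)/2)^{p/2}$, from which one concludes $\mathbb{E}\lrn{Z}^p \leq (d+p)^{p/2}$; absorbing constants and using $d+p \leq 2dp/e \cdot \text{const}$ (for $p \geq 2$, which is the regime of interest here since $p$ is a positive even integer in Lemma~\ref{lemma:ULA_cvg}) then yields the stated form $2(dp/e)^{p/2}$. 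Alternatively, one can invoke norm sub-Gaussian moment bounds for $Z$ (as used in Proposition~\ref{prop:grad_conc}) which also produce the same scaling up to a universal constant.

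The final step is to integrate in $s$:
\begin{equation*}
\int_{ih^{(n)}}^t (s - ih^{(n)})^{p/2}\, \rd s = \frac{(t - ih^{(n)})^{p/2 + 1}}{p/2 + 1} \leq (t - ih^{(n)})^{p/2 + 1},
\end{equation*}
since $p \geq 2$ makes the prefactor at most $1/2 \leq 1$. Combining this with the moment bound on $\mathbb{E}\lrn{Z}^p$ gives
\begin{equation*}
\int_{ih^{(n)}}^t \mathbb{E}\lrn{B_s - B_{ih^{(n)}}}^p \rd s \leq 2\lrp{\frac{dp}{e}}^{p/2} (t - ih^{(n)})^{p/2 + 1},
\end{equation*}
which is precisely the claimed inequality.

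The main obstacle is obtaining the clean constant $2(dp/e)^{p/2}$ rather than a generic universal-constant bound; this forces one to be careful with the Gamma function ratio, either through an explicit Stirling bound or by tracking constants through a sub-Gaussian argument. The integration step and the reduction to moments of a standard Gaussian are routine.
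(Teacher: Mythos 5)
Your reduction to $\mathbb{E}\lrn{B_s-B_{ih^{(n)}}}^p=(s-ih^{(n)})^{p/2}\,\mathbb{E}\lrn{Z}^p$ and the exact integration in $s$ are fine (indeed slightly tighter than the paper, which simply bounds the integral by $(t-ih^{(n)})$ times the value of the integrand at $s=t$, which is its supremum since the moment is increasing in $s$). The gap is in the Gaussian moment bound. From $\mathbb{E}\lrn{Z}^p=2^{p/2}\Gamma\lrp{\tfrac{d+p}{2}}/\Gamma\lrp{\tfrac{d}{2}}\leq (d+p)^{p/2}$ you cannot reach $2\lrp{dp/e}^{p/2}$ by "absorbing constants": the ratio is
\[
\frac{(d+p)^{p/2}}{2\lrp{dp/e}^{p/2}}=\frac{1}{2}\lrp{e\lrp{\tfrac{1}{d}+\tfrac{1}{p}}}^{p/2},
\]
which for $d=1$ (a case the paper genuinely needs, e.g.\ its one-dimensional experiments) is at least $\tfrac{1}{2}e^{p/2}$ and diverges with $p$; already at $d=1,p=2$ your intermediate bound gives $3$ while the target is $4/e\approx1.47$. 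The extra factor $(p/2+1)$ you gain from the exact integral does not rescue this, since $e^{p/2}$ beats any polynomial in $p$. The fallback you mention (norm sub-Gaussian moment bounds) has the same problem: it yields $\lrp{C\sqrt{dp}}^{p}$ with $C\geq 1$, whereas the target base is $\sqrt{dp/e}$.

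The fix — and the paper's actual route — is to separate the dimension dependence multiplicatively rather than additively: for even $p$, expanding $\lrp{\sum_i v_i^2}^{p/2}$ multinomially and using $(2k_1-1)!!\cdots(2k_d-1)!!\leq(p-1)!!$ for $\sum_i k_i=p/2$ gives
\[
\mathbb{E}\lrn{v}^p\leq d^{p/2}\,\frac{p!}{2^{p/2}(p/2)!},
\]
i.e.\ $d^{p/2}$ times the $p$-th moment of a \emph{one-dimensional} standard Gaussian; Stirling applied to $p!/\lrp{2^{p/2}(p/2)!}$ then yields exactly $2\lrp{p/e}^{p/2}$, and the $d^{p/2}$ factor multiplies in to give $2\lrp{dp/e}^{p/2}$. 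Your Gamma-ratio bound $\Gamma(x+p/2)/\Gamma(x)\leq(x+p/2)^{p/2}$ is correct but discards precisely the cancellation between $d$ and $p$ that the claimed constant requires.
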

We also provide bound for the $p$-th moment of $\lrn{ \theta_{ih^{(n)}} - \theta_U^* }$.
\begin{lemma}
\label{lemma:theta_diff_bound}
For $\theta_{ih^{(n)}} \sim \mu_{ih^{(n)}}$, 
\begin{align}
\mathbb{E} \lrn{ \theta_{ih^{(n)}} - \theta_U^* }^p
\leq 2^{p-1} W_p^p\lrp{\mu_{ih^{n}}, \mu^*} 
+ \frac{10^p}{2} \lrp{\frac{dp}{\conv}}^{p/2}.
\end{align}
\end{lemma}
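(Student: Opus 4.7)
The plan is to decompose $\theta_{ih^{(n)}} - \theta_U^*$ through an optimal coupling to $\mu^*$ and handle the two resulting pieces separately. Specifically, I would let $\theta^* \sim \mu^*$ be coupled optimally with $\theta_{ih^{(n)}} \sim \mu_{ih^{(n)}}$, so that $\mathbb{E}\lrn{\theta_{ih^{(n)}} - \theta^*}^p = W_p^p(\mu_{ih^{(n)}}, \mu^*)$. By the triangle inequality and the elementary inequality $(a+b)^p \leq 2^{p-1}(a^p + b^p)$, one has pointwise
\begin{align*}
\lrn{\theta_{ih^{(n)}} - \theta_U^*}^p
\leq 2^{p-1}\lrn{\theta_{ih^{(n)}} - \theta^*}^p + 2^{p-1}\lrn{\theta^* - \theta_U^*}^p.
\end{align*}
Taking expectations produces $2^{p-1} W_p^p(\mu_{ih^{(n)}}, \mu^*)$ as the first summand, exactly the form that appears in the claimed bound.

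For the second summand, I would invoke a standard $p$-th moment bound for strongly log-concave measures around their mode. The target $\mu^* \propto e^{-\pot}$ is $\conv$-strongly log-concave with mode $\theta_U^*$, so one expects $\mathbb{E}_{\theta^*\sim\mu^*} \lrn{\theta^* - \theta_U^*}^p \leq 5^p (dp/\conv)^{p/2}$; this is precisely Lemma~\ref{lemma:W_p_fixed_point}, which the authors already applied in Theorem~\ref{theorem_ULA} to obtain $W_p(\mu_a^{(1)}, \delta(\theta_p^*)) \leq 5\sqrt{dp/m_a}$, and whose $p$-th power is exactly the $p$-th moment. Multiplying by $2^{p-1}$ gives $\frac{10^p}{2}(dp/\conv)^{p/2}$, and summing the two contributions recovers the statement.

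The main obstacle is the moment bound for the log-concave measure around its mode, which is the only nontrivial ingredient; the optimal-coupling decomposition is routine. The cleanest route is simply to cite Lemma~\ref{lemma:W_p_fixed_point} (which is already established earlier in the appendix and used in the ULA convergence argument), so no additional analytic machinery is required. An alternative, if one wanted a self-contained derivation, would be to use the Brascamp--Lieb concentration inequality or the log-Sobolev inequality for strongly log-concave measures to derive sub-Gaussian tails of $\lrn{\theta^* - \theta_U^*}$ with parameter $\sqrt{d/\conv}$ and then convert to the $p$-th moment bound, but this just reproves the cited lemma with possibly a slightly different constant.
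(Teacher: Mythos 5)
Your proof is correct and follows essentially the same route as the paper: both pass through $\mu^*$ via the triangle inequality (the paper applies it at the level of $W_p$ after identifying $\mathbb{E}\lrn{\theta_{ih^{(n)}}-\theta_U^*}^p = W_p^p(\mu_{ih^{(n)}},\delta(\theta_U^*))$, while you apply it pointwise under the optimal coupling, which is the same argument), use $(a+b)^p\le 2^{p-1}(a^p+b^p)$, and invoke Lemma~\ref{lemma:W_p_fixed_point} for the $p$-th moment of $\mu^*$ about its mode, yielding $2^{p-1}\cdot 5^p = 10^p/2$.
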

Plugging the results into equation~\eqref{eq:ULA_dev_bound}, we obtain that for $h^{(n)} \leq \frac{\conv}{32\lip^2}$, and for $t\in[ih^{(n)}, (i+1)h^{(n)}]$,
\begin{align}
\lefteqn{ \int_{i h^{(n)}}^t e^{ -\frac{p\conv}{2} (t-s) } \E{ \lrn{\theta_s-\theta_{ih^{(n)}}}^p \rd s } } \nonumber\\
&\leq 2^{3p-3} \lip^p \lrp{t-ih^{(n)}}^{p+1} W_p^p\lrp{\mu_{ih^{n}}, \mu^*} 
+ \frac{40^p}{8} \lip^p \lrp{t-ih^{(n)}}^{p+1} \lrp{\frac{dp}{\conv}}^{p/2} \nonumber\\
&+ \lrp{ \frac{8}{e} }^{p/2} \lrp{d p}^{p/2} \lrp{t-ih^{(n)}}^{p/2+1}
+ 2^{2p-2} (t-ih^{(n)})^{p+1} \cdot \Delta_p \nonumber\\
&\leq 2^{3p-3} \lip^p \lrp{t-ih^{(n)}}^{p+1} W_p^p\lrp{\mu_{ih^{n}}, \mu^*} 
+ \frac{8^p}{2} \lrp{t-ih^{(n)}}^{p/2+1} \lrp{dp}^{p/2}
+ 2^{2p-2} (t-ih^{(n)})^{p+1} \Delta_p.
\end{align}

\end{proof}

\begin{proof}[Proof of Lemma~\ref{lemma:Brownian_moment_bound}]
The Brownian motion term can be upper bounded by higher moments of a normal random variable: 
\[
\int_{i h^{(n)}}^t 
\mathbb{E} \lrn{ B_s-B_{ih^{(n)}} }^p \rd s
\leq \lrp{t-ih^{(n)}} \mathbb{E} \lrn{ B_t-B_{ih^{(n)}} }^p
= \lrp{t-ih^{(n)}}^{p/2+1} \mathbb{E} \lrn{ v }^p,
\]
where $v$ is a standard $d$-dimensional normal random variable.
We then invoke the $\sqrt{d}$ sub-Gaussianity of $\lrn{ v }$ and have (assuming $p$ to be an even integer):
$$
\mathbb{E} \lrn{ v }^p
\leq \frac{p!}{2^{p/2}\lrp{p/2}!} d^{p/2} %
\leq \frac{ e^{1/12p} \sqrt{2\pi p} (p/e)^p }{ 2^{p/2} \sqrt{\pi p} (p/2e)^{p/2} } d^{p/2}
\leq 2 \lrp{\frac{dp}{e}}^{p/2}.
$$
\end{proof}

\begin{proof}[Proof of Lemma~\ref{lemma:theta_diff_bound}]
For the $\mathbb{E} \lrn{ \theta_{ih^{(n)}} - \theta_U^* }^p$ term, we note that any coupling of a distribution with a delta measure is their product measure.
Therefore, $\mathbb{E} \lrn{ \theta_{ih^{(n)}} - \theta_U^* }^p$ relates to the $p$-Wasserstein distance between $\mu_{ih^{(n)}}$ and the delta measure at the fixed point $\theta_U^*$, $\delta\lrp{\theta_U^*}$:
\begin{align*}
\mathbb{E} \lrn{ \theta_{ih^{(n)}} - \theta_U^* }^p
= W_p^p \lrp{\mu_{ih^{(n)}}, \delta\lrp{\theta_U^*}}
&\leq \lrp{W_p\lrp{\mu_{ih^{(n)}}, \mu^*} + W_p \lrp{\mu^*, \delta\lrp{\theta_U^*}} }^p \\
&\leq 2^{p-1} W_p^p\lrp{\mu_{ih^{(n)}}, \mu^*} + 2^{p-1} W_p^p\lrp{\mu^*, \delta\lrp{\theta_U^*}}.
\end{align*}
We then bound $W_p^p\lrp{\mu^*, \delta\lrp{\theta_U^*}}$ in the following lemma.
\begin{lemma}
\label{lemma:W_p_fixed_point}
Assume the posterior $\mu^*$ is $\conv$-strongly log-concave.
Then for $\theta_U^* = \arg\max \mu^*$,
\begin{align}
W_p^p \lrp{\mu^*, \delta\lrp{\theta_U^*}} 
\leq 5^p \lrp{\frac{dp}{\conv}}^{p/2}.
\end{align}
\end{lemma}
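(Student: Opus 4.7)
The first step is to reduce the Wasserstein distance to a moment computation: since any coupling of a probability measure with a point mass is the product measure, we have
\[
W_p^p(\mu^*, \delta(\theta_U^*)) \;=\; \Ep{\theta \sim \mu^*}{\|\theta - \theta_U^*\|^p}.
\]
Thus it suffices to bound the raw $p$-th moment of $X \mydefn \|\theta - \theta_U^*\|$ when $\theta \sim \mu^*$.

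Next I would control the second moment via an integration-by-parts identity. Writing $V = -\log \mu^*$, strong log-concavity means $V$ is $\conv$-strongly convex with $\nabla V(\theta_U^*) = 0$. Applying divergence theorem to the vector field $f(\theta) = \theta - \theta_U^*$ against $e^{-V(\theta)}\,\rd\theta$ yields
\[
\Ep{\mu^*}{\langle \nabla V(\theta), \theta - \theta_U^*\rangle} \;=\; d,
\]
while strong convexity gives $\langle \nabla V(\theta), \theta - \theta_U^*\rangle \geq \conv \|\theta - \theta_U^*\|^2$ pointwise. Combining the two yields $\Ep{\mu^*}{X^2} \leq d/\conv$, and hence $\bar{X} \mydefn \Ep{\mu^*}{X} \leq \sqrt{d/\conv}$ by Jensen.

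For the higher moments I would invoke the dimension-free concentration of strongly log-concave measures (Bobkov's inequality, equivalently the Herbst argument applied to the log-Sobolev inequality with constant $\conv$): for every $1$-Lipschitz function $g$ on $\real^{d}$,
\[
\Prob_{\theta \sim \mu^*}\!\lrp{|g(\theta) - \Ep{\mu^*}{g}| > t} \leq 2\exp(-\conv t^2/2).
\]
Since $g(\theta) = \|\theta - \theta_U^*\|$ is $1$-Lipschitz, this shows $X - \bar{X}$ is sub-Gaussian with parameter $\mathcal{O}(1/\sqrt{\conv})$. Writing $X^p \leq 2^{p-1}(\bar{X}^p + |X - \bar{X}|^p)$ and computing $\Ep{}{|X-\bar{X}|^p} = \int_0^\infty p t^{p-1} \Prob(|X-\bar{X}|>t)\,\rd t$ via the Gaussian tail gives a bound of the form $C \cdot p \cdot (p/\conv)^{p/2} \cdot \Gamma(p/2)/(p/2)^{p/2}$. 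Combining the $\bar{X}^p$ contribution (of order $(d/\conv)^{p/2}$) with this sub-Gaussian moment and using $d \geq 1$, $p \geq 1$ collapses both terms into a common bound of the form $C_0^p \, (dp/\conv)^{p/2}$.

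The main obstacle is to keep the numerical constant down to $5$ after the $2^{p-1}$ loss from $(a+b)^p$-expansion, the factor of $2$ from the two-sided concentration tail, and the Stirling estimate on $\Gamma(p/2)$; a careless combination gives roughly $4^p$ or $(2e)^p$, which is already within the stated $5^p$ target, but one needs to verify this bookkeeping carefully and use $p \leq 2^p$ to absorb the residual $p$-factor in front of the sub-Gaussian moment.
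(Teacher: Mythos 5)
Your proposal is correct, and it takes a genuinely different route through the one delicate point of this lemma: relating the measure to its \emph{mode} rather than its mean. The paper first passes to the mean via the triangle inequality in Wasserstein space, bounds the mean--mode gap by $\sqrt{3/\conv}$ using the mean--mode inequality for unimodal distributions (Basu--DasGupta), and then controls $W_p\lrp{\mu^*,\delta\lrp{\Ep{\theta\sim\mu^*}{\theta}}}$ as a $p$-th central moment via the Herbst argument applied to the norm-sub-Gaussian vector $\theta-\Ep{\mu^*}{\theta}$. You instead work with the scalar $X=\|\theta-\theta_U^*\|$ directly: the integration-by-parts identity $\Ep{\mu^*}{\langle\nabla V(\theta),\theta-\theta_U^*\rangle}=d$ combined with $\conv$-strong convexity and $\nabla V(\theta_U^*)=0$ gives $\Ep{\mu^*}{X^2}\le d/\conv$, which replaces the mean--mode lemma entirely, and the higher moments come from the same Lipschitz-concentration/Herbst machinery, now applied to the $1$-Lipschitz scalar $g(\theta)=\|\theta-\theta_U^*\|$ rather than to the full vector. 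Your organization is arguably cleaner: the dimension $d$ enters only once (through $\bar X\le\sqrt{d/\conv}$) while the central fluctuations of $X$ are dimension-free, whereas the paper pays $\sqrt{d}$ in the central-moment term as well; both routes comfortably land inside $5^p(dp/\conv)^{p/2}$, and using Minkowski in place of your $2^{p-1}(a^p+b^p)$ split would make the constant bookkeeping you flag at the end essentially automatic. The only hypotheses you should state explicitly are the normalization and decay needed to justify the divergence-theorem step, which hold here since $e^{-V}$ is a strongly log-concave density.
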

Therefore, 
\begin{align*}
\mathbb{E} \lrn{ \theta_{ih^{(n)}}^{(n)} - \theta^*_n }^p
\leq 2^{p-1} W_p^p\lrp{\mu_{ih^{(n)}}, \mu^*} 
+ \frac{10^p}{2} \lrp{\frac{dp}{\conv}}^{p/2}.
\end{align*}
\end{proof}

\begin{proof}[Proof of Lemma~\ref{lemma:W_p_fixed_point}]
We first decompose $W_p\lrp{\mu^*, \delta\lrp{\theta_U^*}}$ into two terms:
$$W_p\lrp{\mu^*, \delta\lrp{\theta_U^*}}
\leq W_p\lrp{\mu^*, \delta\lrp{\Ep{\theta\sim\mu^*}{\theta}}} + \lrn{\theta_U^*-\Ep{\theta\sim\mu^*}{\theta}}.$$
By the celebrate relation between mean and mode for $1$-unimodal distributions~\citep[see, e.g.,][Theorem $7$]{Basu_96}, we can first bound the difference between mean and mode: 
$$
\lrp{\theta_U^*-\Ep{\theta\sim\mu^*}{\theta}}^\rT \Sigma^{-1} \lrp{\theta_U^*-\Ep{\theta\sim\mu^*}{\theta}}
\leq 3.
$$
where $\Sigma$ is the covariance matrix of $\mu^*$.
Therefore, 
\begin{align}
\lrn{\theta_U^*-\Ep{\theta\sim\mu^*}{\theta}}^2 \leq \frac{3}{\conv}.
\label{eq:mean_mode}
\end{align}

We then bound $W_p\lrp{\mu^*, \delta\lrp{\Ep{\theta\sim\mu^*}{\theta}}}$. 
Since the coupling between $\mu^*$ and the delta measure $\delta\lrp{\Ep{\theta\sim\mu^*}{\theta}}$ is their product measure, we can directly obtain that the $p$-Wasserstein distance is the $p$-th moments of $\mu^*$:
$$
W_p^p\lrp{\mu^*, \delta\lrp{\Ep{\theta\sim\mu^*}{\theta}}} 
= \int \lrn{\theta-\Ep{\theta\sim\mu^*}{\theta}}^p \rd \mu^*(\theta).
$$
We invoke the Herbst argument~\citep[see, e.g.,][]{ledoux1999concentration} to obtain the $p$-th moment bound.
We first note that for an $\conv$-strongly log-concave distribution, it has a log Sobolev constant of $\conv$.
Then using the Herbst argument, we know that $x\sim\mu^*$ is a sub-Gaussian random vector with parameter $\sigma^2=\frac{1}{2\conv}$:
\[
\int e^{\lambda u^\rT\lrp{\theta-\Ep{\theta\sim\mu^*}{\theta}}} \rd \mu^*(\theta)
\leq e^{\frac{\lambda^2}{4\conv}},
\quad \forall \lrn{u}=1.
\]
Hence $\theta$ is $2\sqrt{\frac{d}{\conv}}$ norm-sub-Gaussian, which implies that 
\begin{align}
\lrp{\Ep{\theta\sim\mu^*}{\lrn{\theta-\Ep{\theta\sim\mu^*}{\theta}}^p}}^{1/p}
\leq 2 e^{1/e} \sqrt{\frac{dp}{\conv}}.
\label{eq:mean_var}
\end{align}

Combining equations~\eqref{eq:mean_mode} and~\eqref{eq:mean_var}, we obtain the final result that
\begin{align*}
W_p^p\lrp{\mu^*, \delta\lrp{\theta_U^*}} 
&\leq \lrp{ 2 e^{1/e} \sqrt{\frac{dp}{\conv}} + \sqrt{\frac{3}{\conv}} }^p \\
&\leq 5^p \lrp{\frac{dp}{\conv}}^{p/2}.
\end{align*}

\end{proof}

\begin{lemma}
\label{lemma:W_p_delta_ULA}
Assume that the likelihood $\log p_a(x;\theta)$, prior distribution, and true distributions satisfy Assumptions~1-3, and that arm $a$ has been chosen $n=T_a(t)$ times up to iteration $t$ of the Thompson sampling algorithm. Further, assume that we choose the stepsize step size 
$h^{(n)} =
\frac{1}{32} \frac{m_a}{n \lrp{L_a+\frac{1}{n}L_a}^2} 
= \mathcal{O}\lrp{ \frac{m_a}{n  L_a^2} 
}$
, and number of steps 
$N = 640 \frac{\lrp{L_a+\frac{1}{n}L_a}^2}{m_a^2} 
= \mathcal{O}\lrp{ \frac{L_a^2}{m_a^2} }$
in Algorithm~\ref{alg:Approximate_sampling} then:

\[ 
\mathbb{P}_{\theta_{a,t} \sim \bar \mu^{(n)}_{a} [\gamma_a]} \left(\|\theta_{a,t}-\theta_a^\ast\|_2 > \sqrt{\frac{36e}{m_a n} \left( d_a+\log B_a +2\sigma\log{1/\delta_1}+2\left(\sigma_a+\frac{m_a d_a}{18L_a \gamma_a}\right)\log{1/\delta_2} \right)} \bigg| Z_{n-1} \right)<\delta_2. 
\]

where $Z_{t-1}=\{ \| \theta_{a,t-1}-\theta^*_a\| \le C(n) \}$ for:

\[ C(n)= \sqrt{\frac{18e}{n m_a}} \left( d_a+\log B_a+2 \sigma \log1/\delta_1\right)^{\frac{1}{2}}, \]

$\sigma=16+\frac{4d_aL_a^2}{\nu_a m_a}$, and  where $\theta_{a,t-1}$ is the sample from the previous round of the Thompson sampling algorithm for arm $a$.

\end{lemma}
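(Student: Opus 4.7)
The plan is to bound $\mathbb{E}[\|\theta_{a,t}-\theta_a^\ast\|^p\mid Z_{t-1}]$ and then apply Markov's inequality with $p=2\log(1/\delta_2)$, mirroring the strategy used for Theorem~\ref{thm:theorem1}. By the last line of Algorithm~\ref{alg:Approximate_sampling}, the returned sample decomposes as $\theta_{a,t}=\theta_{Nh^{(n)}}+\xi$, where $\theta_{Nh^{(n)}}$ is the endpoint of $N$ ULA iterates initialized at the warm-start $\theta_{a,t-1}$, and $\xi\sim\mathcal{N}(0,(nL_a\gamma_a)^{-1}I_{d_a})$ is drawn independently. The event $Z_{t-1}$ pins the initialization so that $\|\theta_{a,t-1}-\theta_a^\ast\|\le C(n)$ holds deterministically, which is the key quantitative input.

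The first step is to turn $Z_{t-1}$ into a Wasserstein-$p$ bound from the ULA initialization $\delta_{\theta_{a,t-1}}$ to the target $\mu_a^{(n)}$. Because any coupling between a point mass and a distribution is their product measure, the triangle inequality through $\theta_a^\ast$ gives
\[
W_p\bigl(\delta_{\theta_{a,t-1}},\mu_a^{(n)}\bigr)
\le \|\theta_{a,t-1}-\theta_a^\ast\| + \mathbb{E}_{\theta\sim\mu_a^{(n)}}\bigl[\|\theta-\theta_a^\ast\|^p\bigr]^{1/p}
\le C(n)+\sqrt{\tfrac{2}{m_a n}\bigl(d_a+\log B_a+\sigma_a p\bigr)},
\]
where the second inequality uses the $L^p$ bound~\eqref{eq:posterior_moment_bound} derived in the proof of Theorem~\ref{thm:theorem1}. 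I then invoke Theorem~\ref{theorem_ULA} (or Theorem~\ref{theorem_SG} in the SGLD case) with this initial $W_p$ quantity in place of the inductive bound $2\widetilde{D}/\sqrt{n-1}$. The prescribed step size and iteration count keep the contraction factor strictly below one by a constant and leave a discretization error of the same order, so a further triangle inequality bounds $\mathbb{E}[\|\theta_{Nh^{(n)}}-\theta_a^\ast\|^p\mid Z_{t-1}]^{1/p}$ by the right-hand side above up to an absolute constant.

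Next, Minkowski in $L^p$ gives
\[
\mathbb{E}[\|\theta_{a,t}-\theta_a^\ast\|^p\mid Z_{t-1}]^{1/p}
\le \mathbb{E}[\|\theta_{Nh^{(n)}}-\theta_a^\ast\|^p\mid Z_{t-1}]^{1/p}+\mathbb{E}[\|\xi\|^p]^{1/p}.
\]
The noise $\xi$ is norm-sub-Gaussian with parameter $\sqrt{d_a/(nL_a\gamma_a)}$, so its $L^p$ norm is of order $\sqrt{d_ap/(nL_a\gamma_a)}=\sqrt{d_ap/(nm_a\kappa_a\gamma_a)}$, which explains the additional $d_a/(18\kappa_a\gamma_a)$ summand inside the square root of the lemma's statement. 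Taking $p=2\log(1/\delta_2)$, setting $\epsilon$ to $\sqrt{e}$ times the resulting $L^p$ bound, and applying Markov's inequality on the $p$-th power gives the claimed tail probability.

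The main obstacle will be the careful bookkeeping so that the three error sources line up with the constants in the stated bound: (i) the frozen warm-start term $C(n)$, which has already internalized $\log(1/\delta_1)$ from the previous round's concentration event; (ii) the fresh posterior-moment and ULA-discretization contribution proportional to $(d_a+\log B_a+\sigma_a\log(1/\delta_2))/(m_an)$; and (iii) the independent Gaussian noise, which contributes the $d_a/(\kappa_a\gamma_a)$ term. The inflation from the $2e$ factor in Theorem~\ref{thm:theorem1} to $36e$ here reflects the worst-case cost of the repeated triangle and Minkowski applications and makes it essential to track explicit constants rather than work in big-$O$.
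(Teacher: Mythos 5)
Your proposal follows the paper's proof essentially step for step: converting the warm-start event $Z_{t-1}$ into a Wasserstein-$p$ bound on the initialization via the triangle inequality through $\theta_a^\ast$ and the posterior moment bound, feeding that into Theorem~\ref{theorem_ULA} in place of the inductive hypothesis, adding the Gaussian smoothing noise by Minkowski, and closing with Markov's inequality at $p=2\log(1/\delta_2)$. This is the same argument the paper gives, so no further comparison is needed.
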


\begin{proof}

We begin as in the proof of Theorem~\ref{theorem3}, except that we now take $\mu_0=\delta_{\theta_{a,t-1}}$, where $\theta_{a,t-1}$ is the sample from the previous step of the algorithm:

\begin{align*}
W_p^p\lrp{\mu_{ih^{(n)}}, \mu_a^{(n)}}
\leq \lrp{1 - \frac{\conv}{8} h^{(n)}}^{p \cdot i} W_p^p\lrp{\delta(\theta_{a,t-1}), \mu_a^{(n)}}
+ \frac{80^{p}}{2} \frac{\lip^{p}}{\conv^{p}} \lrp{dp}^{p/2} \lrp{h^{(n)}}^{p/2}.
\end{align*}

We first use the triangle inequality on the first term on the RHS:

\begin{align*}
    W_p\lrp{\delta(\theta_{a,t-1}), \mu_a^{(n)}} &\le W_p\lrp{\delta(\theta_{a,t-1}), \delta_{\theta^*_a}}+W_p\lrp{\delta(\theta^*_a),\mu_a^{(n)}}\\
    &= \| \theta_a^*-\theta_{a,t-1}\| + +W_p\lrp{\delta(\theta^*_a),\mu_a^{(n)}}\\
    &\le C(n) + \frac{\tilde D}{\sqrt{n}}
\end{align*}

where we have used the fact that  $\| \theta_a^*-\theta_{a,t-1}\| \le C(n)$ by assumption, and the definition of $\tilde D$ from the proof of Theorem~\ref{theorem_ULA}: $\widetilde{D} = \sqrt{\frac{2}{m_a}} \left( d_a+\log B_a+\sigma p\right)^{\frac{1}{2}}$.

Since:
\[ C(n)=  \sqrt{\frac{18e}{m_a}} \left( d_a+\log B_a+2\sigma\log1/\delta_1\right)^{\frac{1}{2}}, \]

We can further develop this upper bound:
\begin{align*}
    W_p\lrp{\delta_{\theta_{a,t-1}}, \mu_a^{(n)}}&\le \frac{\tilde D}{\sqrt{n}}+C(n)\\
    & \le 8\sqrt{\frac{2}{m_a n}}\left( d_a+\log B_a+2\sigma \log 1/\delta_1 +\sigma p \right)^{\frac{1}{2}},
\end{align*}

where to derive this result we have used the fact that $\sqrt{2(x+y)}\ge \sqrt{x}+\sqrt{y}$.

Letting $\bar D =  \sqrt{\frac{2}{m_a n}}\left( d_a+\log B_a+2\sigma \log 1/\delta_1 +\sigma p \right)^{\frac{1}{2}}$, we see that our final result is:

\[W_p\lrp{\delta_{\theta_{a,t-1}}, \mu_a^{(n)}} \le \frac{8}{\sqrt{n}}\bar D,\]

where $\tilde D<\bar D$. Using the same choice of $h^{(n)}$ and number of steps $N$ as in  the proof or Theorem~\ref{theorem_ULA} guarantees us that:

\[ W_p^p\lrp{\mu_{ih^{(n)}}, \mu_a^{(n)}} \le  2\left( \frac{\bar D}{\sqrt{n}} \right)^p \]

Further combining this with the triangle inequality, and the fact that $\tilde D<\bar D$ gives us that:

\[ W_p\lrp{\mu_{ih^{(n)}}, \delta_{\theta^*}} \le \frac{\tilde D}{\sqrt{n}}+ \frac{\bar D}{\sqrt{n}} \le 3\frac{\bar D}{\sqrt{n}}, \]

Now, since the sample returned by the Langevin algorithm is given by:
\begin{align} 
\theta_{a}=\theta_N+Z,
\end{align}
where $Z\sim \mc{N}\left(0,\frac{1}{nL_a\gamma_a} I \right)$, it remains to bound the distance between the approximate posterior $\hat \mu_a^{(n)}$ of $\theta_a$ and the distribution of $\theta_{N h^{(n)} }$. 
Since $\theta_{a} - \theta_{N h^{(n)} } = Z$, for any even integer $p$,
\begin{align*}
W_p^p\lrp{\bar \mu_{a}^{(n)} , \bar \mu_a^{(n)} [\gamma_a]}
= \lrp{ \inf_{ \gamma\in\Gamma\lrp{ \bar \mu_{a}^{(n)} , \bar \mu_a^{(n)} [\gamma_a]}} \int \lrn{\theta_a - \theta_N}^p \rd \theta_a \rd \theta_N }^{1/p} 
&\le \mb{E}[\lrn{Z}^p]^{\frac{1}{p}}\\
&\le \sqrt{\frac{d}{nL_a\gamma_a}} \lrp{ \frac{2^{p/2}\mathbf{\Gamma}(\frac{p+1}{2})}{\sqrt{\pi}} }^{1/p} \\
&\le \sqrt{\frac{d}{nL_a\gamma_a}} \lrp{ 2^{p/2} \lrp{\frac{p}{2}}^{p/2}  }^{1/p}\\
&\le \sqrt{\frac{dp}{nL_a\gamma_a}},
\end{align*}                 
where we have used upper bound of the Stirling type for the Gamma function $\mathbf{\Gamma}(\cdot)$ in the second last inequality.

Thus, we have, via the triangle inequality once again, that:
\begin{align*}
     W_p\lrp{\bar \mu_a^{(n) [\gamma_a]}, \delta_{\theta^*}} &\le  3\frac{\bar D}{\sqrt{n}}+\sqrt{\frac{dp}{nL_a\gamma_a}}\\
     &\le \sqrt{\frac{36}{m_a n}}\left( d_a+\log B_a+2\sigma_a \log 1/\delta_1 +\left(\sigma_a+\frac{d_a}{18L_a \gamma_a}\right) p \right)^{\frac{1}{2}},
\end{align*}
which, by the same derivation as in the proof of Theorem~\ref{thm:theorem1}, gives us that:
\[ 
\mathbb{P}_{\theta_{a,t} \sim \bar \mu^{(n)}_{a} [\gamma_a]} \left(\|\theta_{a,t}-\theta_a^\ast\|_2 > \sqrt{\frac{36e}{m_a n} \left( d_a+\log B_a +2\sigma\log{1/\delta_1}+2\left(\sigma_a+\frac{m_a d_a}{18L_a \gamma_a}\right)\log{1/\delta_2} \right)} \bigg| Z_{n-1} \right)<\delta_2. 
\]

\end{proof}

We remark that via an identical argument, the following Lemma holds as well:

\begin{lemma}
\label{lemma:W_p_delta_SGLD}
Assume that the family $\log p_a(x;\theta)$ and the prior $\pi_a$ satisfy Assumptions~1-\ref{assumption:sg_lip} and that arm $a$ has been chosen $n=T_a(t)$ times up to iteration $t$ of the Thompson sampling algorithm.
If we take number of data samples in the stochastic gradient estimate
$k=32\frac{\lrp{L_a^*}^2}{m_a \nu_a}$,
step size 
$h^{(n)} =
\frac{1}{32} \frac{m_a}{n \lrp{L_a+\frac{1}{n}L_a}^2} 
= \mathcal{O}\lrp{ \frac{m_a}{n  L_a^2} 
}$
and number of steps 
$N = 1280 \frac{\lrp{L_a+\frac{1}{n}L_a}^2}{m_a^2} = \mathcal{O}\lrp{ \frac{L_a^2}{m_a^2} }$ in Algorithm~\ref{alg:Approximate_sampling}, then:
\[ \mathbb{P}_{\theta_{a,t} \sim \bar \mu^{(n)}_{a} [\gamma_a]} \left(\|\theta_{a,t}-\theta_a^\ast\|_2 > \sqrt{\frac{36e}{m_a n} \left( d_a+\log B_a +2\sigma\log{1/\delta_1}+2\left(\sigma_a+\frac{m_a d_a}{18L_a \gamma_a}\right)\log{1/\delta_2} \right)} \bigg| Z_{n-1} \right)<\delta_2. , \]
where $Z_{t-1}=\{ \| \theta_{a,t-1}-\theta^*_a\| \le C(n) \}$ for the parameters:
\[ C(n)= \sqrt{\frac{18e}{n m_a}} \left( d_a+\log B_a+2 \sigma \log1/\delta_1\right)^{\frac{1}{2}}, 
\qquad
\sigma=16+\frac{4d_aL_a^2}{\nu_a m_a},
\]
and $\theta_{a,t-1}$ being the sample from the previous round of the Thompson sampling algorithm over arm $a$.

\end{lemma}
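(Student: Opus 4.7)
The plan is to mirror the proof of Lemma~\ref{lemma:W_p_delta_ULA} essentially verbatim, substituting the SGLD convergence guarantee (Theorem~\ref{theorem_SG}) for the ULA guarantee (Theorem~\ref{theorem_ULA}). The key observation is that both convergence theorems deliver the same qualitative bound $W_p(\hat\mu_a^{(n)},\mu_a^{(n)}) \le 2\widetilde{D}/\sqrt{n}$ with the same posterior-moment parameter $\widetilde{D}$, provided the hyperparameters ($h^{(n)}$, $N$, and for SGLD also $k$) are chosen as in Theorem~\ref{theorem_SG}. Consequently every step in the ULA proof transfers without modification.

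Concretely, I would start by running the SGLD recursion from the initialization $\mu_0 = \delta_{\theta_{a,t-1}}$ rather than from the stationary posterior of the previous round. The inductive bound from the proof of Theorem~\ref{theorem_SG} gives, after $N$ SGLD iterations with the prescribed $h^{(n)}$, $N$, and $k$,
\begin{equation*}
W_p^p(\widehat{\mu}_a^{(n)},\mu_a^{(n)}) \le 2\bigl(\bar{D}/\sqrt{n}\bigr)^p,
\end{equation*}
where $\bar D$ absorbs both the warm-start displacement and the posterior-moment parameter. Using the triangle inequality $W_p(\delta_{\theta_{a,t-1}},\mu_a^{(n)}) \le \|\theta_{a,t-1}-\theta_a^*\| + W_p(\delta_{\theta_a^*},\mu_a^{(n)})$ together with the event $Z_{n-1}$ (which controls the first term by $C(n)$) and Theorem~\ref{thm:theorem1} (which controls the second term by $\widetilde D/\sqrt{n}$), I obtain exactly the same $\bar{D}$ as in the ULA case, namely
\begin{equation*}
\bar{D} = \sqrt{\tfrac{2}{m_a}}\bigl(d_a + \log B_a + 2\sigma \log(1/\delta_1) + \sigma p\bigr)^{1/2},
\end{equation*}
with $\sigma = 16 + 4 d_a L_a^2/(\nu_a m_a)$.

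The remaining step is the Gaussian re-scaling. Since the sample returned by Algorithm~\ref{alg:Approximate_sampling} is $\theta_{a,t} = \theta_{Nh^{(n)}} + Z$ with $Z \sim \mathcal{N}(0,(nL_a\gamma_a)^{-1} I)$, the standard moment bound for the norm of a Gaussian yields $W_p(\bar{\mu}_a^{(n)}[\gamma_a],\widehat{\mu}_a^{(n)}) \le \sqrt{d_a p/(n L_a \gamma_a)}$. Combining with one more triangle inequality, I arrive at
\begin{equation*}
W_p(\bar{\mu}_a^{(n)}[\gamma_a], \delta_{\theta_a^*}) \le \sqrt{\tfrac{36}{m_a n}}\bigl(d_a + \log B_a + 2\sigma \log(1/\delta_1) + (\sigma + d_a/(18 L_a \gamma_a))\,p\bigr)^{1/2},
\end{equation*}
at which point applying Markov's inequality with $p = 2\log(1/\delta_2)$ (exactly as in the final step of the proof of Theorem~\ref{thm:theorem1}) produces the stated conditional tail bound.

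I do not anticipate a serious obstacle: the only place where SGLD differs from ULA is the extra stochastic-gradient error term $\Delta_p$ in Lemma~\ref{lemma:ULA_cvg}, and Lemma~\ref{lemma:inaccurate_grad} together with the choice $k = 32 \kappa_a^2$ ensures this term is dominated by the other contributions (this is already absorbed inside Theorem~\ref{theorem_SG}). Thus the SGLD proof is structurally identical to the ULA proof, with the doubled iteration count $N = 1280(L_a+L_a/n)^2/m_a^2$ being the only visible difference in the hyperparameter specification.
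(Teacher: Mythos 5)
Your proposal is correct and matches the paper's approach exactly: the paper itself proves Lemma~\ref{lemma:W_p_delta_SGLD} only by remarking that it follows ``via an identical argument'' to Lemma~\ref{lemma:W_p_delta_ULA}, and your write-up is precisely that argument — warm-start triangle inequality on the event $Z_{n-1}$, the SGLD contraction from Theorem~\ref{theorem_SG} with the extra $\Delta_p$ term absorbed by the choice $k=32\kappa_a^2$ via Lemma~\ref{lemma:inaccurate_grad}, the Gaussian rescaling step, and Markov's inequality with $p=2\log(1/\delta_2)$. No gaps.
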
 %
\section{Regret Proofs}\label{section::appendix_regret_proofs}

We now present the proof of logarithmic regret of Thompson sampling under our assumptions with samples from the true posterior and from the approximate sampling schemes discussed in Section~\ref{sec:approx}. To provide the regret guarantees for Thompson sampling with samples from the \emph{true} posterior and from approximations to the posterior, we proceed as is common in regret proofs for multi-armed bandits by upper-bounding the number of times a sub-optimal arm $a\in \mathcal{A}$ is pulled up to time $T$, denoted $T_a(T)$. Without loss of generality we assume throughout this section that arm $1$ is the optimal arm, and define the filtration associated with a run of the algorithm as $\mc{F}_{t}=\{A_1,X_1,A_2,X_2,...,A_t,X_t\}$.

To upper bound the expected number of times a sub-optimal arm is pulled up to time $T$, we first define the event $E_a(t)=\{ r_{a,t}(T_a(t))\ge \bar r_1-\epsilon \}$ for some $\epsilon>0$. This captures the event that the mean calculated from the value of $\theta_a$ sampled from the posterior at time $t\le T$, $r_{a,t}(T_a(t))$, is greater than $\bar r_1-\epsilon$ (recall $\bar r_1$ is the optimal arm's mean). Given these events, we proceed to decompose the expected number of pulls of a sub-optimal arm $a \in \mathcal{A}$ as:

\begin{align}
    \mb{E}[T_a(T)]&=\mb{E}\left[\sum_{t=1}^T\mb{I}(A_t=a)\right]= \underbrace{\mb{E}\left [\sum_{t=1}^T\mb{I}(A_t=a, E^c_a(t))\right]}_{I}+\underbrace{\mb{E}\left [\sum_{t=1}^T\mb{I}(A_t=a, E_a(t))\right]}_{II}.
\end{align}

In Lemma~\ref{lemma:termI} we upper bound $(I)$, and then bound  term $(II)$ in Lemmas ~\ref{lemma:termII_exact}.

We note that this proof follows a similar structure to that of the regret bound for Thompson sampling for Bernoulli bandits and bounded rewards in \citep{bernoulliBandits}. However, to give the regret guarantees that incorporate the quality of the priors as well as the potential errors and lack of independence resulting from the approximate sampling methods we discuss in Section~\ref{sec:approx} the proof is more complex.

\begin{lemma}[Bounding I]\label{lemma:termI}
    For a sub-optimal arm $a\in \mathcal{A}$, we have that:
    
    \[I = \mb{E}\left [\sum_{t=1}^T\mb{I}(A_t=a, E^c_a(t))\right] \le \mb{E}\left [\sum_{s=\ell}^{T-1} \frac{1}{p_{1,s}}-1\right].  \]
    
    where $p_{a,s}=\mb{P}(r_{a,t}(s)>\bar r_1-\epsilon |\mc{F}_{t-1})$, for some $\epsilon>0$. 
\end{lemma}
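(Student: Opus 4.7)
The plan is to first establish the pointwise conditional inequality
\[ \mb{P}\lrp{A_t = a,\, E_a(t)^c \mid \mc{F}_{t-1}} \le \frac{1 - p_{1,T_1(t)}}{p_{1,T_1(t)}} \, \mb{P}\lrp{A_t = 1,\, E_a(t)^c \mid \mc{F}_{t-1}}, \]
and then sum this inequality over $t$, re-indexing the rounds where arm $1$ is pulled by the number of prior pulls of arm $1$.

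To prove the pointwise bound, I will fix a round $t$ and condition on $\mc{F}_{t-1}$ together with the Thompson samples from every arm other than arm $1$ at round $t$. Let $M := \max_{a' \neq 1} \alpha_{a'}^\top \theta_{a',t}$; under this enlarged conditioning both $M$ and $r_{a,t}(T_a(t)) = \alpha_a^\top \theta_{a,t}$ are constants, so arm $1$ is chosen iff $r_{1,t}(T_1(t)) > M$, whereas arm $a$ is chosen iff $r_{a,t}(T_a(t)) = M$ together with $r_{1,t}(T_1(t)) \le M$ (ties have probability zero by absolute continuity of the posteriors). If $r_{a,t}(T_a(t)) < M$ the bound is trivial since arm $a$ cannot be chosen; otherwise $r_{a,t}(T_a(t)) = M$, and combining with $E_a(t)^c$ gives $M \le \bar r_1 - \epsilon$, so
\[ \mb{P}\lrp{r_{1,t}(T_1(t)) \le M \mid \mc{F}_{t-1}} \le 1 - p_{1,T_1(t)}, \qquad \mb{P}\lrp{r_{1,t}(T_1(t)) > M \mid \mc{F}_{t-1}} \ge p_{1,T_1(t)}. \]
Taking the ratio of these probabilities (using that arm $1$'s sample is independent of the non-arm-$1$ samples drawn at the same round) yields the pointwise inequality on the enlarged conditioning; integrating out the auxiliary variables preserves it.

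Summing the pointwise inequality over $t = 1, \ldots, T$, taking expectations, and observing that each round with $A_t = 1$ increments $T_1$ by one (so the values of $T_1(t)$ appearing in the indicator sum exhaust $0, 1, \ldots, T_1(T) - 1$), I obtain
\[ I \le \mb{E}\lrb{\sum_{t=1}^T \frac{1 - p_{1,T_1(t)}}{p_{1,T_1(t)}} \, \mb{I}\{A_t = 1\}} = \mb{E}\lrb{\sum_{s = 0}^{T_1(T)-1} \frac{1 - p_{1,s}}{p_{1,s}}} \le \mb{E}\lrb{\sum_{s=0}^{T-1} \lrp{\frac{1}{p_{1,s}} - 1}}, \]
matching the lemma (with $\ell = 0$ in the displayed form).

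The main difficulty I anticipate is setting up the auxiliary conditioning carefully so that the ratio argument yields exactly $(1 - p_{1,T_1(t)})/p_{1,T_1(t)}$ rather than some weaker quantity. In particular, I need to check that conditioning on the non-arm-$1$ samples at round $t$ does not perturb the conditional law of arm $1$'s sample (which follows from independence of the per-arm draws within a round), and that the case split on whether $r_{a,t}(T_a(t)) = M$ covers every event of positive probability under $E_a(t)^c$ without any residual contribution from the tie set.
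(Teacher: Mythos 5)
Your proposal is correct and follows essentially the same route as the paper: both arguments reduce to the pointwise conditional inequality $\mb{P}(A_t=a,E_a^c(t)\mid\mc{F}_{t-1})\le \frac{1-p_{1,T_1(t)}}{p_{1,T_1(t)}}\,\mb{P}(A_t=1\mid\mc{F}_{t-1})$ via the conditional independence of arm $1$'s sample from the other arms' samples, followed by the tower property and re-indexing over the pulls of arm $1$. The only cosmetic difference is that the paper phrases the key step through the best suboptimal arm $A_t'$ and two set inclusions, whereas you condition directly on the non-arm-$1$ samples and compare $\mb{P}(r_{1,t}\le M)$ with $\mb{P}(r_{1,t}> M)$; the residual indexing discrepancy ($s=0$ versus $s=\ell$) is a convention issue already present in the paper itself.
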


\begin{proof}
To bound term I of \eqref{eq:regret_decomp1}, we first recall $A_t$ is the arm achieving the largest sample reward mean at round $t$.  Further, we define $A_t'$ to be the arm achieving the maximum sample mean value among all the suboptimal arms:
\[ A'_t=\argmax_{a \in \mc{A}, a\ne1} r_{a}(t,T_a(t)).\]
Since $\mb{E}\left[  \mb{I}( A_t = a, E^c_a(t))  \right] = \mb{P}\left(  A_t=a, E^c_a(t)    \right)$, we aim to bound $\mathbb{P}(A_t = a, E^c_a(t) | \mc{F}_{t-1})$. We note that the following inequality holds:
\begin{align}
\mb{P}(A_t=a,E^c_a(t)|\mc{F}_{t-1}) &\le \mb{P}(A'_t=a,E^c_a(t)|\mc{F}_{t-1})(\mb{P}(r_1(t, T_1(t))\le \bar r_1-\epsilon |\mc{F}_{t-1}))  \notag \\
&= \mb{P}(A'_t=a,E^c_a(t)|\mc{F}_{t-1})(1-\mb{P}(E_1(t) |\mc{F}_{t-1})). \label{eq:thomson_regret_helper_bound1}
\end{align}
We also note that the term $\mb{P}(A'_t=a,E^c_a(t)|\mc{F}_{t-1})$ can be bounded as follows:
\begin{align}
    \mb{P}(A_t=1,E^c_a(t)|\mc{F}_{t-1})&\stackrel{(i)}{\ge} \mb{P}(A'_t=a,E^c_a(t),E_1(t) |\mc{F}_{t-1})\notag\\
    &=\mb{P}(A'_t=a,E^c_a(t)|\mc{F}_{t-1})\mb{P}(E_1(t). |\mc{F}_{t-1})\label{eq:thomson_regret_helper_bound2}
\end{align}

Inequality $(i)$ holds because $ \{ A_t' = a, E^c_a(t), E_1(t) \} \subseteq \{A_t=1, E^c_a(t), E_1(t) \}$. The equality is a consequence of the conditional independence of $E_1(t)$ and $\{ A_t'=a, E^c_a(t)\}$ (conditioned on $\mathcal{F}_{t-1}$). \footnote{The conditional independence property holds for all of our sampling mechanisms because the sample distributions for the two distinct arms $(a,1)$ are always conditionally independent on $\mathcal{F}_{t-1}$}

Assuming $\mathbb{P}(E_1(t) | \mathcal{F}_{t-1}) >0$ and\footnote{In all the cases we consider, including approximate sampling schemes, this property holds. In that case, since the Gaussian noise in the Langevin diffusion ensures all sets of the form $(a,b)$ have nonzero probability mass.} putting inequalities \ref{eq:thomson_regret_helper_bound1} and \ref{eq:thomson_regret_helper_bound2} together gives the following upper bound for $\mathbb{P}(A_t = a, E^c_a(t)| \mc{F}_{t-1})$:
\begin{align*}
    \mb{P}(A_t = a, E^c_a(t) | \mc{F}_{t-1}) \leq \mb{P}(A_t=1, E^c_a(t) | \mc{F}_{t-1})\left( \frac{1- \mb{P}( E_1(t) | \mc{F}_{t-1}) }{\mb{P}(E_1(t) | \mc{F}_{t-1}) } \right).
\end{align*}

Letting $P(E_1(t)|\mc{F}_{t-1}):=p_{1,T_1(t)}$ and noting that$\{A_t = 1, E^c_a(t)\} \subseteq \{ A_t = 1 \}$ :
\begin{align}\label{eq:upperbound0}
    \mb{P}(A_t=a,E^c_a(t)|\mc{F}_{t-1}) &\le \mb{P}(A_t=1|\mc{F}_{t-1}) \left(\frac{1}{p_{1,T_1(t)}}-1\right).
\end{align}

Now, we use this to give an upper bound on the term of interest:

\begin{align*}
    \mb{E}\left [\sum_{t=1}^T\mb{I}(A_t=a, E^c_a(t))\right]
    &\stackrel{(i)}{=}  \mb{E}\left [\sum_{t=1}^T\mb{E}\left[\mb{I}\left(A_t=a, E^c_a(t)\right) |\mathcal{F}_{t-1}\right]\right] \\
    &\stackrel{(ii)}{=}  \mb{E}\left [\sum_{t=1}^T\mb{P}\left(A_t=a, E^c_a(t)|\mathcal{F}_{t-1}\right) \right] \\
    &\stackrel{(iii)}{\le} \mb{E}\left [\sum_{t=1 }^T  \mb{P}(A_t=1|\mc{F}_{t-1}) \left(\frac{1}{p_{1,T_1(t)}}-1\right) \right]\\
    &\stackrel{(iv)}{=} \mb{E}\left [\sum_{t=1 }^T \mb{E}\left[\mb{I}(A_t=1)|\mc{F}_{t-1}\right] \left(\frac{1}{p_{1,T_1(t)}}-1\right)\right]\\
    &\stackrel{(v)}{=}  \mb{E}\left [\sum_{t=1}^T \mb{I}(A_t=1) \left(\frac{1}{p_{1,T_1(t)}}-1\right)\right]\\
    &\stackrel{(vi)}{\le} \mb{E}\left [\sum_{s=1}^{T-1} \frac{1}{p_{1,s}}-1\right].
\end{align*}

Here the equality $(i)$ is a consequence of the tower property, and equality $(ii)$ by noting that $\mb{E}\left[\mb{I}\left(A_t=a, E^c_a(t)\right) |\mathcal{F}_{t-1}\right] = \mb{P}\left(A_t=a, E^c_a(t) |\mathcal{F}_{t-1}\right)$. Inequality $(iii)$ follows by from Equation \ref{eq:upperbound0}, and equality $(iv)$ follows by definition. Finally, equality $(v)$ follows by the tower property and the last line each the fact that  $T_1(t)=s$ and $A_t=1$ can only happen once for every $s=1,...,T$. This completes the proof.
\end{proof}

Given the bound on $(I)$ from \eqref{eq:regret_decomp1}, we now present the tighter of two bounds on $(II)$ which is used to provide regret guarantees for Thompson sampling with exact samples from the posteriors.

\begin{lemma}[Bounding II - exact posterior]\label{lemma:termII_exact}
    For a sub-optimal arm $a\in \mathcal{A}$, we have that:
    
    \[II = \mb{E}\left [\sum_{t=1}^T\mb{I}(A_t=a, E_a(t))\right]\le 1+\mb{E}\left [\sum_{s=1}^T\mb{I}\left (p_{a,s}>\frac{1}{T}\right )\right].  \]
    
    where  $p_{a,s}=\mb{P}(r_{a,t}(s)>\bar r_1-\epsilon |\mc{F}_{t-1})$, for some $\epsilon>0$. 
\end{lemma}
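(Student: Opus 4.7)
The plan is to split the sum defining $II$ according to whether the posterior for arm $a$ has concentrated enough that $p_{a,T_a(t)} \leq 1/T$. Concretely, I would write
\[
\sum_{t=1}^T \mb{I}(A_t=a, E_a(t))
= \sum_{t=1}^T \mb{I}\!\left(A_t=a, E_a(t), p_{a,T_a(t)} \le \tfrac{1}{T}\right)
+ \sum_{t=1}^T \mb{I}\!\left(A_t=a, E_a(t), p_{a,T_a(t)} > \tfrac{1}{T}\right),
\]
and then bound the two pieces separately. The intuition is that the first piece corresponds to ``good'' rounds in which the posterior already assigns small probability to the event $E_a(t)$, so $E_a(t)$ rarely occurs; while in the second piece the indicator $\mb{I}(p_{a,T_a(t)} > 1/T)$ can only be triggered a bounded number of times, since each visited value of $T_a(t)$ is hit at most once on rounds with $A_t = a$.

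For the first piece, I would drop the factor $\mb{I}(A_t = a)$ and exploit that $p_{a,T_a(t)}$ is $\mc{F}_{t-1}$-measurable and that $\mb{P}(E_a(t) \mid \mc{F}_{t-1}) = p_{a,T_a(t)}$ by definition of $p_{a,s}$. The tower property then gives
\[
\mb{E}\!\left[\mb{I}\!\left(E_a(t), p_{a,T_a(t)} \le \tfrac{1}{T}\right)\right]
= \mb{E}\!\left[p_{a,T_a(t)} \, \mb{I}\!\left(p_{a,T_a(t)} \le \tfrac{1}{T}\right)\right]
\le \tfrac{1}{T},
\]
so summing over $t = 1,\ldots,T$ yields a contribution of at most $1$.

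For the second piece, the key observation is that $T_a(\cdot)$ increments by exactly one on each round in which $A_t = a$, so every nonnegative integer $s$ equals $T_a(t)$ for at most one round $t$ with $A_t = a$. Reindexing by $s = T_a(t)$ and dropping the indicator of $E_a(t)$,
\[
\sum_{t=1}^T \mb{I}\!\left(A_t=a, p_{a,T_a(t)} > \tfrac{1}{T}\right)
\le \sum_{s=0}^{T-1} \mb{I}\!\left(p_{a,s} > \tfrac{1}{T}\right)
\le \sum_{s=1}^T \mb{I}\!\left(p_{a,s} > \tfrac{1}{T}\right),
\]
and taking expectations gives the second term in the claimed bound. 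Combining the two pieces yields the lemma.

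The only real subtlety is ensuring that the quantity $p_{a,s} = \mb{P}(r_{a,t}(s) > \bar r_1 - \epsilon \mid \mc{F}_{t-1})$ is genuinely a function of $s$ (and of the $s$ reward samples already drawn from arm $a$), rather than depending on the round index $t$ in a nontrivial way. Under exact Thompson sampling this is immediate, since conditional on $\mc{F}_{t-1}$ the law of $\theta_{a,t}$ depends only on the $s = T_a(t)$ samples from arm $a$ via the posterior $\mu_a^{(s)}[\gamma_a]$. Once this is noted, the argument above is essentially bookkeeping and poses no further obstacle.
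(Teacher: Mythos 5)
Your proposal is correct and follows essentially the same route as the paper: the paper defines $\mathcal{T}=\{t: p_{a,T_a(t)}>\tfrac{1}{T}\}$ and splits $II$ into the rounds in and out of $\mathcal{T}$, which is exactly your decomposition, with the same two mechanisms (tower property plus $\mb{E}[\mb{I}(E_a(t))\mid\mc{F}_{t-1}]=p_{a,T_a(t)}$ for the small-probability piece, and the one-round-per-value-of-$T_a(t)$ counting argument for the other). The minor index-shift in your reindexing step is present at the same level of informality in the paper's own proof.
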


\begin{proof}
The upper bound for term $II$ in \eqref{eq:regret_decomp1} follows the exact same proof as in \citep{bernoulliBandits}, and we recreate it for completeness below. Let $\mathcal{T}=\{t: p_{a,T_a(t)}>\frac{1}{T}\}$, then:

\begin{align}
\label{eq:regretlemmat2}
    \mb{E}\left [\sum_{t=1}^T\mb{I}(A_t=a, E_a(t))\right]\le   \underbrace{\mb{E}\left [\sum_{t \in \mathcal{T}}\mb{I}(A_t=a)\right]}_{I}+\underbrace{\mb{E}\left [\sum_{t \notin \mathcal{T}}\mb{I}(E_a(t))\right]}_{II}
\end{align}

By definition, term $I$ in \eqref{eq:regretlemmat2} satisfies:
\begin{align*}
 \sum_{t \in \mathcal{T}}\mb{I}(A_t=a) &=  \sum_{t \in \mathcal{T}}\mb{I}\left(A_t=a, p_{a,T_a(t)}>\frac{1}{T}\right)\le \sum_{s=1}^T \mb{I}\left(p_{a,s}>\frac{1}{T}\right)
\end{align*}

To address term $II$ in \eqref{eq:regretlemmat2}, we note that, by definition: $\mb{E}[\mb{I}(E_a(t))|\mathcal{F}_{t-1}]=p_{a,T_a(t)}$. Therefore, using the definition of the set of times $\mathcal{T}$, we can construct this simple upper bound:
\begin{align*}
    \mb{E}\left[\sum_{t \notin \mathcal{T}}\mb{I}(E_a(t))\right]&=\mb{E}\left[\sum_{t \notin \mathcal{T}}\mb{E}\left[\mb{I}\left(E_a(t)\right)| \mc{F}_{t-1}\right]\right]\\
    &=\mb{E}\left[ \sum_{t \notin \mathcal{T}} p_{a,t}\right]\\
    &\le \sum_{t \notin \mathcal{T}} \frac{1}{T} \\
    &\le 1
\end{align*}
Using the two upper bounds for terms $I$ and $II$ in \eqref{eq:regretlemmat2} gives out desired result:

\[ \mb{E}\left [\sum_{t=1}^T\mb{I}(A_t=a, E_a(t))\right]\le 1+\mb{E}\left [\sum_{s=1}^T\mb{I}\left (p_{a,s}>\frac{1}{T}\right )\right] \]

\end{proof}

\subsection{Regret of Exact Thompson Sampling}

 We now present two  technical lemmas for use in the proof of the regret of exact Thompson sampling. The first technical lemma, provides a lower bound on the probability of an arm begin optimistic in terms of the quality of the prior: 










\begin{lemma}
Suppose the likelihood and reward distributions satisfy Assumptions 1-3, then for all $n=1,...,T$ and $\gamma_1=\frac{\nu_1 m_1^2}{8d_1L_1^3}$:
\[ \mb{E}\left[\frac{1}{p_{1,n}}\right]\le 64\sqrt{\frac{L_1}{m_1}B_1} \]
\label{lemma:anti_conc_exact}
\end{lemma}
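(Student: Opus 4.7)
The aim is to bound $\mathbb{E}[1/p_{1,n}]$, where $p_{1,n}=\Prob(\alpha_1^{\top}\theta_{1,t} > \bar r_1-\epsilon \mid \mathcal{F}_{t-1})$ is a conditional probability (over the scaled-posterior sample) and the outer expectation is over the data. My plan is to fix the data, reduce $p_{1,n}$ to a one-dimensional log-concave anti-concentration question against a pessimism gap, and then integrate in the data using the sub-Gaussian concentration of the posterior mode around $\theta_1^{*}$ provided by Proposition~\ref{prop:grad_conc_main}. Since the bound does not depend on $\epsilon$, I will work in the worst case $\epsilon=0$.

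\textbf{Mode localisation.} Condition on $\mathcal{F}_{t-1}$. The scaled posterior $\mu_1^{(n)}[\gamma_1]$ is $\gamma_1 n m_1$-strongly log-concave and has $\gamma_1(n L_1+L_1)$-Lipschitz-smooth log-density, so it admits a unique mode $\hat\theta_n$. First-order optimality of $\hat\theta_n$ together with strong convexity yields
\[
\|\hat\theta_n-\theta_1^{*}\| \ \le\  \frac{1}{m_1}\|\nabla_\theta F_{n,1}(\theta_1^{*})\|+\frac{1}{n m_1}\|\nabla_\theta\log\pi_1(\theta_1^{*})\|.
\]
Proposition~\ref{prop:grad_conc_main} shows the first term is sub-Gaussian in norm with parameter $\propto\sqrt{d_1/(n\nu_1)}$, while the standard $L_1$-smoothness "descent lemma" applied to the concave function $\log\pi_1$ with maximum at its mode gives the deterministic inequality $\|\nabla_\theta\log\pi_1(\theta_1^{*})\|^{2}\le 2L_1\log B_1$. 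Projecting along $\alpha_1$ and using $\|\alpha_1\|=A_1$, the scalar $Z:=\alpha_1^{\top}(\theta_1^{*}-\hat\theta_n)$ is therefore the sum of a centred sub-Gaussian random variable and a deterministic shift bounded by $A_1\sqrt{2L_1\log B_1}/(n m_1)$.

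\textbf{One-dimensional anti-concentration.} The projected random variable $Y=\alpha_1^{\top}\theta_{1,t}$ has a log-concave density on $\mathbb{R}$; by Brascamp--Lieb its variance satisfies $\sigma_Y^{2}\ge A_1^{2}/(\gamma_1 n(L_1+L_1/n))$, which with $\gamma_1=\nu_1 m_1^{2}/(8 d_1 L_1^{3})$ is of order $A_1^{2}d_1 L_1^{2}/(\nu_1 m_1^{2}n)$ -- matching the sub-Gaussian scale of $Z$ up to a factor $\sqrt{L_1/m_1}$. Using a standard one-dimensional log-concave anti-concentration lower bound of the form $p_{1,n}\ge c_0\exp(-c_1 Z_{+}/\sigma_Y)$ (valid for any log-concave real density, see e.g. Lov\'asz--Vempala), and on the event $Z\le0$ the stronger constant bound $p_{1,n}\ge c_0$, the task reduces to bounding $\mathbb{E}\bigl[\exp(c_1 Z_{+}/\sigma_Y)\bigr]$ by a constant times $\sqrt{B_1}$.

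\textbf{Combining and main obstacle.} The quantity $\mathbb{E}\bigl[e^{c_1 Z_{+}/\sigma_Y}\bigr]$ is the MGF of a sub-Gaussian variable (with the deterministic $\sqrt{\log B_1}$ shift appearing additively in the exponent) evaluated at a point of scale $1/\sigma_Y$. The precise choice of $\gamma_1$ makes $\sqrt{\log B_1}/(n m_1\sigma_Y)$ equal to a constant multiple of $\sqrt{\log B_1}$, so the shift contributes $e^{c\sqrt{\log B_1}}\le B_1^{1/2}$ (for a suitable constant $c$), while the centred sub-Gaussian MGF contributes only an absolute constant. The factor $\sqrt{L_1/m_1}$ comes from the variance-matching in the second step, where the scaled posterior's lower-bound variance from Brascamp--Lieb uses the smoothness constant $L_1$ and must be compared to the sub-Gaussian scale of $\hat\theta_n-\theta_1^{*}$, which is controlled by $m_1$. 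The hard part is ensuring that the prior-bias term enters the MGF linearly in $\sqrt{\log B_1}$ (yielding $\sqrt{B_1}$ after exponentiation) rather than quadratically (which would yield $B_1$); this is exactly what the specific $\gamma_1=\nu_1 m_1^{2}/(8 d_1 L_1^{3})$ scaling is designed to guarantee, by tuning the posterior variance to neither over- nor under-cover the prior bias scale.
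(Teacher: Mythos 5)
Your overall strategy matches the paper's: localize the posterior mode $\hat\theta_n$ near $\theta_1^{*}$ via first-order optimality plus strong concavity (picking up a $\frac{1}{m_1}\|\nabla F_{n,1}(\theta_1^{*})\|$ term and a deterministic prior-bias term of order $\sqrt{\log B_1}$), reduce to a one-dimensional anti-concentration statement for the projected marginal, and then integrate over the data using Proposition~\ref{prop:grad_conc_main}. The gap is in your central anti-concentration step. The inequality $p_{1,n}\ge c_0\exp(-c_1 Z_{+}/\sigma_Y)$ is \emph{not} valid for a general one-dimensional log-concave density: take the density proportional to $\exp(-\max(Mx,-x))$, which is log-concave with $\sigma_Y=\Theta(1)$ but upper tail $\Pr(Y\ge t)\asymp M^{-1}e^{-Mt}$, violating any such bound uniformly in the distribution. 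What saves the argument (and what the paper uses) is that the projected log-density is also $\frac{\gamma_1(n+1)L_1}{A_1^2}$-Lipschitz-smooth, which yields the pointwise density lower bound $p(y)\ge p(y_0)e^{-\beta(y-y_0)^2/2}$ and hence the comparison $\Pr\bigl(\alpha_1^{\top}(\theta-\hat\theta_n)\ge t\bigr)\ge\sqrt{nm_1/((n+1)L_1)}\,\Pr(Z\ge t)$ with $Z$ Gaussian of variance $A_1^2/(\gamma_1(n+1)L_1)$. The prefactor $\sqrt{m_1/L_1}$ is exactly where the $\sqrt{\kappa_1}$ in the final bound comes from, and the exponent is \emph{quadratic} in $t/\sigma$, not linear.

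This changes the rest of your argument materially. With the quadratic exponent, the data-expectation you must control is $\mathbb{E}\bigl[e^{c\,t^2/\sigma^2}\bigr]$ where $t^2$ contains $\frac{A_1^2}{m_1^2}\|\nabla F_{n,1}(\theta_1^{*})\|^2$ — the MGF of a \emph{sub-exponential} (squared sub-Gaussian) random variable, which is finite only when the MGF argument $\frac{(n+1)\gamma_1 L_1}{m_1^2}$ stays below the threshold $\frac{n\nu_1}{4d_1L_1^2}$. That constraint — not the linear-versus-quadratic treatment of the $\sqrt{\log B_1}$ shift you describe — is what forces $\gamma_1=\nu_1 m_1^2/(8d_1L_1^3)$. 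The deterministic shift is then handled by noting that $\frac{2A_1^2\log B_1}{nm_1}\cdot\frac{\gamma_1(n+1)L_1}{2A_1^2}\le 2\gamma_1\kappa_1\log B_1\le\frac{1}{4}\log B_1$, contributing $B_1^{1/4}$, which combined with the polynomial prefactor $\bigl(t/\sigma+1\bigr)$ of the Gaussian tail bound yields the $\sqrt{B_1}$ in the statement. So your "main obstacle" paragraph identifies the wrong mechanism: the issue is keeping a sub-exponential MGF finite and the quadratic prior-bias exponent at most $\frac{1}{4}\log B_1$, not arranging a linear $\sqrt{\log B_1}$ shift. (Minor additional note: the variance lower bound you want from smoothness is a Cram\'er--Rao/Harg\'e-type inequality; Brascamp--Lieb gives the variance \emph{upper} bound from strong log-concavity.)
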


\begin{proof}
Throughout this proof we drop the dependence on the arm to simplify notation (unless necessary). We first analyze $\|\theta^*-\theta_u\|^2$ where $\theta_u$ is the mode of the posterior of arm $1$ after having received $n$ samples from the arm which satisfies:
\[ \frac{1}{n}\nabla \log \pi_1(\theta_u)+\nabla F_{1,n}(\theta_u)=0\]

Given this definition, and letting $\hat \theta = \theta_u-\theta^*$ we have that:
\begin{align*}
    \hat \theta^T \left(\nabla F_n(\theta^*)-\nabla F_n(\theta_u)\right)- \frac{1}{n}\hat \theta^T \nabla \log \pi(\theta_u)&=\hat \theta^T \nabla F_n(\theta^*)\\
    m\|\hat \theta\|^2&\le \frac{m}{2}\|\hat \theta\|^2 +\frac{1}{2m}\|\nabla F_n(\theta^*)\|^2 +\frac{\log B_1}{n}\\
    \|\hat \theta\|^2 &\le \frac{1}{m^2}\|\nabla F_n(\theta^*)\|^2+\frac{2\log B_1}{mn}
\end{align*}

Noting that $|a^T(\theta^*-\theta_u)|\le \sqrt{A^2\|\hat \theta\|^2}$ we find that:

\begin{align*}
p_{1,s}&=Pr\left(\alpha^T\left(\theta-\theta_u  \right)\ge \alpha^T\left(\theta^*-\theta_u  \right) -\epsilon\right)\\
&\ge Pr\left  (\alpha^T(\theta-\theta_u)\ge \underbrace{\sqrt{ \frac{2A^2\log B_1}{nm}+\frac{A^2}{m^2}\|\nabla F_n(\theta^*)\|^2} }_{=t}\right ),
\end{align*}
where we note that $\|F_n(\theta^*)\|$ in Proposition $1$ is a 1-dimensional $\frac{dL_a}{\sqrt{n\nu}}$ subgaussian random variable.

Now, since we know that the posterior over $\theta$ is $\gamma(n+1)L$-smooth and $\gamma mn$-strongly log concave, with mode $\theta_u$, we know from  e.g \cite{LogConcaveReview} Theorem 3.8 that the marginal density of $\alpha^T\theta$ is  $\frac{\gamma (n+1) L}{A^2}$-smooth and $\frac{\gamma mn}{A^2}$-strongly log-concave.

Thus we have that: 
\[ Pr\left(\alpha^T\left(\theta-\theta_u  \right)\ge t\right) \ge \sqrt{\frac{nm}{(n+1)L}} Pr(Z\ge t)  \]
where $Z\sim \mc{N}\left(0,\frac{A^2}{\gamma (n+1)L}\right)$.

Now using a lower bound on the cumulative density function of a Gaussian random variable, we find that, for $\sigma^2=\frac{A^2}{\gamma (n+1)L}$:

\[  p_{1,s}\ge \sqrt{\frac{nm}{2 \pi (n+1)L}}  \begin{cases} 
\frac{ \sigma t}{t^2+\sigma^2} e^{-\frac{t^2}{2\sigma^2}}\ \ \ \ &:  t>\frac{A}{\sqrt{\gamma (n+1)L}} \\ 0.34  \ \ \ \ &:  t\le \frac{A}{\sqrt{\gamma (n+1)L}} \end{cases}\]

Thus we have that:

\begin{align*}
    \frac{1}{p_{1,s}}&\le \sqrt{\frac{2 \pi (n+1)L}{nm}} \begin{cases} 
\frac{t^2+\sigma^2}{ \sigma t} e^{\frac{t^2}{2\sigma^2}}\ \ \ \ &:  t>\frac{A}{\sqrt{\gamma (n+1)L}} \\ \frac{1}{0.34}  \ \ \ \ &:  t\le \frac{A}{\sqrt{\gamma (n+1)L}} \end{cases}\\
&\le \sqrt{\frac{2 \pi (n+1)L}{nm}} \begin{cases} 
\left(\frac{t}{ \sigma}+1\right) e^{\frac{t^2}{2\sigma^2}}\ \ \ \ &:  t>\frac{A}{\sqrt{\gamma (n+1)L}} \\ 3  \ \ \ \ &:  t\le \frac{A}{\sqrt{\gamma (n+1)L}} \end{cases}
\end{align*}

Taking the expectation of both sides with respect to the samples $X_1,...,X_n$, letting  $\kappa=L/m$, and using the fact that $\frac{n+1}{n}\le 2$ for $n\ge 1$ we find that:

\begin{align*}
    \mb{E}\left[\frac{1}{p_{1,s}}\right]&\le 6\sqrt{\pi \kappa}+2\sqrt{\pi\kappa}\mb{E}\left[ \left(\frac{\sqrt{ \frac{2A^2\log B_1}{nm}+\frac{A^2}{m^2}\|\nabla F_n(\theta^*)\|^2}}{ \sigma}+1\right) e^{\frac{t^2}{2\sigma^2}} \right]\\
\end{align*}

Noting that $\sqrt{ \frac{2A^2\log B_1}{nm}+\frac{A^2}{m^2}\|\nabla F_n(\theta^*)^T\|^2} \le A\sqrt{\frac{2\log B_1}{nm}}+\frac{A}{m}\|\nabla F_n(\theta^*)\|$, and letting $Y=\|\nabla F_n(\theta^*) \|$ to simplify notation, this further simplifies:

\begin{align*}
    \mb{E}\left[\frac{1}{p_{1,s}}\right]&\le 6\sqrt{\pi \kappa}+2\sqrt{\pi\kappa}\mb{E}\left[ \left(\sqrt{4\gamma \kappa \log B_1}+\frac{A}{m\sigma}Y \right) e^{2\gamma\kappa \log B_1+\frac{(n+1) \gamma L}{2 m^2}Y^2} \right]\\
\end{align*}

Via Cauchy-Schwartz we can further develop this upper bound and find that:

\begin{align*}
    \mb{E}\left[\frac{1}{p_{1,s}}\right]&\le 6\sqrt{\pi \kappa}+2\sqrt{\pi\kappa}e^{2\gamma\kappa \log B_1} \left(\sqrt{4\gamma \kappa \log B_1}\mb{E}\left[ e^{\frac{(n+1) \gamma L}{2 m^2}Y^2} \right] + \frac{A}{m\sigma}\sqrt{\mb{E}\left[Y^2\right]} \sqrt{\mb{E}\left[e^{\frac{(n+1) \gamma L}{ m^2}Y^2}\right]}\right) \\
\end{align*}

Since $Y$ is sub-Gaussian,  $Y^2$ is sub-exponential such that:

\[ \mb{E}\left[ e^{\lambda Y^2}\right]\le e \ \ \ \text{and} \ \ \ \ \mb{E}\left[Y^2\right ]\le  2 \frac{d L^2}{\nu n} \]

for $\lambda<\frac{n \nu }{4 d L^2}$. Therefore if :

\[ \gamma = \frac{\nu m^2}{8dL^3}\]

Simplifying the bound further gives:

\begin{align*}
    \mb{E}\left[\frac{1}{p_{1,s}}\right]&\le 6\sqrt{\pi \kappa}+2\sqrt{\pi\kappa}e^{2\gamma\kappa \log B_1} \left(\sqrt{4\gamma \kappa \log B_1} e + 2\sqrt{\frac{ e \gamma (n+1)L}{m^2} \frac{d L^2}{\nu n}} \right) \\
    &\le  6\sqrt{\pi \kappa}+2\sqrt{\pi\kappa}e^{\frac{\log B_1}{4}}(\sqrt{\frac{ \log B_1}{2}} e+2\sqrt{e})
\end{align*}

where we have used the fact that $\kappa,d\ge1$ and the fact that we can assume without loss of generality that $L/\nu\ge 1$. Thus, this bound simplifies to:

\begin{align*}
    \mb{E}\left[\frac{1}{p_{1,s}}\right]&\le 6\sqrt{\pi \kappa}+2\sqrt{\pi\kappa}e^{2\gamma\kappa \log B_1} \left(\sqrt{4\gamma \kappa \log B_1} e + 2\sqrt{\frac{ e \gamma (n+1)L}{m^2} \frac{d L^2}{\nu n}} \right) \\
    &\le  2\sqrt{\pi\kappa}\left(B_1\right)^{\frac{1}{4}} \left(\sqrt{\frac{  \log B_1}{2}} e+7\right)\\
    &\le  4\sqrt{\pi\kappa}\left(B_1\right)^{\frac{1}{4}} \left(\sqrt{\log B_1}+4\right)\\
    &\le 64\sqrt{\kappa B_1}
\end{align*}
where we used the fact that $x^{1/4}(\sqrt{\log x}+4)\le 8 \sqrt{x}$ for $x\ge 1$ and $\sqrt{\pi}<2$ to simplify our bound.
\end{proof}

The last technical lemma upper bounds the two terms defined in Lemma~\ref{lemma:termI_andII_main}.

\begin{lemma}
\label{eq:true_posterior_final_lemma}
     Suppose the likelihood, true reward distributions, and priors satisfy Assumptions 1-3, then for  $\gamma_a=\frac{\nu_a m_a^2}{8d_aL_a^3}$:
     \begin{align}
     \sum_{s=1}^{T-1}\mb{E}\left[ \frac{1}{p_{1,s}}-1\right] \le  64\sqrt{\frac{L_1}{m_1}B_1} \left\lceil \frac{8eA_1^2}{m\Delta_a^2}(D_1+4\sigma_1 \log{2} ) \right\rceil +1
     \label{eq:regret_term1}
     \end{align}
     \begin{align}
     \sum_{s=1}^T \mb{E}\left[ \mb{I}\left (p_{a,s}>\frac{1}{T}\right )\right]  \le \frac{8 e A_a^2}{m\Delta_a^2}(D_a+2\sigma_a\log(T))
     \label{eq:regret_term2}
     \end{align}
Where for $a \in \mc{A}$,  $D_a$ is given by:
\[ D_a= \log B_a+\frac{8d_a^2L_a^3}{m_a^2\nu_a}  \ \ \ \ \ \sigma_a=\frac{256 d_a L_a^3}{m_a^2\nu_a}+\frac{8d_a L_a^2}{m_a\nu_a}\]
\end{lemma}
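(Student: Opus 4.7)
Both summations are controlled by splitting at an arm-dependent threshold and applying different tools in the two regimes. I would fix $\epsilon = \Delta_a/2$, so that $\bar r_1 - \epsilon$ lies strictly between $\bar r_a$ and $\bar r_1$. Since $r_{a,t}(s) - \bar r_a = \alpha_a^\top(\theta_{a,t}-\theta_a^*)$ and $|\alpha_a^\top v|\le A_a\|v\|$, the events $E_a^c(t)$ for arm $1$ and $E_a(t)$ for a suboptimal arm $a$ both reduce to the requirement that $\|\theta_{a,t}-\theta_a^*\| > \Delta_a/(2A_a)$, which is precisely the form controlled by Theorem~\ref{thm:theorem1}.

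For \eqref{eq:regret_term2} (the suboptimal-arm term), I would set the threshold $s_a^{\star} = \lceil \tfrac{8eA_a^2}{m_a \Delta_a^2}(D_a + 2\sigma_a \log T)\rceil$. For $s \le s_a^{\star}$, bound the indicator trivially by $1$, contributing at most $s_a^{\star}$ to the sum. For $s > s_a^{\star}$, applying Theorem~\ref{thm:theorem1} at arm $a$ with $\delta = 1/T$ (the factor of $2$ in front of $\sigma_a \log T$ absorbs a union bound over the data-gradient event from Proposition~\ref{prop:grad_conc_main} and the posterior-sample event) yields $\mb{P}_{\theta\sim\mu_a^{(s)}[\gamma_a]}(\|\theta-\theta_a^*\|>\Delta_a/(2A_a))\le 1/T$, so $p_{a,s}\le 1/T$ deterministically and $\mb{I}(p_{a,s}>1/T)=0$.

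For \eqref{eq:regret_term1} (the optimal-arm term), I would take $s^{\star} = \lceil \tfrac{8eA_1^2}{m_1 \Delta_a^2}(D_1 + 4\sigma_1 \log 2)\rceil$, chosen so that the posterior-concentration level $\delta_{s^{\star}}$ at radius $\Delta_a/(2A_1)$ equals exactly $1/16$. For $s \le s^{\star}$, Lemma~\ref{lemma:anti_conc_exact} gives $\mb{E}[1/p_{1,s}]\le 64\sqrt{\kappa_1 B_1}$, contributing at most $s^{\star}\cdot 64\sqrt{\kappa_1 B_1}$. For $s > s^{\star}$, I would pick $\delta_s$ so that Theorem~\ref{thm:theorem1}'s radius stays pinned at $\Delta_a/(2A_1)$; inverting the radius formula gives $\delta_s = \tfrac{1}{16}\exp(-\lambda(s-s^{\star}))$ with $\lambda = m_1\Delta_a^2/(8e\sigma_1 A_1^2)$. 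Then $p_{1,s}\ge 1-\delta_s$, so $\mb{E}[1/p_{1,s}-1]\le 2\delta_s/(1-\delta_s)\le 4\delta_s$, and the resulting geometric series telescopes to at most $1$.

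The main obstacle is the tail contribution in \eqref{eq:regret_term1}. Unlike the suboptimal-arm case, where a single union bound at $\delta=1/T$ pays for all $T$ rounds, here any constant positive probability of non-optimism for arm $1$ would accumulate linearly in $T$, so we must extract genuine exponential-in-$s$ decay of $1-p_{1,s}$ from the sub-Gaussian tails of Theorem~\ref{thm:theorem1}. This is precisely why the scaled posterior with $\gamma_1 = \nu_1 m_1^2/(8 d_1 L_1^3)$ is needed: this choice calibrates $\sigma_1$ so that the geometric rate $\lambda$ is bounded below, collapsing the tail sum to the universal constant $+1$ in the statement rather than an expression that blows up with $1/\Delta_a$ or $\log T$.
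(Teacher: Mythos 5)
Your decomposition matches the paper's: split each sum at a data-size threshold, bound the head using Lemma~\ref{lemma:anti_conc_exact} (for arm $1$) or trivially by $1$ (for arm $a$), and bound the tail using the concentration of Theorem~\ref{thm:theorem1} at radius $\Delta_a/(2A_a)$. Your treatment of \eqref{eq:regret_term2} is exactly the paper's argument. The gap is in the tail of \eqref{eq:regret_term1}.

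Inverting Theorem~\ref{thm:theorem1} gives $1-p_{1,s}\le\delta_s$ with $\delta_s=\exp\left(-\frac{1}{2\sigma_1}\left(\frac{m_1 s\Delta_a^2}{8eA_1^2}-D_1\right)\right)$, a geometric decay with ratio $e^{-\lambda}$ where $\lambda=\frac{m_1\Delta_a^2}{16e\sigma_1A_1^2}$. The tail sum $\sum_{s>s^{\star}}4\delta_s$ is therefore of order $\delta_{s^{\star}}/\lambda=\Theta\!\left(\sigma_1A_1^2/(m_1\Delta_a^2)\right)$, \emph{not} a universal constant: $\lambda$ can be arbitrarily small when $\Delta_a$ is small, and no choice of $\gamma_1$ prevents this, since $\gamma_1$ enters only through $\sigma_1$ and enlarging $\sigma_1$ only shrinks $\lambda$. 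Your closing claim that the scaling "calibrates $\sigma_1$ so that the geometric rate $\lambda$ is bounded below, collapsing the tail sum to the universal constant $+1$" is therefore false; the actual role of $\gamma_1$ is in Lemma~\ref{lemma:anti_conc_exact}, where it guarantees the posterior has enough variance for $\mb{E}[1/p_{1,s}]$ to be bounded by $64\sqrt{\kappa_1 B_1}$ on the head, not in controlling the tail rate. The paper resolves the issue the only way available: it evaluates the tail sum by an integral, obtaining $\frac{\log 2}{\lambda}+1=\frac{8eA_1^2}{m\Delta_a^2}\cdot 2\sigma_1\log 2+1$, and then absorbs the non-constant part into the head term using $64\sqrt{\kappa_1 B_1}\ge 1$ --- which is exactly why the displayed bound carries $D_1+4\sigma_1\log 2$ inside the ceiling even though the split point $\ell$ is defined with $D_1+2\sigma_1\log 2$. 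With that correction (and fixing the factor-of-two slips in $\lambda$ and in $\delta_{s^{\star}}$, which you report as $1/16$ where the calculation gives $1/4$) your argument recovers the stated bound; as written, the "$+1$" tail bound is a genuine error.
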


\begin{proof}
We begin by showing that \eqref{eq:regret_term1} holds. To do so, we first note that, by definition $p_{1,s}$ satisfies:
\begin{align}
    p_{1,s}&=\mb{P}(r_{1,t}(s)>\bar r_1-\epsilon |\mc{F}_{t-1})\\
    &=1-\mb{P}(r_{1,t}(s)-\bar r_1<-\epsilon |\mc{F}_{t-1})\\
    &\ge 1-\mb{P}(|r_{1,t}(s)-\bar r_1|>\epsilon |\mc{F}_{t-1})\\
    &\ge 1-\mb{P}_{\theta \sim \mu^{(s)}_{1}}\left(\|\theta -\theta^*\|> \frac{\epsilon}{A_1} \right)
    \label{eq:seq}
\end{align}

where the last inequality follows from the fact that $r_{1,t}(s)$ and $\bar r_1$ are  $A_a$-Lipschitz functions of $\theta \sim \mu^{(s)}_1$ and $\theta^*$ respectively.

We then use the fact that  the posterior distribution $\mb{P}_{\theta \sim \mu^{(s)}_{1}}$ satisfies the concentration bound from Theorem~\ref{thm:theorem1}. Therefore, we have that:
\begin{align}
\label{eq:posterior_exp_form}
   \mb{P}_{\theta \sim \mu^{(s)}_{1}}\left(\|\theta -\theta^*\|> \frac{\epsilon}{A_1} \right) \le \exp\left(-\frac{1}{2\sigma_1}\left( \frac{m n \epsilon^2}{2e A^2_1}-D_1\right) \right),
\end{align}
where we use the constant $D_1$ and $\sigma_1$ defined in the proof of Theorem~\ref{thm:theorem1} to simplify notation. We remark that this bound is not useful unless:
\[ n>\frac{2eA_1^2}{\epsilon^2m}D_1.\]
Thus, choosing $\epsilon=(\bar r_1 -\bar r_a)/2=\Delta_a/2$ and $\ell$ as:
\[ \ell=\left\lceil \frac{8eA_1^2}{m\Delta_a^2}(D_1+2\sigma_1 \log{2} ) \right\rceil.\]
we proceed as follows:
\begin{align*}
    \sum_{s=\ell}^{T-1}\mb{E}\left[ \frac{1}{p_{1,s}}-1 \right]
    &\le  \sum_{s=0}^{T-1} \frac{1}{1-\frac{1}{2}\delta(s)}-1\\
    &\le  \int_{s=1}^{\infty} \frac{1}{1-\frac{1}{2}\delta(s)}-1 ds
\end{align*}
where:
\[ \delta(s)= \exp\left(-\frac{1}{2\sigma_1}\left(\frac{m \epsilon^2}{2e A^2_{1}} s\right) \right),  \]
and the first inequality follows from our choice of $\ell$ and the second by upper bounding the sum by an integral. To finish, we write $\delta(s)=exp(-c*s)$, and solve the integral to find that :
\[ \int_{s=1}^{\infty} \frac{1}{1-\frac{1}{2}\delta(s)}-1 ds = \frac{\log{2}- \log{(2 e^{c}-1)}}{c}+1 \le \frac{\log{2}}{c}+1.\] 
plugging in for  $c$ gives:
\begin{align*}
\sum_{s=1}^{T-1}\mb{E}\left[ \frac{1}{p_{1,s}}-1 \right]  &\le   \sum_{s=1}^{\ell-1}\mb{E}\left[ \frac{1}{p_{1,s}}-1 \right]+\frac{8 e A_1^2}{m\Delta_a^2}2\sigma_1\log{2} +1 \\
&\le 64\sqrt{\frac{L_1}{m_1}B_1} \left\lceil \frac{8eA_1^2}{m\Delta_a^2}(D_1+4\sigma_1 \log{2} ) \right\rceil +1\\
\end{align*}

To show that \eqref{eq:regret_term2} holds, we do a similar derivation as in \eqref{eq:seq}:
\begin{align*}
    \sum_{s=1}^T \mb{E}\left[ \mb{I}\left (p_{a,s}>\frac{1}{T}\right )\right] &= \sum_{s=1}^T \mb{E}\left[ \mb{I}\left (\mb{P}(r_{a,t}(s)-\bar r_a>\Delta_a-\epsilon |\mc{F}_{t-1}) >\frac{1}{T}\right )\right] \\
    & = \sum_{s=1}^T \mb{E}\left[ \mb{I}\left (\mb{P}(r_{a,t}(s)-\bar r_a>\frac{\Delta_a}{2} |\mc{F}_{t-1}) >\frac{1}{T}\right )\right] \\
    &\le \sum_{s=1}^T \mb{E}\left[ \mb{I}\left (\mb{P}\left(|r_{a,t}(s)-\bar r_a|>\frac{\Delta_a}{2}\bigg|\mc{F}_{t-1}\right) >\frac{1}{T}\right )\right] \\
     &\le \sum_{s=1}^T \mb{E}\left[ \mb{I}\left (\mb{P}_{\theta \sim \mu^{(s)}_{a}[\gamma_a]}\left(\|\theta-\theta^*\|>\frac{\Delta_a}{2A_a}\right) >\frac{1}{T}\right )\right]. \\
\end{align*}
Using the posterior concentration result from Theorem~\ref{thm:theorem1} we upper bound the number of pulls $\bar n$ of arm $a$ such that for all $n\ge \bar n$:
\[ \mb{P}_{\theta \sim \mu^{(n)}_{a}[\gamma_a]}\left(\|\theta-\theta^*\|>\frac{\Delta_a}{2A_a}\right) \le \frac{1}{T}. \]
Since the posterior for arm $a$ after $n$ pulls of arm $a$ has the same form as in \eqref{eq:posterior_exp_form}, we can choose $\bar n$ as:
\[ \bar n=\frac{8 e A_a^2}{m\Delta_a^2}(D_a+2\sigma_a\log(T)). \]
This completes the proof.

\end{proof}

Given these lemma's the proof of Theorem~\ref{thm:exact_regret} is straightforwards. For clarity, we restate the theorem below:
\begin{theorem2}
    When the likelihood and true reward distributions satisfy Assumptions 1-3 and $\gamma_a=\frac{\nu_am_a^2}{8d_aL_a^3}$ we have that the expected regret after $T>0$ rounds of Thompson sampling with exact sampling satisfies:
    \begin{align*}
        \mb{E}[R(T)]&\le \sum_{a>1} \frac{ C A_a^2}{m_a\Delta_a}\left(\log B_a+d_a^2\kappa_a^3+d_a\kappa_a^3 \log(T)\right)+\sqrt{\kappa_1 B_1}\frac{C A_1^2}{m_1\Delta_a} \left(1+\log B_1+d_1^2\kappa_1^3\right) +\Delta_a
    \end{align*}
    Where $C$ is a universal constant independent of problem-dependent parameters.
\end{theorem2}

\begin{proof}

We invoke Lemmas~\ref{lemma:termI} and \ref{lemma:termII_exact}, to find that:

\begin{align}
   &\mb{E}\left[T_a(T)\right] \le \underbrace{\sum_{s=1}^{T-1}\mb{E}\left[ \frac{1}{p_{1,s}}-1 \right]}_{(I)}+\underbrace{\sum_{s=1}^T \mb{E}\left[ \mb{I}\left (1-p_{a,s}>\frac{1}{T}\right )\right]}_{(II)}
   \label{eq:regretterm1}
\end{align}

Now, invoking Lemma~\ref{eq:true_posterior_final_lemma}, we use the upper bounds for terms $(I)$ and $(II)$ in the regret decomposition and expanding $D_a$ and $D_1$ to give that:

\begin{align*}
    \mb{E}[R(T)]&\le\sum_{a>1} \frac{8 e A_a^2}{m_a\Delta_a}\left(\log B_a+8d_a\kappa_a^3\left(d_a+ 66 \log(T)\right)\right)\\ &\qquad \qquad  +\sqrt{\kappa_1 B_1}\frac{512 e A_a^2}{m_1\Delta_a^2} \left(1+\log B_1+8d_1\kappa_1^3\left(d_1+ 132 \log(2)\right)\right) +\Delta_a\\
    &\le\sum_{a>1} \frac{ C A_a^2}{m_a\Delta_a}\left(\log B_a+d_a^2\kappa_a^3+d_a\kappa_a^3 \log(T)\right)\\ 
    &\qquad \qquad  +\sqrt{\kappa_1 B_1}\frac{C A_1^2}{m_1\Delta_a} \left(1+\log B_1+d_1^2\kappa_1^3\right) +\Delta_a
\end{align*}

\end{proof}

\subsection{Regret of Approximate Sampling}
\label{sec:approx_regret_proof_appendix}
For the proof of Theorem~\ref{thm:approx_regret}, we proceed similarly as for the proof of Theorem~\ref{thm:exact_regret}, but require another intermediate lemma to deal with the fact that the samples from the arms are no longer conditionally independent given the filtration (due to the fact that we use the last sample as the initialization of the filtration). To do so, we first define the event:
\[ Z_{a}(T)=\cap_{t=1}^{T-1} Z_{a,t},\]
where:
\[ Z_{a,t}=\left\{\|\theta_{a,t}-\theta^*_a\| <  \sqrt{\frac{18e}{n m_a}} \left( d_a+\log B_a+2 \left( 16+ \frac{4dL_a^2}{\nu_a m_a}\right) \log1/\delta_1\right)^{\frac{1}{2}} \right \}, \]

\begin{lemma}
Suppose the likelihood and reward distributions satisfy Assumptions 1-4, Then the regret of a Thompson sampling algorithm with approximate sampling can be decomposed as:

\begin{align}
    \mb{E}[R(T)] \le \sum_{a>1} \Delta_a \mb{E}\left[T_a(T) \,\Bigg|\,Z_{a}(T) \cap Z_{1}(T)\right] +2\Delta_a 
    \label{eq:regret_decomp_approx}
\end{align}

\label{lemma:approx_Regret_decomposition}
\end{lemma}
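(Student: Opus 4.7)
The plan is to start from the standard identity $\mb{E}[R(T)]=\sum_{a>1}\Delta_a\,\mb{E}[T_a(T)]$ and then split each expectation $\mb{E}[T_a(T)]$ according to whether the good event $Z_a(T)\cap Z_1(T)$ holds. Concretely, I would write
\[
\mb{E}[T_a(T)]
= \mb{E}\!\left[T_a(T)\mid Z_a(T)\cap Z_1(T)\right]\mb{P}(Z_a(T)\cap Z_1(T))
+ \mb{E}\!\left[T_a(T)\mid (Z_a(T)\cap Z_1(T))^c\right]\mb{P}((Z_a(T)\cap Z_1(T))^c),
\]
upper-bound the first probability by $1$, and use the deterministic bound $T_a(T)\le T$ on the second term. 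It then remains to show that $\mb{P}((Z_a(T)\cap Z_1(T))^c)\le 2/T$ (so that the trailing term contributes at most $2\Delta_a$ to the regret for each suboptimal arm $a$).

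To control $\mb{P}((Z_a(T)\cap Z_1(T))^c)$, I would apply a union bound over $t=1,\dots,T-1$ and over the two arms in question:
\[
\mb{P}\!\left((Z_a(T)\cap Z_1(T))^c\right)
\le \sum_{t=1}^{T-1}\bigl(\mb{P}(Z_{a,t}^c)+\mb{P}(Z_{1,t}^c)\bigr).
\]
For each individual term I would invoke Lemma~\ref{lemma:W_p_delta_ULA} (resp.\ Lemma~\ref{lemma:W_p_delta_SGLD} in the SGLD regime), which gives the desired sub-Gaussian-type tail bound on $\|\theta_{a,t}-\theta_a^\ast\|$ \emph{conditional on} the previous-round sample $Z_{a,t-1}$ being inside its own good set. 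Choosing the free tail parameter in those lemmas so that each $\mb{P}(Z_{a,t}^c\mid Z_{a,t-1})\le 1/T^3$ (that is, taking $\delta_1$ polynomially small in $T$, which only costs logarithmic factors hidden in the exponent of the concentration bound) then makes $\mb{P}(Z_{a,t}^c)\le 1/T^3+\mb{P}(Z_{a,t-1}^c)$, and iterating the recursion gives $\mb{P}(Z_{a,t}^c)\le t/T^3\le 1/T^2$, so the union bound yields $\mb{P}((Z_a(T)\cap Z_1(T))^c)\le 2(T-1)/T^2\le 2/T$, as required.

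The main subtlety, and the step I expect to require the most care, is precisely the propagation of the warm-start dependence through the chain. Lemmas~\ref{lemma:W_p_delta_ULA} and~\ref{lemma:W_p_delta_SGLD} only yield a tail bound \emph{conditional} on the initializer $\theta_{a,t-1}$ already being well-behaved, so the naive union bound across rounds is not quite immediate: one must set up the induction so that $Z_{a,t-1}$ controls the conditional distribution from which $\theta_{a,t}$ is drawn, then peel off $\mb{P}(Z_{a,t-1}^c)$ and recurse. A clean way to do this is to define the stopping time $\tau_a=\min\{t:Z_{a,t}^c\text{ occurs}\}$ and show inductively that $\mb{P}(\tau_a\le t)\le t/T^3$, using the tower property and the conditional tail estimate at each step. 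Once this chained probability bound is in place, combining with $T_a(T)\le T$ and summing over suboptimal arms gives exactly the decomposition claimed in~\eqref{eq:regret_decomp_approx}.
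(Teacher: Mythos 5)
Your proposal is correct and follows essentially the same route as the paper: decompose $\mb{E}[T_a(T)]$ on the good event $Z_{a}(T)\cap Z_{1}(T)$ versus its complement, bound the complement's probability by a union bound over the $T$ rounds and the two arms via the per-round concentration lemmas, use the trivial bound $T_a(T)\le T$ on the bad event, and tune the failure probability (the paper takes $\delta_1 = 1/T^2$) so that the bad-event contribution is at most $2\Delta_a$. Your stopping-time/induction handling of the warm-start dependence is in fact more careful than the paper's own argument, which simply asserts that each $Z_{a,t}^c$ has probability at most $\delta_1$ even though Lemmas~\ref{lemma:W_p_delta_ULA} and~\ref{lemma:W_p_delta_SGLD} only provide that tail bound conditionally on the previous round's sample being well behaved.
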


\begin{proof}

We begin by conditioning on the event $Z_{a}(T) \cap Z_{1}(T)$ for each $a \in \mc{A}$, where we note that by construction $p_Z=\mb{P}((Z_{a}(T)^c  \cup Z_{1}(T)^c )) \le \mb{P}(Z_{1}(T)^c )+\mb{P}(Z_{a}(T)^c )=2T\delta_1)$ (since via Lemma~\ref{lemma:approx_dist}, the probability of each event in $Z_{a}(T)^c$ and $Z_{1}(T)^c$ is less than $\delta_1$).

Therefore, we must have that:
\begin{align*}
\mb{E}[T_a(T)] &\le  \mb{E}\left[T_a(T) \,\Bigg|\, Z_{a}(T) \cap Z_{1}(T)\right] +\mb{E}\left[T_a(T) \,\Bigg|\, (Z_{a}(T)^c \cup Z_{1}(T)^c)\right]p_Z \\
 &\leq \mb{E}\left[T_a(T) \,\Bigg|\,Z_{a}(T) \cap Z_{1}(T)\right] +2T\delta_3 \mb{E}\left[T_a(T) \,\Bigg|\, (Z_{a}(T)^c \cup Z_{1}(T)^c)\right]\\
 & \le \mb{E}\left[T_a(T) \,\Bigg|\,Z_{a}(T) \cap Z_{1}(T)\right] +2\delta_3 T^2,
\end{align*}
where in the first line we use the fact that $1-p_Z\le 1$ and in the last line we used the fact that $T_a(T)$ is trivially less than $T$. Choosing $\delta_1=1/T^2$ completes the proof.
\end{proof}

With this decomposition in hand, we can now proceed as in Lemma~\ref{lemma:anti_conc_exact} to provide anti-concentration guarantees for the approximate posteriors.

\begin{lemma}
\label{lemma:approx_anti_conc}
Suppose the likelihood and true reward distributions satisfy Assumptions 1-4: then if $\gamma_1=\frac{\nu m^2}{32(16L\nu m+4dL^3)}$, for all $n=1,...,T$ all samples from the the (stochastic gradient) ULA method with the hyperparameters and runtime as described in Theorem~\ref{theorem3} satisfy: 

\[ \mb{E}\left[ \frac{1}{p_{1,n}} \right] \le 27\sqrt{B_1}\]
\end{lemma}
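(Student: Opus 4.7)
The plan is to mirror the proof of Lemma~\ref{lemma:anti_conc_exact} while exploiting the key structural advantage of Algorithm~\ref{alg:Approximate_sampling}: its final output is $\theta_{a,t} = \theta_{Nh^{(n)}} + Z$ where $Z \sim \mathcal{N}(0, \frac{1}{nL_a\gamma_a} I)$ is an \emph{exact}, independent Gaussian perturbation. Conditioned on the MCMC iterate $\theta_{Nh^{(n)}}$, the scalar $\alpha_1^\top \theta_{1,t}$ is then exactly a Gaussian with variance $\sigma^2 = A_1^2/(nL_1\gamma_1)$, so I never need to approximate a log-concave marginal by a Gaussian. This is precisely where the $\sqrt{\kappa_1}$ factor is saved relative to the exact case.

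First I would write
\[
p_{1,n} = \mathbb{E}\!\left[\mathbb{P}\bigl(\alpha_1^\top Z \ge \alpha_1^\top(\theta_1^* - \theta_{Nh^{(n)}}) - \epsilon \,\big|\, \theta_{Nh^{(n)}}\bigr) \,\Big|\, \mathcal{F}_{t-1}\right],
\]
and apply the standard Gaussian anti-concentration lower bounds: for $u \ge 0$, $\mathbb{P}(\mathcal{N}(0,\sigma^2) \ge u) \ge \tfrac{\sigma u}{u^2+\sigma^2} \tfrac{1}{\sqrt{2\pi}} e^{-u^2/(2\sigma^2)}$, while $\mathbb{P}(\mathcal{N}(0,\sigma^2) \ge u) \ge 1/3$ for $u \le 0$. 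Taking $u = \alpha_1^\top(\theta_1^* - \theta_{Nh^{(n)}}) - \epsilon$ and using $|u| \le A_1\|\theta_{Nh^{(n)}} - \theta_1^*\| + \epsilon$ reduces bounding $\mathbb{E}[1/p_{1,n}]$ to bounding an exponential moment of the form $\mathbb{E}\!\left[\exp\!\bigl(C A_1^2 \|\theta_{Nh^{(n)}} - \theta_1^*\|^2/\sigma^2\bigr)\right]$, up to a polynomial prefactor in $\|\theta_{Nh^{(n)}} - \theta_1^*\|/\sigma$.

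To control this moment, I would follow the split used in Lemma~\ref{lemma:anti_conc_exact}: introduce the mode $\theta_u$ of the true posterior $\mu_1^{(n)}$, write $\|\theta_{Nh^{(n)}} - \theta_1^*\| \le \|\theta_u - \theta_1^*\| + \|\theta_{Nh^{(n)}} - \theta_u\|$, and bound the two pieces separately. The first piece is handled exactly as in the exact proof, yielding $\|\theta_u - \theta_1^*\|^2 \le \frac{1}{m_1^2}\|\nabla F_{1,n}(\theta_1^*)\|^2 + \frac{2\log B_1}{m_1 n}$ from first-order optimality and strong log-concavity. The second piece is handled by combining the Wasserstein-$p$ contraction of Theorem~\ref{theorem_ULA} (or Theorem~\ref{theorem_SG}) with the posterior concentration of Theorem~\ref{thm:theorem1}, which together give $\mathbb{E}[\|\theta_{Nh^{(n)}} - \theta_u\|^p]^{1/p} \le C\sqrt{(d_1 + \log B_1 + \sigma_1 p)/(m_1 n)}$ for every even $p$, implying sub-Gaussian tails on $\|\theta_{Nh^{(n)}} - \theta_u\|$.

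Finally I would plug these into the exponential-moment expression, apply Cauchy--Schwarz to split the two sources of noise, and use the sub-Gaussianity of $\|\nabla F_{1,n}(\theta_1^*)\|$ from Proposition~\ref{prop:grad_conc} to control the MGF contribution of the gradient-noise piece. The choice $\gamma_1 = \frac{\nu_1 m_1^2}{32(16 L_1 \nu_1 m_1 + 4 d_1 L_1^3)}$ is engineered precisely so that the exponential-moment computation for both the gradient-noise and the Wasserstein-tail terms stays strictly inside the corresponding MGF domains, producing $\mathbb{E}[1/p_{1,n}] \le 27\sqrt{B_1}$ once constants are tracked. The main obstacle will be handling that $\theta_{Nh^{(n)}}$ is only close to $\mu_1^{(n)}$ in Wasserstein-$p$ and is not independent of $\mathcal{F}_{t-1}$ (because of the warm-start from $\theta_{a,t-1}$); the proof must verify that the uniform-in-$p$ Wasserstein guarantees of Theorem~\ref{theorem_ULA}/\ref{theorem_SG} are strong enough to translate into a usable exponential moment and that the warm-starting does not spoil the sub-Gaussian tails extracted in the previous step.
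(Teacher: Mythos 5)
Your proposal is correct and follows essentially the same route as the paper's proof: both exploit the exact Gaussian perturbation in the final step of Algorithm~\ref{alg:Approximate_sampling} to get exact Gaussian anti-concentration (this is where the $\sqrt{\kappa_1}$ is saved), reduce the problem to an exponential moment of $\|\theta_{1,Nh^{(n)}}-\theta_1^*\|^2$, and choose $\gamma_1$ small enough that this moment generating function stays finite. The only real difference is in how that moment is controlled: you detour through the posterior mode $\theta_u$ and re-invoke the gradient concentration of Proposition~\ref{prop:grad_conc} (mirroring Lemma~\ref{lemma:anti_conc_exact}), whereas the paper goes directly to $\theta_1^*$ using the $p$-th moment bounds $\mathbb{E}\bigl[\|\theta_{1,Nh}-\theta_1^*\|^{2p}\bigr]\le 3\bigl(\tfrac{2D}{mn}+\tfrac{4\sigma p}{mn}\bigr)^{p}$ already furnished by Theorems~\ref{theorem_ULA} and~\ref{theorem_SG} together with the posterior concentration, and then expands the MGF as a power series to establish sub-exponentiality --- both detours land on the same estimate, and your flagged concern about the warm-start dependence is handled in the paper not here but in Lemmas~\ref{lemma:W_p_delta_ULA}, \ref{lemma:W_p_delta_SGLD} and~\ref{lemma:approx_Regret_decomposition}.
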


\begin{proof}
We begin by using the last step of our Langevin Dynamics and show that it exhibits the desired anti-concentration properties. In particular, we know that $\theta_{1,t} \sim \mc{N}(\theta_{1,Nh},\frac{1}{\gamma}I)$, such that:

\begin{align*}
p_{1,s}&=Pr\left(\alpha^T\left(\theta-\theta_{1,Nh}  \right)\ge \alpha^T\left(\theta^*-\theta_{1,Nh}  \right) -\epsilon\right)\\
&\ge Pr\left  (Z\ge \underbrace{A\|\theta_{1,Nh}-\theta_*\|}_{:=t}\right) 
\end{align*}
where $Z \sim \mc{N}(0,\frac{A^2}{nL\gamma}I)$ by construction.

Now using a lower bound on the cumulative density function of a Gaussian random variable, we find that, for $\sigma^2=\frac{A^2}{nL\gamma}$:

\[  p_{1,s}\ge \sqrt{\frac{1}{2 \pi}}  \begin{cases} 
\frac{\sigma t}{t^2+\sigma^2} e^{-\frac{t^2}{2\sigma^2}}\ \ \ \ &:  t>\frac{A}{\sqrt{nL\gamma}} \\ 0.34  \ \ \ \ &:  t\le \frac{A}{\sqrt{nL\gamma}} \end{cases}\]

Thus we have that:

\begin{align*}
    \frac{1}{p_{1,s}}
&\le \sqrt{2 \pi} \begin{cases} 
\left(\frac{t}{ \sigma}+1\right) e^{\frac{t^2}{2\sigma^2}}\ \ \ \ &:  t> \frac{A}{\sqrt{nL\gamma}} \\ 3  \ \ \ \ &:  t\le \frac{A}{\sqrt{nL\gamma}} \end{cases}
\end{align*}

Taking the expectation of both sides with respect to the samples $X_1,...,X_n$, we find that:

\begin{align*}
    \mb{E}\left[\frac{1}{p_{1,s}}\right]&\le 3\sqrt{2\pi}+\sqrt{2\pi}\mb{E}\left[ \left(\sqrt{nL\gamma}\|\theta_{1,Nh}-\theta_*\|+1\right) e^{nL\gamma \|\theta_{1,Nh}-\theta_*\|^2} \right]\\
    &\le 3\sqrt{2\pi}+\sqrt{2\pi nL\gamma}\sqrt{\mb{E}\left[\|\theta_{1,Nh}-\theta_*\|^2\right]}\sqrt{\mb{E}\left[e^{nL\gamma \|\theta_{1,Nh}-\theta_*\|^2}\right]}+\sqrt{2\pi}\mb{E}\left[e^{\frac{nL\gamma}{2} \|\theta_{1,Nh}-\theta_*\|^2} \right] 
\end{align*}

Now, we remark that, from Theorems~\ref{theorem_ULA} and~\ref{theorem_SG}, we have that for both approximate sampling schemes:

\[\mb{E}\left[\|\theta_{1,Nh}-\theta^*\|^2 \right]\le \frac{18}{mn}\left(d+\log B+32+ \frac{8dL^2}{\nu m} \right) \]

Further, we note that $\|\theta_{1,Nh}-\theta^*\|^2$ is a sub-exponential random variable. To see this, we analyze its moment generating function:

\begin{align*}
    \mb{E}[e^{nL\gamma\|\theta_{1,Nh}-\theta^*\|^2}]&= 1+\sum_{i=1}^\infty  \mb{E}\left[\frac{(nL\gamma)^i\|\theta_{1,Nh}-\theta_*\|^{2i}}{i!}\right]\\
\end{align*}

Borrowing the notation from the proof of Theorem~\ref{thm:theorem1}, we know that

\[ \mb{E}\left[\|\theta_{1,Nh}-\theta_*\|^{2p}\right]\le 3\left(\frac{2D}{mn}+\frac{4\sigma p}{mn} \right)^{p}\]

where:
\[ D=d+\log B \ \ \ \ \text{and} \ \  \ \ \ \sigma=16+\frac{4dL^2}{\nu m}\]
Plugging this in above gives:

\begin{align*}
    \mb{E}[e^{\gamma\|\theta_{1,Nh}-\theta_*\|^2}]&\le  1+3\sum_{i=1}^\infty  \frac{\left(\frac{2nL\gamma D+4 nL\gamma\sigma i}{mn} \right)^i}{i!}\\
    &\le 1+\frac{3}{2}\sum_{i=1}^\infty\frac{1}{{i!}}\left(\frac{4nL\gamma D}{mn} \right)^i +\frac{3}{2}\sum_{i=1}^\infty \frac{1}{i!}\left(\frac{8nL\gamma \sigma i}{nm}\right)^i \\
    &\le \frac{3}{2} e^{\frac{4nL\gamma D}{mn}}+\frac{3}{2}\sum_{i=1}^\infty \left( \frac{8nL\gamma e\sigma }{nm}\right)^i
\end{align*}
where, we have use the identities $(x+y)^i\le 2^{i-1}(x^i+y^i)$ for $i\ge 1$, and $i!\ge (i/e)^i$ to simplify the bound.

If $\gamma\le \frac{m}{32 L\sigma}$, then we have that:

\[ \mb{E}[e^{nL\gamma\|\theta_{1,Nh}-\theta_*\|^2}]\le \frac{3}{2}\left(e^{\frac{4nL\gamma D}{m}} +2.5\right) \]

which, together with the upper bound on $\gamma$ gives:

\begin{align*}
    \mb{E}\left[\frac{1}{p_{1,s}}\right]
    &\le 3\sqrt{2\pi}+\frac{3}{2}\sqrt{\frac{16\pi nL\gamma}{m} \left(D +2\sigma \right)}\left(e^{\frac{2nL\gamma D}{m}}+2\right)+\frac{3}{2}\sqrt{2\pi}\left(e^{\frac{4nL\gamma D}{m}}+7.5\right)\\
    &\le 3\sqrt{2\pi}+\frac{3}{2}\left(\sqrt{\frac{\pi(d+\log B)}{2\sigma}}+\sqrt{\pi}\right)\left(e^{\frac{d+\log B}{16 \sigma }}+2\right)+\frac{3}{2}\sqrt{2\pi}\left(e^{\frac{d+\log B}{8 \sigma}}+2.5\right)
\end{align*}

where we used the sub-additivity of $\sqrt{x}$, the fact that $\sqrt{\frac{3}{2}}<\frac{3}{2}$, $sqrt{2.5}<2$ and substituted in the values for $\sigma$ and $D$ to simplify the boung. Finally since$\frac{L^2}{m\nu}>1$, we find that $\sigma>max(4d,1)$, allowing us to simplify the bound further to:

\begin{align*}
        \mb{E}\left[\frac{1}{p_{1,s}}\right]&\le 3\sqrt{2\pi} + \frac{3}{2}\sqrt{\frac{\pi}{8}+\frac{\log B}{2}} \left(2B^{1/    16}+2\right)+\frac{3}{2}\sqrt{2\pi}\left(2B^{1/8}+2.5\right)\\
    &\le 18+ \frac{3}{\sqrt{2}}\underbrace{\left(B^{1/16}+B^{1/16}\sqrt{\log B}+\log B+2 B^{1/8}\right)}_{I}\\
    &\le 18+12/\sqrt{2}\sqrt{B} \le 27\sqrt{B}
\end{align*}

where to simplify the bound we used the fact that $\sqrt{\pi}<2$ and $I\le 4\sqrt{B}$ and that $18+12/\sqrt{2}x\le 27x$ for $x\ge 1$.

\end{proof}

With this lemma in hand, we can now proceed as in Lemma~\ref{eq:true_posterior_final_lemma} to finalize the proof of Theorem~\ref{thm:approx_regret}.

\begin{lemma}
\label{lemma:approx_posterior_final_lemma}
     Suppose the likelihood, true reward distributions, and priors satisfy Assumptions 1-4, the samples are generated from the sampling schemes described in Theorem~\ref{theorem_SG} and Theorem~\ref{theorem_ULA}, and $\gamma_a=\frac{m_a}{32L_a \sigma_a}$ then:
     \begin{align}
     \sum_{s=1}^{T-1}\mb{E}\left[ \frac{1}{\widehat p_{1,s}}-1 \bigg| Z_{1}(T)\right] \le  27\sqrt{B_1}\left\lceil \frac{144eA_1^2}{m\Delta_a^2}( d_1+\log B_1+4\sigma_1 \log T+12d_1\sigma_1\log2 ) \right\rceil +1
     \label{eq:approx_regret_term1}
     \end{align}
     \begin{align}
     \sum_{s=1}^T \mb{E}\left[ \mb{I}\left (\widehat p_{a,s}>\frac{1}{T}\right )\bigg| Z_{a}(T)\right]  \le \frac{144 e A_a^2}{m\Delta_a^2}(d_a+\log B_a+10d_a\sigma_a\log(T)),
     \label{eq:approx_regret_term2}
     \end{align}
where $\widehat p_{a,s}$ is the distribution of a sample from the approximate posterior $\widehat \mu_a$ after $s$ samples have been collected, and for $a \in \mc{A}$,  $\sigma_a$ is given by:
\[ \sigma_a=16+\frac{4d_a L_a^2}{m_a \nu_a}. \]
\end{lemma}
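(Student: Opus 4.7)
The approach is to mirror the proof of Lemma~\ref{eq:true_posterior_final_lemma} step-for-step, with two substitutions. First, wherever the exact proof invokes the anti-concentration bound Lemma~\ref{lemma:anti_conc_exact} (with constant $64\sqrt{\kappa_1 B_1}$), I will instead apply Lemma~\ref{lemma:approx_anti_conc}, which yields the tighter constant $27\sqrt{B_1}$. Second, wherever the exact proof invokes the posterior concentration Theorem~\ref{thm:theorem1}, I will use Lemma~\ref{lemma:W_p_delta_ULA} (or Lemma~\ref{lemma:W_p_delta_SGLD} for SGLD) to obtain a conditional concentration bound for the approximate samples. The conditioning on $Z_1(T)$ and $Z_a(T)$ is precisely what activates the hypothesis of those lemmas -- that the previous warm-start iterate $\theta_{a,t-1}$ lies inside the radius $C(n)$ -- so all concentration statements below are taken conditionally on these events.

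For (\ref{eq:approx_regret_term1}), I will first convert the Wasserstein bound in Lemma~\ref{lemma:W_p_delta_ULA} into a subexponential tail for $\|\theta_{1,s}-\theta_1^*\|$ via the same Markov-type argument used at the end of Theorem~\ref{thm:theorem1}, with the base constant $36e/(m_1 n)$ now in place of $2e/(m_1 n)$. Then, choosing $\epsilon = \Delta_a/2$ and splitting the sum at the threshold
\[
\ell := \Bigl\lceil \tfrac{144eA_1^2}{m_1\Delta_a^2}\bigl(d_1 + \log B_1 + 4\sigma_1 \log T + 12 d_1 \sigma_1 \log 2\bigr) \Bigr\rceil,
\]
I will bound each of the first $\ell$ summands by $27\sqrt{B_1}$ via Lemma~\ref{lemma:approx_anti_conc}, and bound the tail via the same integral/geometric-series estimate as in Lemma~\ref{eq:true_posterior_final_lemma}, which contributes a trailing $+1$.

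For (\ref{eq:approx_regret_term2}), I will directly invoke the conditional concentration bound with $\delta_2 = 1/T$ to find the smallest $\bar n$ for which
\[
\mathbb{P}_{\theta \sim \bar\mu_a^{(s)}[\gamma_a]}\bigl(\|\theta - \theta_a^*\| > \tfrac{\Delta_a}{2A_a}\,\big|\,Z_a(T)\bigr) \le \tfrac{1}{T}
\]
holds for every $s \ge \bar n$. With the specified scaling $\gamma_a = m_a/(32 L_a \sigma_a)$, the additional term $\frac{m_a d_a}{18 L_a \gamma_a}$ appearing in Lemma~\ref{lemma:W_p_delta_ULA} collapses to $\tfrac{16}{9} d_a \sigma_a$, which after numerical bookkeeping produces the $10 d_a \sigma_a \log T$ coefficient in the final bound. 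Beyond $\bar n$ pulls the indicator vanishes deterministically, so the sum is at most $\bar n$.

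The principal technical challenge is cleanly handling the two nested error parameters $\delta_1$ (probability of the warm-start event $Z_{n-1}$) and $\delta_2$ (concentration given $Z_{n-1}$) that appear in Lemmas~\ref{lemma:W_p_delta_ULA}--\ref{lemma:W_p_delta_SGLD}. The key observation making this manageable is that the outer probability $\delta_1$ has already been absorbed into the regret decomposition of Lemma~\ref{lemma:approx_Regret_decomposition} (via the choice $\delta_1 = 1/T^2$), so only $\delta_2$ needs to be tuned inside this lemma. Once that is noted, the rest of the argument is a replay of the exact-case calculations with the inflated constants $36e$ in place of $2e$ and the extra $O(d_a \sigma_a)$ term coming from the Gaussian noise added in the last step of Algorithm~\ref{alg:Approximate_sampling}.
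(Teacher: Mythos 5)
Your proposal matches the paper's proof essentially step for step: the same substitution of Lemma~\ref{lemma:approx_anti_conc} for the exact anti-concentration bound, the same use of the conditional concentration from Lemmas~\ref{lemma:W_p_delta_ULA}--\ref{lemma:W_p_delta_SGLD} activated by conditioning on $Z_1(T)$ and $Z_a(T)$, the same split at the threshold $\ell$ with the tail handled by the integral estimate, and the same bookkeeping that turns $\frac{m_a d_a}{18 L_a \gamma_a}$ into $\frac{16}{9}d_a\sigma_a$ and absorbs $\delta_1=1/T^2$ into the regret decomposition. This is correct and is the paper's argument.
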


\begin{proof}
We begin by showing that \eqref{eq:approx_regret_term1} holds. To do so, we proceed identically as in the proof of Lemma~\ref{eq:true_posterior_final_lemma} to  note that, by definition $\widehat p_{1,s}$ satisfies:
\begin{align}
    \widehat p_{1,s}&=\mb{P}(r_{1,t}(s)>\bar r_1-\epsilon |\mc{F}_{t-1})\\
    &=1-\mb{P}(r_{1,t}(s)-\bar r_1<-\epsilon |\mc{F}_{t-1})\\
    &\ge 1-\mb{P}(|r_{1,t}(s)-\bar r_1|>\epsilon |\mc{F}_{t-1})\\
    &\ge 1-\mb{P}_{\theta \sim \widehat \mu^{(s)}_{1}}\left(\|\theta -\theta^*\|> \frac{\epsilon}{A_1} \right),
    \label{eq:seq1}
\end{align}
where the last inequality follows from the fact that $r_{1,t}(s)$ and $\bar r_1$ are  $A_a$-Lipschitz functions of $\theta \sim \mu^{(s)}_1$ and $\theta^*$ respectively.

We then use the fact that conditioned on $ Z_{1}(T)$, the approximate posterior distribution $\mb{P}_{\theta \sim \widehat \mu^{(s)}_{1}}$ satisfies the identical concentration bounds from Lemmas~\ref{lemma:W_p_delta_SGLD} and Lemma~\ref{lemma:W_p_delta_ULA}. Substituting in the assumed value of $\gamma_1$, and simplifying, we have that the distribution of the samples conditioned on $Z_1(T)$ satisfy:

\[ \mathbb{P}_{\theta_{1,t} \sim \bar \mu^{(s)}_{1} [\gamma_1]} \left(\|\theta_{1,t}-\theta_1^\ast\|_2 > \sqrt{\frac{36e}{m_1 n} \left( d_1+\log B_1 +4\sigma_1\log T+6d_1\sigma_1\log{1/\delta_2} \right)} \bigg| Z_{n-1} \right)<\delta_2. , \]

Equivalently, we have that:
\begin{align}
\label{eq:conditional_posterior_exp_form}
   \mb{P}_{\theta \sim \bar \mu^{(s)}_{1}}[\gamma_1]\left(\|\theta -\theta^*\|> \frac{\epsilon}{A_1} \right) \le exp\left(-\frac{1}{6d_1\sigma_1}\left( \frac{m_1 n \epsilon^2}{36e A^2_{1}}-\bar D_1\right) \right),
\end{align}
where we define $\bar D_1= d_1+\log B_1+4\sigma\log T $, to simplify notation. We remark that this bound is not useful unless:
\[ n>\frac{16eA_1^2}{\epsilon^2m_1} \bar D_1.\]
Thus, choosing $\epsilon=(\bar r_1 -\bar r_a)/2=\Delta_a/2$, we can choose $\ell$ as:
\[ \ell=\left\lceil \frac{144 eA_1^2}{m\Delta_a^2}(\bar D_1+6d_1\sigma_1 \log{2} ) \right\rceil.\]
With this choice of $\ell$, we proceed exactly as in the proof of Lemma~\ref{eq:true_posterior_final_lemma} to find that :
\begin{align*}
     \sum_{s=1}^{T-1}\mb{E}\left[ \frac{1}{\widehat p_{1,s}}-1 \bigg| Z_{1}(T)\right] &\le 27\sqrt{B_1}\ell+\sum_{s=\ell}^{T-1}\mb{E}\left[ \frac{1}{p_{1,s}}-1 \bigg| Z_{1}(T)\right] \\
     &\le  27\sqrt{B_1}\left\lceil \frac{144eA_1^2}{m\Delta_a^2}(\bar D_1+12d_1\sigma_1\log{2} ) \right\rceil +1,
\end{align*}

where we used the upper bound from Lemma~\ref{lemma:approx_anti_conc} to bound the first $\ell$ terms in the first inequality.

To show that \eqref{eq:approx_regret_term2} holds, we use a similar derivation as in \eqref{eq:seq1}:
\begin{align*}
    &\sum_{s=1}^T \mb{E}\left[ \mb{I}\left (p_{a,s}>\frac{1}{T}\right )\bigg| Z_{a}(T) \right] \le \sum_{s=1}^T \mb{E}\left[ \mb{I}\left (\mb{P}_{\theta \sim \bar \mu^{(s)}_{a}[\gamma_a]}\left(\|\theta-\theta^*\|>\frac{\Delta_a}{2A_a}\right) >\frac{1}{T}\right )\bigg| Z_{a}(T)\right]
\end{align*}
Since on the event $Z_{a}(T)$, the posterior concentration result from Lemmas~\ref{lemma:W_p_delta_SGLD} and Lemma~\ref{lemma:W_p_delta_ULA} holds, it remains to upper bound the number of pulls $\bar n$ of arm $a$ such that for all $n\ge \bar n$:
\[ \mb{P}_{\theta \sim \bar \mu^{(n)}_{a}[\gamma_a]}\left(\|\theta-\theta^*\|>\frac{\Delta_a}{2A_a}\right) \le \frac{1}{T}. \]

Since the posterior for arm $a$ after $n$ pulls of arm $a$ has the same form as in \eqref{eq:posterior_exp_form}, we can choose $\bar n$ as:
\[ \bar n=\frac{144 e A_a^2}{m\Delta_a^2}(\bar D_a+6d_a\sigma_a\log(T)). \]
Using the fact that $d_a>\ge1$ to simplify the bound completes the proof.
\end{proof}

Putting the results of Lemmas~\ref{lemma:approx_Regret_decomposition} and ~\ref{lemma:approx_posterior_final_lemma} together gives us our final theorem:

\begin{theorem2}[Regret of Thompson sampling with (stochastic gradient) Langevin algorithm]
\label{thm:approx_regret_appendix}
When the likelihood and true reward distributions satisfy Assumptions 1-4: we have that the expected regret after $T>0$ rounds of Thompson sampling with the (stochastic gradient) ULA method with the hyper-parameters and runtime as described in Lemmas~\ref{lemma:W_p_delta_ULA} (and~\ref{lemma:W_p_delta_SGLD} respectively), and $\gamma_a=\frac{\nu_a m_a^2}{32(16L_a\nu_a m_a+4d_aL_a^3)}=O\left(\frac{1}{d_a \kappa_a^3}\right)$ satisfies:
\begin{align*}
        \mb{E}[R(T)]\le& \, \sum_{a>1} \frac{C A_a^2}{ m_a\Delta_a}\left( d_a+ {\log B_a}+d_a^2\kappa_a^2 \log T \right)\\ &\qquad \qquad  +\frac{C \sqrt{B_1} A_1^2}{m_1\Delta_a}\left( 1+\log B_1+d_1\kappa_1^2\log{T}+d_1^2\kappa_1^2 \right) +3\Delta_a.
\end{align*}    
    where $C$ is a universal constant that is independent of problem dependent parameters and $\kappa_a=L_a/m_a$.
\end{theorem2}
\begin{proof}

To begin, we invoke Lemma~\ref{lemma:approx_Regret_decomposition}, which shows that we only need to bound the number of times a suboptimal arm $a \in \mc{A}$ is chosen on the `nice' event $ Z_{1}(T) \cap Z_{a}(T)$ where the gradient of the log likelihood has concentrated and the approximate samples have been in high probability regions of the posteriors. We then invoke Lemmas~\ref{lemma:termI} and \ref{lemma:termII_exact}, to find that:

\begin{align}
   &\mb{E}\left[T_a(T) \,\Bigg|\, Z_{1}(T) \cap Z_{a}(T)\right] \le 1+\ell \\
   &\qquad \qquad +\underbrace{\sum_{s=\ell}^{T-1}\mb{E}\left[ \frac{1}{p_{1,s}}-1 \bigg|  Z_{1}(T)\right]}_{(I)}+\underbrace{\sum_{s=1}^T \mb{E}\left[ \mb{I}\left (1-p_{a,s}>\frac{1}{T}\right )\bigg|  Z_{a}(T)\right]}_{(II)}
   \label{eq:approx_regretterm1}
\end{align}

Now, invoking Lemma~\ref{eq:true_posterior_final_lemma}, we use the upper bounds for terms $(I)$ and $(II)$ in the regret decomposition, use our choice of both $\delta_1$ and $\delta_3=1/T^2$, expanding $D_a$ and $D_1$, and use the fact that $\lceil x \rceil \le x+1$ to give that:

\begin{align*}
    \mb{E}[R(T)]&\le\sum_{a>1} \frac{144 e A_a^2}{m_a\Delta_a}\left( d_a+\log B_a+10d_a\left(16+\frac{4d_aL_a^2}{\nu_a m_a}\right)\log(T) \right)\\ &\qquad \qquad  +27\sqrt{B_1}\frac{144eA_1^2}{m_1\Delta_a}\left( 1+d_1+\log B_1+4\left(16+\frac{4d_1L_a^2}{\nu_1 m_1}\right)\left(\log{T}+3d_1\log2\right) \right) +3\Delta_a.\\
    &\le\sum_{a>1} \frac{C A_a^2}{ m_a\Delta_a}\left( d_a+ {\log B_a}+d_a^2\kappa_a^2 \log T \right)\\ &\qquad \qquad  +\frac{C \sqrt{B_1} A_1^2}{m_1\Delta_a}\left( 1+\log B_1+d_1\kappa_1^2\log{T}+d_1^2\kappa_1^2 \right) +3\Delta_a.
\end{align*}

 Using the fact that $\kappa_a\ge 1$ and that $d_1\ge 1$ allows us to simplify to get our desired result.

\end{proof} %

\section{Details in the Numerical Experiments}
\label{sec:numex}

We benchmark the effectiveness of approximate Thompson sampling against both UCB and exact Thompson sampling across three different Gaussian multi-armed bandit instances with $10$ arms. We remark that the use of Gaussian bandit instances is due to the fact that the closed form for the posteriors allows for us to properly benchmark against exact Thompson sampling and UCB, though our theory applies to a broader family of prior/likelihood pairs.

In all three instances we keep the reward distributions for each arm fixed such that their means are evenly spaced from $0$ to $10$ ($\bar r_1=1$, $\bar r_2=2$, and so on), and their variances are all $1$. In each instance we use different priors over the means of the arms to analyze whether the approximate Thompson sampling algorithms preserve the performance of exact Thompson sampling.
In the first instance, the priors reflect the correct orderings of the means. We use Gaussian priors with variance 4, and means evenly spaced between $5$ and $10$ such that $\mb{E}_{\pi_1}[X]=5$, and $\mb{E}_{\pi_{10}}[X]=10$. In the second instance, the prior for each arm is a Gaussian with mean $7.5$ and variance $4$. Finally, the third instance is `adversarial' in the sense that the priors reflects the complete opposite ordering of the means. In particular, the priors are still Gaussians such that their means are evenly spaced between $5$ and $10$ with variance $4$, but this time $\mb{E}_{\pi_1}[X]=10$, and $\mb{E}_{\pi_{10}}[X]=5$.

As suggested in our theoretical analysis in Section~\ref{sec:approx}, we use a constant number of steps for both ULA and SGLD to generate samples from the approximate posteriors. In particular, for ULA, we take $N=100$ and  double that number for SGLD $N=200$. We also choose the stepsize for both algorithms to be $\frac{1}{32 T_a(t)}$. For SGLD, we use a batch size of $\min(T_a(t),32)$. Further, since $d_a=\kappa_a=1$ since this is a Gaussian family, we take the scaling to be $\gamma_a=1$. 
The regret is calculated as $\sum_{t=1}^T \bar r_{10}-\bar r_{A_t}$ for the three algorithms and is averaged across 100 runs. Finally, for the implementation of UCB, we used the time-horizon tuned UCB  \citep{TorCsabaBandits} and the known variance, $\sigma^2$ of the arms in the upper confidence bounds (to maintain a level playing field between algorithms):
\[ UCB_a(t)=\frac{1}{T_a(t)}\sum_{i=1}^{t-1} X_{A_i}\ind{A_i=a} + \sqrt{\frac{4\sigma^2\log{2T}}{T_a(t)}}.
\]


%
%
%
%
%
%
%
%
%
%
%
%
%
%
%
%
%
%
%
%
%
%


%
%

\end{document}